\theoremstyle{plain}
\newtheorem{definition}{Definition}
\newtheorem{example}{Example}
\newtheorem{proposition}{Proposition}
\newtheorem{corollary}{Corollary}
\newtheorem{theorem}{Theorem}
\newcommand\pred[1]{\texttt{\selectfont{#1}}}
\def\gets{:=}
\newcommand{\timeout}{{\it Timeout}}
\newcommand{\idest}{{i.e.}}
\newcommand{\exemp}{{e.g.}}
\newcommand{\etal}{{et al.}}
\begin{document}

\title{Landmark-Based Approaches for \\ Goal Recognition as Planning}

\author{\name Ramon Fraga Pereira \email ramon.pereira@acad.pucrs.br \\
       \addr Pontifical Catholic University of Rio Grande do Sul \\
       School of Technology \\
       Porto Alegre, RS, Brazil
       \AND
       \name Nir Oren \email n.oren@abdn.ac.uk \\
       \addr University of Aberdeen\\
       Department of Computing Science\\
       Aberdeen, Scotland
       \AND
       \name Felipe Meneguzzi \email felipe.meneguzzi@pucrs.br \\
       \addr Pontifical Catholic University of Rio Grande do Sul \\
       School of Technology\\
       Porto Alegre, RS, Brazil}


\maketitle

\begin{abstract}

The task of recognizing goals and plans from missing and full observations can be done efficiently by using automated planning techniques. 
In many applications, it is important to recognize goals and plans not only accurately, but also quickly. 
To address this challenge, we develop novel goal recognition approaches based on planning techniques that rely on planning landmarks. 
In automated planning, landmarks are properties (or actions) that cannot be avoided to achieve a goal.
We show the applicability of a number of planning techniques with an emphasis on landmarks for goal and plan recognition tasks in two settings: (1) we use the concept of landmarks to develop goal recognition heuristics; and (2) we develop a landmark-based filtering method to refine existing planning-based goal and plan recognition approaches.
These recognition approaches are empirically evaluated in experiments over several classical planning domains. 
We show that our goal recognition approaches yield not only accuracy comparable to (and often higher than) other state-of-the-art techniques, but also substantially faster recognition time over such techniques. 

\end{abstract}

\section{Introduction}

As more computer systems require reasoning about what agents (both human and artificial) other than themselves are doing, the ability to accurately and efficiently recognize goals and plans from agent behavior becomes increasingly important. Goal and plan recognition is the task of recognizing goals and plans based on often incomplete observations that include actions executed by agents and properties of agent behavior in an environment~\cite{ActivityIntentPlanRecogition_Book2014}. 
Most goal and plan recognition approaches~\cite{Geib_ProbabilisticPlanRecognition_MOO2005,AvrahamiZilberbrandK_IJCAI2005,PRGeib_AI_2009,PR_Mirsky_2016,Mirsky2017plan} employ plan libraries to represent agent behavior, \idest, a plan library with plans for achieving goals, 
resulting in approaches to recognize plans that are analogous to parsing. 
Recent work~\cite{RamirezG_IJCAI2009,RamirezG_AAAI2010,PattisonGoalRecognition_2010,GoalRecognitionDesign_Keren2014,NASA_GoalRecognition_IJCAI2015,Sohrabi_IJCAI2016,PereiraMeneguzzi_ECAI2016,RamonNirMeneguzzi_AAAI2017} use a planning domain definition (a domain theory) to represent potential agent behavior, bringing goal and plan recognition closer to planning algorithms. 
These approaches allow techniques used in planning algorithms to be employed for recognizing goals and plans requiring less domain information. 
Recognizing goals and plans are important in applications for monitoring and anticipating agent behavior in an environment, including crime detection and prevention~\cite{GeibPlanRecognitionIntrusionDect_DARPA2001}, monitoring activities in elderly-care~\cite{ProblemsWithElderCare_AAAI2002}, recognizing plans in educational environments~\cite{UzanDSG_PR_2015} and exploratory domains~\cite{PR_EXP_Mirsky2017}, and traffic monitoring~\cite{WellmanTraffic_2013}, among others~\cite{GeibPlanRecognitionIntrusionDect_DARPA2001,Granada2017_PAIR,Mirsky_UISP17}.

We develop recognition approaches that are based on planning techniques (without pre-defined static plan libraries) that rely on planning landmarks~\cite{Hoffmann2004_OrderedLandmarks}, namely, landmark-based approaches for goal recognition.
In automated planning, landmarks are properties (or actions) that every plan must satisfy (or execute) at some point in every plan execution to achieve a goal. 
Whereas in planning algorithms landmarks are used to focus search, in this work, landmarks allow our recognition approaches to reason about what cannot be avoided for achieving goals, substantially speeding up recognition time. 
Thus, we provide novel contributions to efficiently solve goal recognition problems, as follows.
First, we provide two contributions for goal recognition techniques.  
We develop two novel recognition heuristics that rely on landmarks and obviate the need to execute a planner multiple times yielding substantial runtime gains. 
Our initial heuristic estimates goal completion by considering the ratio between achieved and extracted landmarks of a candidate goal. 
We expand this heuristic to use a \textit{landmark uniqueness value}, representing the information value of the landmark for some specific candidate goal when compared to landmarks for all candidate goals. 
Second, we also develop a filtering method that rules out candidate goals by estimating how many landmarks required by every goal in the set of candidate goals have been reached within a sequence of observations. 
This filtering method can be applied to other planning-based goal and plan recognition approaches, such as the approaches from Ram{\'{\i}}rez and Geffner~(\citeyear{RamirezG_IJCAI2009,RamirezG_AAAI2010}) (with a probabilistic ranking), as well as from Sohrabi et al.~(\citeyear{Sohrabi_IJCAI2016}). 

Our use of landmarks to drive goal recognition stems from properties of landmarks in classical planning. 
First, they are necessary conditions to achieving goals, and thus provide very strong evidence that certain observations are tied to specific goals.
Second, although their computation is, in theory, expensive, in practice, we can efficiently compute very informative sets of ordered landmarks, and critically, only once per goal recognition problem, resulting in a very efficient overall algorithm. 

We prove key properties of our recognition heuristics and their use as a filtering mechanism, and evaluate empirically our approaches using a set of well-known domains from the International Planning Competition (IPC), as well as a number of domains we developed specifically to measure the scalability of goal and plan recognition algorithms. 
For all domains, we evaluate the algorithms using datasets with varying degrees of observability (missing observations) and noise (spurious observations). 
We compare our heuristics for goal recognition against the current state-of-the-art~\cite{RamirezG_IJCAI2009,RamirezG_AAAI2010,NASA_GoalRecognition_IJCAI2015,Sohrabi_IJCAI2016} by using a dataset developed by 
Ram{\'{\i}}rez and Geffner~(\citeyear{RamirezG_IJCAI2009,RamirezG_AAAI2010}), and a new dataset we generated for other planning domains with larger and more complex problems, as well as problems with missing and noisy observations. 
Experiments show that our recognition heuristics are substantially faster and more accurate than the state-of-the-art for datasets that contain several domains and problems where recognizing the intended goal is non-trivial. 

The remainder of this article is organized as follows.
Section~\ref{section:Background} provides background on planning, domain-independent heuristics, and landmarks. 
We proceed to describe how we extract useful information from planning domain definitions
in Section~\ref{section:InferringStructures}, which we use throughout the article. 
In Section~\ref{section:LandmarkGoalRecognitionApproaches}, we develop our goal recognition approaches using landmarks. 
We empirically evaluate our approaches in Section~\ref{section:ExperimentsAndEvaluation}, which shows the results of the experiments for our goal recognition approaches against the state-of-the-art.
In Section~\ref{section:RelatedWork}, we survey related work and compare the state-of-the-art with our approaches. 
Finally, in Section~\ref{section:Conclusions}, we conclude this article by discussing limitations, advantages and future directions of our approaches. 

\section{Background}\label{section:Background}

In this section, we review essential background on planning terminology and landmarks. 
Finally, we define the task of goal recognition over planning domain definitions.

\subsection{Planning}
\label{sec:planning}

Planning is the problem of finding a sequence of actions (\idest, plan) that achieves a particular goal from an initial state. 
In this work, we use the terminology from Ghallab~\etal~(\citeyear{AutomatedPlanning_Book2016}) to represent planning domains and problems. 
First, we define a \textit{state} in the environment by the following Definition~\ref{def:state}. 

\begin{definition} [\textbf{Predicates and State}]\label{def:state}
A predicate is denoted by an n-ary predicate symbol $p$ applied to a sequence of zero or more terms ($\tau_1$, $\tau_2$, ..., $\tau_n$) -- terms are either constants or variables. 
We refer to grounded predicates that represent logical values according to some interpretation as facts, which are divided into two types: positive and negated facts, as well as constants for truth ($\top$) and falsehood ($\bot$).
A state $S$ is a finite set of positive facts $f$ that follows the closed world assumption so that if $f \in S$, then $f$ is true in $S$. 
We assume a simple inference relation $\models$ such that $S \models f$ iff $f \in S$, $S \not\models f$ iff $f \not\in S$, and $S \models f_1 \land ... \land f_n$ iff $\{f_1, ..., f_n\} \subseteq S$.
\end{definition}

Planning domains describe the environment dynamics through operators, which use a limited first-order logic representation to define schemata for state-modification actions according to Definition~\ref{def:operator}.

\begin{definition} [\textbf{Operator and Action}]\label{def:operator}
An operator $a$ is represented by a triple $\langle$name($a$), pre($a$), eff($a$)$\rangle$: name($a$) represents the description or signature of $a$; pre($a$) describes the preconditions of $a$, a set of predicates that must exist in the current state for $a$ to be executed; eff($a$) represents the effects of $a$. 
These effects are divided into eff($a$)$^+$ (\idest, an add-list of positive predicates) and eff($a$)$^-$ (\idest, a delete-list of negated predicates).
An action is a ground operator instantiated over its free variables. 
\end{definition}


 
We say an action $a$ is applicable to a state $S$ if and only if $S \models \mathit{pre}(a)$, and generates a new state $S'$ such that $S' \gets (S \cup \mathit{eff}(a)^{-})/\mathit{eff}(a)^{+}$. 

\begin{definition}[\textbf{Planning Domain}]\label{def:planningDomain}
A planning domain definition $\Xi$ is represented by a pair $\langle \Sigma, \mathcal{A} \rangle$, which specifies the knowledge of the domain, and consists of a finite set of facts $\Sigma$ (\exemp, environment properties) and a finite set of actions $\mathcal{A}$.
\end{definition}

A \textit{planning instance}, comprises both a \textit{planning domain} and the elements of a \textit{planning problem}, describing a finite set of \textit{objects} of the environment, the \textit{initial state}, and the \textit{goal state} which an agent wishes to achieve, as formalized in Definition~\ref{def:planningInstance}.

\begin{definition} [\textbf{Planning Instance}]\label{def:planningInstance}
A planning instance $\Pi$ is represented by a triple $\langle \Xi, \mathcal{I}, G\rangle$. 
\begin{itemize}
	\item $\Xi =  \langle \Sigma, \mathcal{A}\rangle$ is the domain definition; 
	\item $\mathcal{I} \subseteq \Sigma$ is the initial state specification, which is defined by specifying the value for all facts in the initial state; and 
	\item $G \subseteq \Sigma$ is the goal state specification, which represents a desired state to be achieved.
\end{itemize}
\end{definition}

Classical planning representations often separate the definition of $\mathcal{I}$ and $G$ as part of a planning problem to be used together with a domain $\Xi$, such as STRIPS~\cite{STRIPSFikes1971} and PDDL~\cite{PDDLMcdermott1998}. In this work, we use the STRIPS fragment of PDDL to formalize planning domains and problems.
Finally, a \textit{plan} is the solution of a planning instance, as formalized in Definition~\ref{def:plan}.

\begin{definition} [\textbf{Plan}]\label{def:plan}
A plan $\pi$ for a planning instance $\Pi = \langle \Xi, \mathcal{I}, G\rangle$ is a sequence of actions $\langle$$a_1$, $a_2$, ..., $a_n$$\rangle$ that modifies the initial state $\mathcal{I}$ into a state $S\models G$ in which the goal state $G$ holds by the successive execution of actions in a plan $\pi$. 
A plan $\pi^{*}$ with length $|\pi^{*}|$ is optimal if there exists no other plan $\pi'$ for $\Pi$ such that $\pi' < \pi^{*}$.
\end{definition}

While actions have an associated cost, we take the assumption from classical planning that this cost is 1 for all instantiated actions. 
A plan $\pi$ is considered optimal if its cost, and thus length, is minimal.

Finally, modern classical planners use a variety of heuristics to efficiently explore the search space of planning domains by estimating the cost to achieve a specific goal~\cite{AutomatedPlanning_Book2016}. 
In classical planning, this estimate is often the number of actions to achieve the goal state from a particular state, so we describe all techniques assuming a uniform action cost $c(a) = 1$ for all $a \in \mathcal{A}$.  
Thus, the cost for a plan $\pi =[ a_1, a_2, ..., a_n ]$ is $c(\pi) = \Sigma c(a_{i})$.
Heuristics make no guarantees of the accuracy of their estimations, however, when a heuristic never overestimates the cost to achieve a goal, it is called admissible and guarantees optimal plans for certain search algorithms. 
A heuristic $h$($s$) is admissible if $h$($s$) $\leq$ $h$*($s$) for all states, where $h$*($s$) is the optimal cost to the goal from state $s$. Heuristics that overestimate the cost to achieve a goal are called inadmissible. 

\subsection{Landmarks}
\label{sec:landmarks}

In the planning literature~\cite{LandmarksRichter_2008}, \textit{landmarks} are defined as necessary properties (alternatively, actions) that must be true (alternatively, executed) at some point in every valid plan (see Definition~\ref{def:plan}) to achieve a particular goal, being often partially ordered following the sequence in which they must be achieved. 
Hoffman~\etal~(\citeyear{Hoffmann2004_OrderedLandmarks}) define fact landmarks, and Vidal and Geffner~(\citeyear{Vicent_ActionLandmarks_2005}) define action landmarks, as follows:

\begin{definition}[\textbf{Fact Landmark}]\label{def:planLandmark}
Given a planning instance $\Pi = \langle \Xi, \mathcal{I}, G\rangle$, a formula $F_{l}$ is a landmark in $\Pi$ iff $F_{l}$ is true at some point along all valid plans that achieve $G$ from $\mathcal{I}$. 
In other words, a landmark is a type of formula (\exemp, conjunctive formula or disjunctive formula) over a set of facts that must be satisfied (or achieved) at some point along all valid plan executions.
\end{definition}

\begin{definition}[\textbf{Action Landmark}]
Given a planning instance $\Pi = \langle \Xi, \mathcal{I}, G\rangle$, an action $A_{l}$ is a landmark in $\Pi$ iff $A_{l}$ is a necessary action that must be executed at some point along all valid plans that achieve $G$ from $\mathcal{I}$.
\end{definition}

From the concept of fact landmarks, Hoffmann~\etal~(\citeyear{Hoffmann2004_OrderedLandmarks}) introduce two types of landmarks as formulas: \textit{conjunctive} and \textit{disjunctive landmarks}. 
A \textit{conjunctive landmark} is a set of facts that must be true together at some point in every valid plan to achieve a goal. 
A \textit{disjunctive landmark} is a set of facts such that at least one of the facts must be true at some point in every valid plan to achieve a goal. Figure~\ref{fig:Landmarks-BlocksWorld} shows an example that illustrates a set of landmarks for a \textsc{Blocks-World}\footnote{\textsc{Blocks-World} is a classical planning domain where a set of stackable blocks must be re-assembled on a table~\cite{AutomatedPlanning_Book2011}.} problem instance. This example shows a set of conjunctive ordered landmarks (connected boxes) that must be true to achieve the goal state \pred{(on A B)}. For instance, to achieve the fact landmark \pred{(on A B)} which is also the goal state, the conjunctive landmark \pred{(and (holding A) (clear B))} must be true immediately before, and so on, as shown in Figure~\ref{fig:Landmarks-BlocksWorld}.

\begin{figure}[tb!]
  \centering
  \includegraphics[width=0.7\linewidth]{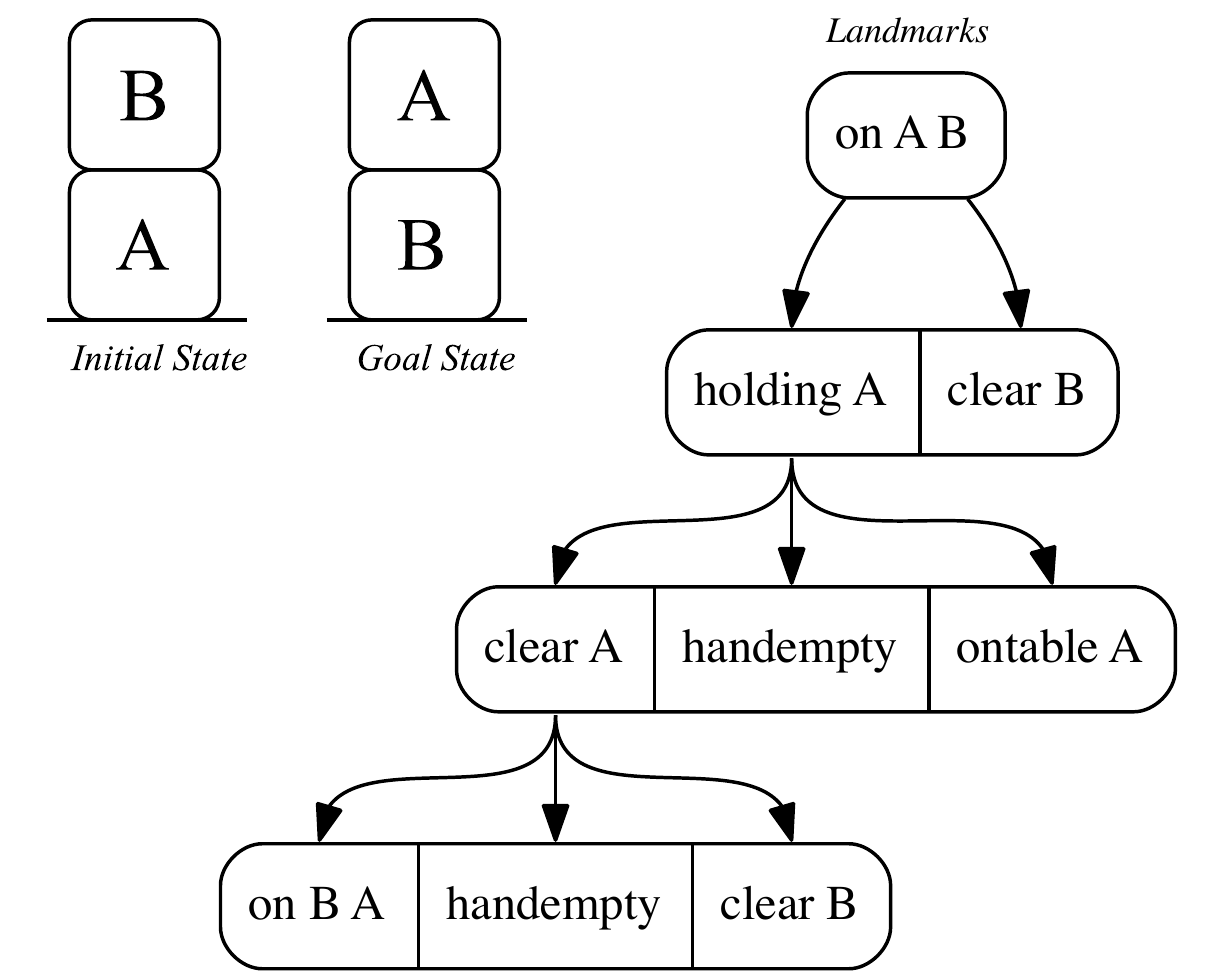}
  \caption{Ordered landmarks for a \textsc{Blocks-World} problem instance.}
  \label{fig:Landmarks-BlocksWorld}
\end{figure}

Whereas in planning the concept of landmarks is used to build heuristics~\cite{LandmarksRichter_2008} and planning algorithms~\cite{RichterLPG_2010}, in this work, we propose a novel use for landmarks: to monitor an agent's plan execution. 
Intuitively, we use landmarks as waypoints (or stepping stones) in order to monitor what an observed agent cannot avoid to achieve its goals.

\subsection{Goal Recognition}

Goal recognition is the task of recognizing agents' goals by observing their interactions in an environment~\cite{ActivityIntentPlanRecogition_Book2014}. 
In goal recognition, such observed interactions (\idest, observations) comprise the evidence available to recognize goals. 
Definition~\ref{def:observationSequence} follows the formalism proposed by Ram{\'{\i}}rez and Geffner in~(\citeyear{RamirezG_IJCAI2009,RamirezG_AAAI2010}) characterizing an observation sequence as the result of action sequence.

\begin{definition}[\textbf{Observation Sequence}]\label{def:observationSequence}
An observation sequence $O = \langle o_1, o_2, ..., o_n\rangle$ is said to be satisfied by a plan $\pi = \langle a_1, a_2, ..., a_m\rangle$, if there is a monotonic function $f$ that maps the observation indices $j = 1, ..., n$ into action indices $i = 1, ..., n$, such that $a_{f(j)} = o_{j}$.
\end{definition}

By combining the various notions of planning problem and an observation sequences, we formally define a goal recognition problem over a planning domain definition following Ram{\'{\i}}rez and Geffner~(\citeyear{RamirezG_IJCAI2009})\footnote{Unlike the probabilistic approach developed by Ram{\'{\i}}rez and Geffner~(\citeyear{RamirezG_AAAI2010}), our heuristic approaches do not use any prior probabilities to perform the goal recognition process.} in Definition~\ref{def:goalRecognition}, and a weak notion of solution to that problem in Definition~\ref{def:goalRecognitionSolution}. 

\begin{definition}[\textbf{Goal Recognition Problem}]\label{def:goalRecognition}
A goal recognition problem is a tuple $T_{GR} = \langle\Xi,\mathcal{I} ,\mathcal{G}, O\rangle$, where: 
\begin{itemize}
	\item $\Xi = \langle\Sigma, \mathcal{A}\rangle$ is a planning domain definition; 
	\item $\mathcal{I}$ is the initial state; 
	\item $\mathcal{G}$ is the set of possible goals, which include a correct hidden goal $G^{*}$ (\idest, $G^{*} \in \mathcal{G}$); and 
	\item $O = \langle o_1, o_2, ..., o_n\rangle$ is an observation sequence of executed actions, with each observation $o_i \in \mathcal{A}$, and the corresponding action being part of a valid plan $\pi$ (from Definition~\ref{def:plan}) that transitions $\mathcal{I}$ into $G^{*}$ through the sequential execution of actions in $\pi$. 
\end{itemize}
\end{definition}

\begin{definition}[\textbf{Solution to a Goal Recognition Problem}]\label{def:goalRecognitionSolution}
	A solution to a goal recognition problem $T_{GR} = \langle\Xi,\mathcal{I} ,\mathcal{G}, O\rangle$ is a nonempty subset of the set of possible goals $\mathbf{G} \subseteq \mathcal{G}$ such that $\forall G \in \mathbf{G}$ there exists a plan $\pi_{G}$ generated from a planning instance $\langle\Xi,\mathcal{I},G\rangle$ and $\pi_{G}$ is consistent with $O$.
\end{definition}

Thus, the ideal solution for a goal recognition problem is a set containing only the correct hidden goal $G^{*} \in \mathcal{G}$ that the observation sequence $O$ of a plan execution achieves. 
As an example of how the goal recognition process works, consider the Example~\ref{exemp:goalRecognition}, as follows.

\begin{example}\label{exemp:goalRecognition}
To exemplify the goal recognition process, let us consider the \textsc{Blocks-World} example in {\normalfont Figure~\ref{fig:blocksExample}}. 
The initial state represents an initial configuration of stackable blocks, while the set of candidate goals is composed by the following stacked ``words'': {\normalfont \pred{RED}, \pred{BED}, \textit{and} \pred{SAD}}. 
Consider an observation sequence for a hidden goal {\normalfont \pred{RED}} consisting of the following action sequence: 
{\normalfont $[$\pred{(unstack D B)}, \pred{(putdown D)}, \pred{(unstack E A)}, \pred{(stack E D)}, \pred{(pickup R)}, \pred{(stack R E)}$]$}. 
By following the full plan, we can easily infer that the hidden goal is indeed {\normalfont \pred{RED}}. 
However, if the we cannot observe action {\normalfont \pred{(stack R E)}}, it is not trivial to infer that {\normalfont \pred{RED}} is indeed the goal the observation sequence aims to achieve. 
\end{example}

\begin{figure}[tb]
  \centering
  \includegraphics[width=0.65\linewidth]{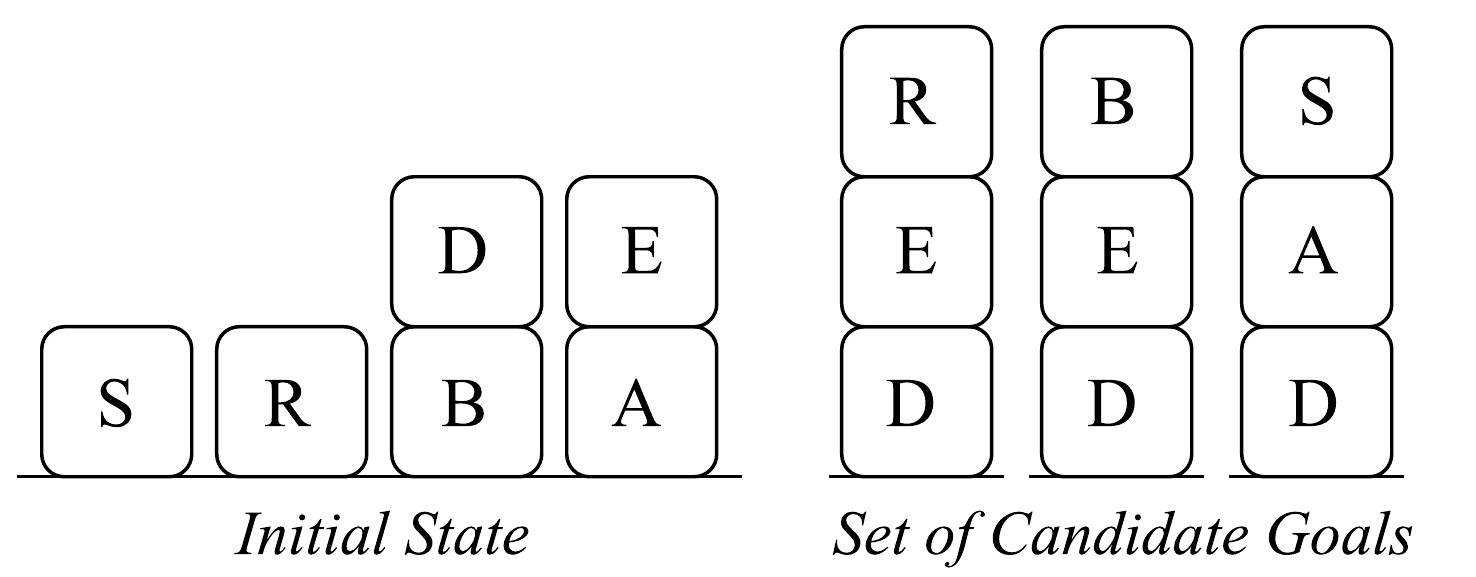}
  \caption{\textsc{Blocks-World} example.}
  \label{fig:blocksExample}
\end{figure}

Like~\cite{Sohrabi_IJCAI2016}, we also deal with missing observations during the goal recognition process. 
We differ from~\cite{Sohrabi_IJCAI2016} in that we do not deal with noisy (unreliable) observations explicitly. 
Nevertheless our technique proves to be robust against noise by relying exclusively on necessary conditions for the plans leading to each goal as our empirical analysis corroborates. 
In a partial observation sequence, we observe only a sub-sequence of actions of a plan that achieves a particular goal because some actions are missing or obfuscated. 
A noisy observation sequence contains one or more actions (or a set of facts) that might not be part of a plan that achieves a particular goal, \exemp, when a sensor fails and generates abnormal or spurious readings. We formalize the way in which an environment generates observations of agent plans in Definition~\ref{def:plan-projection}. 

\begin{definition}[\bf Observation Sequence Generation]\label{def:plan-projection}
	Let $\pi = \langle a_1, a_2, \\ \dots, a_n\rangle$ be a plan generated by a planning instance $\Pi = \langle \Xi, \mathcal{I}, G\rangle$. 
	An action projection function $\mathit{ap}(a):\mathcal{A} \mapsto \vec{\mathcal{A}}$ is a function that maps actions to sequences of zero or more actions.
	An observation sequence generation function $\mathit{os}(\pi)$ is a function that maps a plan $\pi$ into an observation sequence $O$ as follows:
	$$os(\pi) = \begin{cases}
		\langle \rangle & \text{if}~\pi=\langle\rangle \\
		\langle \mathit{ap}(a_1)\rangle \cdot \mathit{os}(\langle a_2, \dots, a_n \rangle) & \text{if}~\pi=\langle a_2, \dots, a_n \rangle
	\end{cases}$$
\end{definition}

The key to generating such sequences is how the rules for function $\mathit{ap}$ to translate actions. 
Following our Example~\ref{exemp:goalRecognition}, we could define an action projection function that never generates observations for unstack actions, and generates noise for all stack actions as follows. 
$$\mathit{ap}_1(a) = \begin{cases}
	\langle \pred{(pickup X)}\rangle & \text{if}~a=\pred{(pickup X)}\\
	\langle \pred{(putdown X)}\rangle & \text{if}~a=\pred{(putdown X)}\\
	\langle \rangle & \text{if}~a=\pred{(unstack X Y)}\\
	\langle \pred{(stack X Y)}, \pred{(unstack X Y)} \rangle & \text{if}~a=\pred{(stack X Y)}\\
\end{cases}$$
We formally define missing and noisy observations in Definitions~\ref{def:missingObservation}~and~\ref{def:noisyObservation} and both types of observation are exemplified in Example~\ref{exemp:missingNoisy}.


\begin{definition}[\textbf{Missing Observation}]\label{def:missingObservation}
Let $\Pi = \langle \Xi, \mathcal{I}, G\rangle$ be a planning instance, $\pi$ be a valid plan that achieves $G$ from $\mathcal{I}$, and $O$ is an observation sequence induced by an observation generation function $\mathit{os}$ with an action projection function $\mathit{ap}$. 
An observation sequence $O$ misses observations (is a partial or incomplete observation sequence) with respect to the plan $\pi$ that achieves the goal $G$ from $\mathcal{I}$ if the $\mathit{ap}$ function maps any action into the empty sequence $\langle \rangle$.
\end{definition}


\begin{definition}[\textbf{Noisy Observation}]\label{def:noisyObservation}
Let $\Pi = \langle \Xi, \mathcal{I}, G\rangle$ be a planning instance, $\pi$ be a valid plan that achieves $G$ from $\mathcal{I}$, and $O$ is an observation sequence induced by an observation generation function $\mathit{os}$ with an action projection function $\mathit{ap}$. 
An observation sequence $O$ contains noisy observations with respect to the plan $\pi$ that achieve the goal $G$ from $\mathcal{I}$ if the $\mathit{ap}$ function maps any action $a$ into a non-empty sequence containing one or more action $a'\neq a$.
\end{definition}


\begin{example}\label{exemp:missingNoisy}
Let us consider that a valid plan to achieve a goal $G$ is $\pi = [a,b,c,d,e]$. Consider the following observation sequences $O_{m1}$, $O_{m2}$, and $O_{m3}$:
\begin{itemize}
	\item $O_{m1} = [a,d]$;
	\item $O_{m2} = [b,e]$; and
	\item $O_{m3} = [d,a,c]$
\end{itemize} 
Observation sequences $O_{m1}$ and $O_{m2}$ satisfy {\normalfont Definition~\ref{def:missingObservation}}, and therefore, they are partial observation sequences and contain missing observed actions. 
$O_{m3}$ is not a partial observation sequence because it does not satisfy {\normalfont Definition~\ref{def:missingObservation}} as the observation sequence $[d,a,c]$ is not a strict subset of ordered actions of the plan $\pi$. 

Now, consider the following observation sequences $O_{n1}$ and $O_{n2}$: 
\begin{itemize}
	\item $O_{n1} = [a,b,c,d,e,g]$; and
	\item $O_{n2} = [b,d,h]$
\end{itemize}
It is possible to see that both observation sequences $O_{n1}$ and $O_{n2}$ contain noisy observations (g and h respectively) and satisfy {\normalfont Definition~\ref{lst:FactLandmarksUniquenessValue}}. 
However, note that $O_{n2}$ contains not only noisy observations but it also misses observations, \idest, $O_{n2}$ is partial observation sequence with noisy observations.
\end{example}

Although we define missing and noisy observations with actions as observations, our goal recognition approaches can also deal with facts (or fluents) as observations, like~\cite{Sohrabi_IJCAI2016}. 
Indeed, as we see in Section~\ref{section:LandmarkGoalRecognitionApproaches}, using states as observations makes goal recognition much easier for our heuristic approaches, since we can use the observations directly to compute achieved landmarks. 
In Section~\ref{section:LandmarkGoalRecognitionApproaches}, we show that what matters for our goal recognition approaches is the evidence of fact landmarks during the observations, and it is irrelevant whether this evidence is provided by either an observed action or a set of facts.

\section{Extracting Recognition Information from Planning Definition}\label{section:InferringStructures}

In this section, we describe the process through which we extract useful information for goal recognition from planning domain definition. 
First, we describe landmark extraction algorithms from the literature, and how we use these algorithms in our approaches in Section~\ref{subsec:extractinglandmarks}. 
Second, we show how we classify facts into partitions from planning action descriptions and how we use them during the goal recognition process in Section~\ref{subsec:factparitioning}. 

\subsection{Extracting Landmarks}\label{subsec:extractinglandmarks}


Hoffman~\etal~(\citeyear{Hoffmann2004_OrderedLandmarks}) proves that the process of generating exactly all landmarks and deciding about their ordering is PSPACE-complete, which is exactly the same complexity of deciding plan existence~\cite{PlanningComplexity_Bylander1994}. 
Nevertheless, most landmark extraction algorithms extract only a subset of landmarks for a given planning instance for efficiency. 
While there are several algorithms to extract landmarks and their orderings in the literature that we could use~\cite{ICAPS03_DC_ZhGi,LandmarksRichter_2008,KeyderRH_ECAI10}, we chose the landmark extraction algorithm from Hoffmann~\etal~(\citeyear{Hoffmann2004_OrderedLandmarks}) to extract landmarks from planning instances due to its speed and simplicity. 
This algorithm can extract both conjunctive and disjunctive landmarks, but we use the conjunctive landmarks to build heuristics for our goal recognition approaches.

To represent landmarks and their ordering, the algorithm of Hoffmann~\etal~(\citeyear{Hoffmann2004_OrderedLandmarks}) uses a tree in which nodes represent landmarks and edges represent necessary prerequisites between landmarks. 
Each node in the tree represents a conjunction of facts that must be true simultaneously at some point during plan execution, and the root node is a landmark representing the goal state. 
This algorithm uses a Relaxed Planning Graph (RPG)~\cite{FFHoffmann_2001}, which is a leveled graph that ignores the delete-list effects of all actions, thus containing no mutex relations. 
Once the RPG is built, the algorithm extracts \textit{landmark candidates} by back-chaining from the RPG level in which all facts of the goal state $G$ are possible, and, for each fact $g$ in $G$, checks which facts must be true until the first level of the RPG. 
For example, if fact $B$ is a landmark and all actions that achieve $B$ share $A$ as precondition, then $A$ is a landmark candidate. 
To confirm that a landmark candidate is indeed a landmark, the algorithm builds a new RPG structure by removing actions that achieve this landmark candidate and checks the solvability over this modified problem\footnote{Deciding the solvability of a relaxed planning problem using an RPG structure can be done in polynomial time~\cite{BlumFastPlanning_95}.}, and, if the modified problem is unsolvable, then the landmark candidate is a necessary landmark. This means that the actions that achieve the landmark candidate are necessary to solve the original planning problem. 

\begin{figure}[t!]
  \centering
  \includegraphics[width=0.65\linewidth]{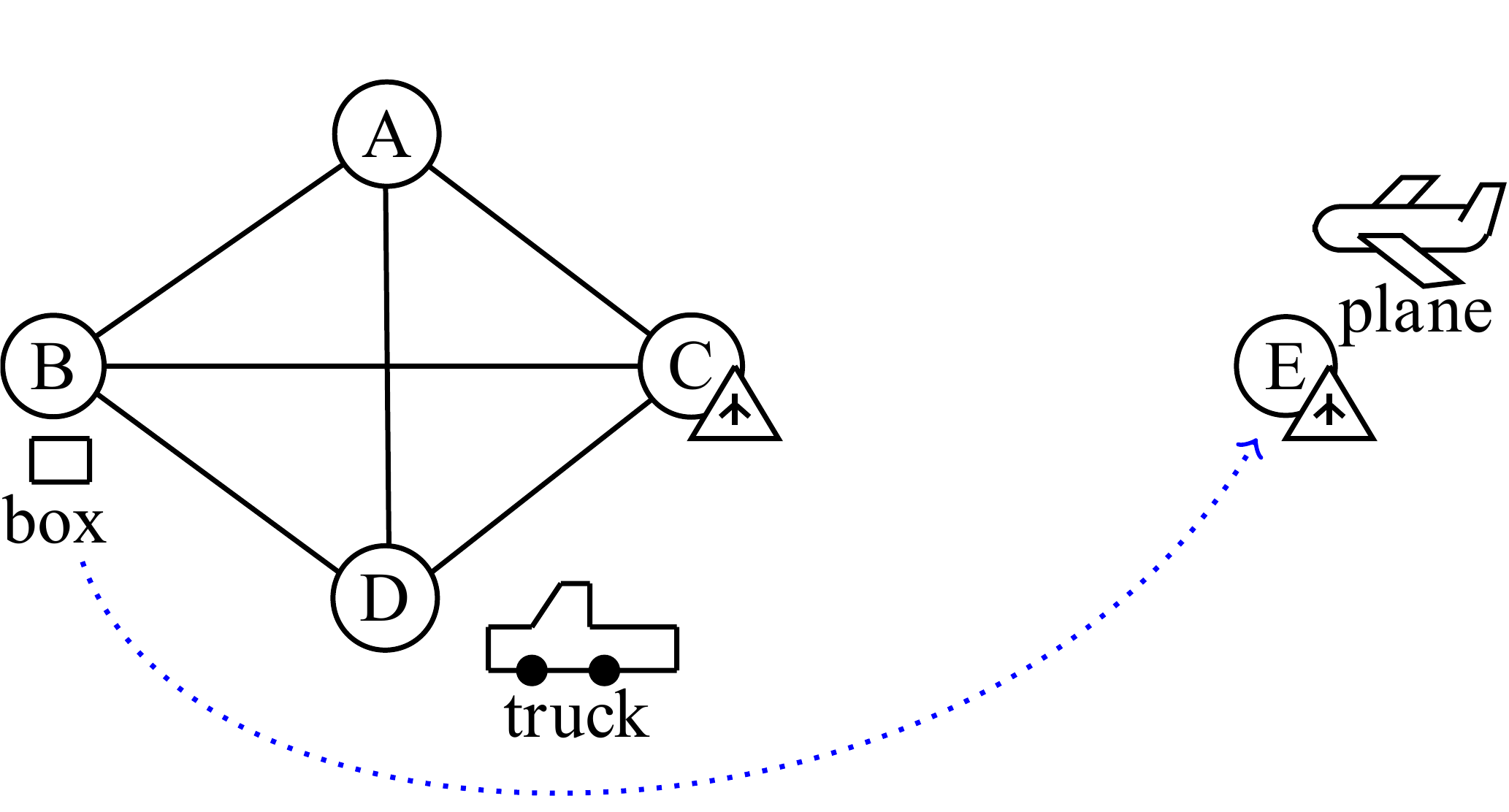}
  \caption{\textsc{Logistics} problem example.}
  \label{fig:logisticsExample}
\end{figure}

To exemplify the output of the landmark extraction algorithm from~\cite{Hoffmann2004_OrderedLandmarks}, consider the \textsc{Logistics}\footnote{The \textsc{Logistics} domain consists of airplanes and trucks transporting packages between locations (\exemp, airports and cities).} problem example in Figure~\ref{fig:logisticsExample}. 
Fact landmarks extracted for this example are shown respectively in Listing~\ref{lst:logisticsFactLandmarks} and Figure~\ref{fig:logisticsLandmarks}. 
While Listing~\ref{lst:logisticsFactLandmarks} shows one possible serialization of the landmarks, Figure~\ref{fig:logisticsLandmarks} represents the same landmarks ordered from bottom-up by facts that must be true together. 
These landmarks allow us to monitor way-points during a plan execution to determine which goals this plan is going to achieve. 

\begin{figure}[ht]
  \centering
  \includegraphics[width=1\linewidth]{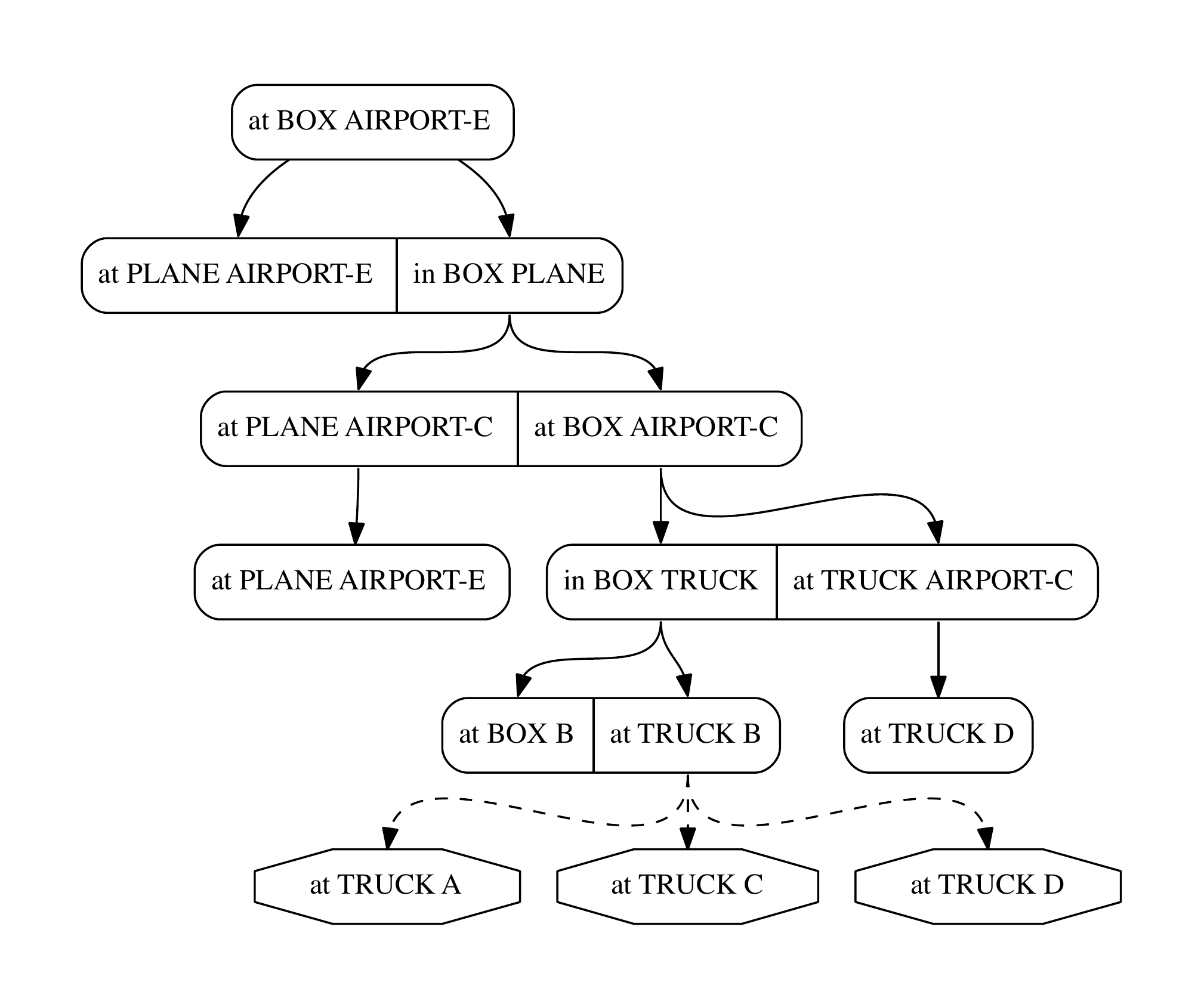}
  \caption{Ordered fact landmarks extracted from \textsc{Logistics} problem example shown in Figure~\ref{fig:logisticsExample}. Fact landmarks that must be true together are represented by connected boxes, which are conjunctive facts, \idest, representing conjunctive landmarks. Disjunctive landmarks are represented by octagon boxes that are connected by dashed lines.}
  \label{fig:logisticsLandmarks}
\end{figure}

\begin{lstlisting}[float=!tb,caption={Fact landmarks (conjunctive and disjunctive) extracted from the \textsc{Logistics} example in Figure~\ref{fig:logisticsExample}.},label={lst:logisticsFactLandmarks},basicstyle=\ttfamily\footnotesize]
Fact Landmarks:
(and (at BOX AIRPORT-E))
(and (at PLANE AIRPORT-E) (in BOX PLANE))
(and (at PLANE AIRPORT-C) (at BOX AIRPORT-C))
(and (at PLANE AIRPORT-E))
(and (at TRUCK D))
(and (in BOX TRUCK) (at TRUCK AIRPORT-C))
(and (at BOX B) (at TRUCK B))
(or (at TRUCK A) (at TRUCK C) (at TRUCK D))
\end{lstlisting} 

This landmark extraction algorithm is referred to as function \textsc{ExtractLandmarks}, which takes as input a planning domain definition $\Xi = \langle \Sigma, \mathcal{A}\rangle$, an initial state $\mathcal{I}$, and a set of candidate goals $\mathcal{G}$ or a single goal $G$. In case the input is a set of candidate goals $\mathcal{G}$, this function outputs a map $\mathcal{L}_{\mathcal{G}}$ that associates candidate goals to their respective ordered fact landmarks (\idest, a set of landmarks with an order relation).
Alternatively, in case the input is a single goal $G$, this function outputs a map $\mathcal{L}_{G}$ that associates the goal $G$ to its respective ordered fact landmarks.

We note that many landmark extraction techniques, including that of Hoffmann~et al.~\cite{Hoffmann2004_OrderedLandmarks}, might infer incorrect landmark orderings, which can lead to problems if the goal recognition process relies on the ordering information to make inferences. 
Nevertheless, our empirical evaluation shows that landmark orderings do not affect detection performance in the experimental datasets. 
We discuss landmark orderings later (Section~\ref{subsec:computingAchievedLandmarks}) in the paper.

\subsection{Classifying Facts into Partitions}\label{subsec:factparitioning}

Pattison and Long~(\citeyear{PattisonGoalRecognition_2010}) classify facts into mutually exclusive partitions in order to infer whether certain observations are likely to be goals for goal recognition. 
Their classification relies on the fact that, in some planning domains, predicates may provide additional information that can be extracted by analyzing preconditions and effects in operator definitions. 
We use this classification to infer if certain observations are consistent with a particular goal, and if not, we can eliminate a candidate goal. We formally define fact partitions in what follows. 

\begin{definition} [\textbf{Strictly Activating}] \label{def:strictlyActivating}
A fact $f$ is strictly activating if $f \in \mathcal{I}$ and $\forall a \in \mathcal{A}$, $f \notin \textit{eff}(a)^+ \cup \textit{eff}(a)^-$. Furthermore, $\exists a \in \mathcal{A}$, such that $f \in$ \textit{pre}($a$).
\end{definition}

\begin{definition}[\textbf{Unstable Activating}] \label{def:unstableActivating}
A fact $f$ is unstable activating if $f \in \mathcal{I}$ and $\forall a \in \mathcal{A}$, $f \notin \textit{eff}(a)^+$ and $\exists a,b \in \mathcal{A}, f \in \textit{pre}(a)$ and $f \in \textit{eff}(b)^-$.
\end{definition}

\begin{definition}[\textbf{Strictly Terminal}] \label{def:strictlyTerminal}
A fact $f$ is strictly terminal if $\exists a \in \mathcal{A}$, such that $f \in \textit{eff}(a)^+$ and $\forall a \in \mathcal{A}$, $f \notin \textit{pre}(a)$ and $f \notin \textit{eff}(a)^-$.
\end{definition}

A \textit{Strictly Activating} fact (Definition~\ref{def:strictlyActivating}) appears as a precondition, and does not appear as an add or delete effect in an operator definition. 
This means that unless defined in the initial state, this fact can never be added or deleted by an operator.
An \textit{Unstable Activating} fact (Definition~\ref{def:unstableActivating}) appears as both a precondition and a delete effect in two operator definitions, so once deleted, this fact cannot be re-achieved. 
The deletion of an unstable activating fact may prevent a plan execution from achieving a goal.
A \textit{Strictly Terminal} fact (Definition~\ref{def:strictlyTerminal}) does not appear as a precondition of any operator definition, and once added, cannot be deleted. 
For some planning domains, this kind of fact is the most likely to be in the set of goal facts, because once added in the current state, it cannot be deleted, and remains true until the final state.

The fact partitions that we can extract depend on the planning domain definition. 
For example, from the \textsc{Blocks-World} domain, it is not possible to extract any fact partitions.
However, it is possible to extract fact partitions from the \textsc{IPC-Grid}\footnote{\textsc{IPC-Grid} domain consists of an agent that moves in a grid using keys to open locked locations.} domain, such as \textit{Strictly Activating} and \textit{Unstable Activating} facts. 
In this work, we use fact partitions to obtain additional information on fact landmarks during the goal recognition process. 
For example, consider an \textit{Unstable Activating} fact landmark $L_{ua}$, so that if $L_{ua}$ is deleted from the current state, then it cannot be re-achieved. 
We can trivially determine that goals for which this fact is a landmark are unreachable, because there is no available action that achieves $L_{ua}$ again.

\section{Landmark-Based Goal Recognition}\label{section:LandmarkGoalRecognitionApproaches}

We now describe our goal recognition approaches that rely on planning landmarks. 
First, we start with a method to monitor and compute the evidence of landmarks from observations in Section~\ref{subsec:computingAchievedLandmarks}. 
Second, we develop a landmark-based filtering method that can be used with any other planning-based goal and plan recognition approach in Section~\ref{subsec:filteringCandidateGoals}. 
Finally, we describe how we build goal recognition heuristics using landmarks in Sections~\ref{subsec:goalCompletionHeuristic}~and~\ref{subsec:uniquenessHeuristic}.

\subsection{Computing Achieved Landmarks in Observations}\label{subsec:computingAchievedLandmarks}

An essential part of our approaches to goal recognition is the ability to monitor and compute the evidence of achieved fact landmarks in the observations. 
To do so, we compute the evidence of achieved fact landmarks in preconditions and effects of observed actions during a plan execution~\cite{RamonNirMeneguzzi_AAAI2017} using the \textsc{ComputeAchievedLandmarks} function shown in Algorithm~\ref{alg:ComputeAchievedLandmarks}. 
This algorithm takes as input an initial state $\mathcal{I}$, a set of candidate goals $\mathcal{G}$, a sequence of observed actions $O$, and a map $\mathcal{L}_{\mathcal{G}}$ containing candidate goals and their extracted fact landmarks (provided by the \textsc{ExtractLandmarks} function). 
Note that Algorithm~\ref{alg:ComputeAchievedLandmarks} can be easily modified to allow it to deal with observations as states, so instead of analyzing preconditions and effects of actions, we compare the observations directly to computed landmarks.

\floatname{algorithm}{Algorithm}
\begin{algorithm}[h!]
    \caption{Compute Achieved Landmarks in Observations.}
    \textbf{Input:} $\mathcal{I}$ \textit{initial state}, $\mathcal{G}$ \textit{set of candidate goals}, $O$ \textit{observations}, and $\mathcal{L}_{\mathcal{G}}$ \textit{goals and their extracted landmarks}.
    \\\textbf{Output:} \textit{A map of goals to their achieved landmarks.}
	\label{alg:ComputeAchievedLandmarks}
    \begin{algorithmic}[1]
        \Function{ComputeAchievedLandmarks}{$\mathcal{I}, \mathcal{G}, O, \mathcal{L}_{\mathcal{G}}$}
        \State $\Lambda_{\mathcal{G}} \gets \langle \rangle$ \Comment{\textit{Map goals $\mathcal{G}$ to their respective achieved landmarks}.}
        \For{\textbf{each} goal $G$ in $\mathcal{G}$}\label{alg:line:IterateCandidateGoals}
			\State $\mathcal{L}_{G} \gets$ fact landmarks of $G$ s.t $\langle G, \mathcal{L}_{G}\rangle$ in $\mathcal{L}_{\mathcal{G}}$\label{alg:line:GetLandmarks}
			\State $\mathcal{L_{\mathcal{I}}} \gets$ all fact landmarks $L \in \mathcal{I}$\label{alg:line:CheckAchievedLandmarksInitialState}
			\State $\mathcal{L} \gets \emptyset$
			\For{\textbf{each} observed action $o$ in $O$}\label{alg:line:IterateObservations}
				\State $\mathcal{L} \gets$ all fact landmarks $L$ in $\mathcal{L}_{G}$ such that $L$ $\in$ \textit{pre}($o$) $\cup$ \textit{eff}($o$)$^+$ and $L$ $\notin$ $\mathcal{L}$\label{alg:line:CheckAchievedLandmarks}
				\State $\mathcal{L}_{\prec} \gets$ predecessors $L_{\prec}$ of all $L$ in $\mathcal{L}$, such that $L_{\prec} \notin  \mathcal{L}$ \label{alg:line:CheckAchievedLandmarksPredecessors}
				\State $\mathcal{AL}_{G} \gets \mathcal{AL}_{G} \cup \lbrace \mathcal{L_{\mathcal{I}}} \cup \mathcal{L} \cup \mathcal{L}_{\prec}\rbrace$\label{alg:line:AddAchievedLandmarksGoal}
			\EndFor
			\State $\Lambda_{\mathcal{G}}(G) \gets \mathcal{AL}_{G}$ \Comment{\textit{Achieved landmarks of $G$.}}\label{alg:line:AddAchievedLandmarksIntoMap}
		\EndFor
		\State \textbf{return} $\Lambda_{\mathcal{G}}$\label{alg:line:ReturnAchievedLandmarks}
        \EndFunction
    \end{algorithmic}
\end{algorithm} 

Algorithm~\ref{alg:ComputeAchievedLandmarks} iterates over the set of candidate goals $\mathcal{G}$ (Line~\ref{alg:line:IterateCandidateGoals}) selecting the fact landmarks $\mathcal{L}_{G}$ of each goal $G$ in $\mathcal{L}_{\mathcal{G}}$ in Line~\ref{alg:line:GetLandmarks} and computes the fact landmarks that are in the initial state in Line~\ref{alg:line:CheckAchievedLandmarksInitialState}.
With this information, the algorithm iterates over the observed actions $O$ to compute the achieved fact landmarks of $G$ in Lines~\ref{alg:line:IterateObservations} to~\ref{alg:line:AddAchievedLandmarksGoal}. 
For each observed action $o$ in $O$, the algorithm computes all fact landmarks of $G$ that are either in the preconditions and effects of $o$ in Line~\ref{alg:line:CheckAchievedLandmarks}. 
As we deal with partial observations in a plan execution some executed actions may be missing from the observation sequence, thus whenever we identify a fact landmark, we also infer that its predecessors must have been achieved in Line~\ref{alg:line:CheckAchievedLandmarksPredecessors}. 
For example, consider that the set of fact landmarks to achieve a goal from a state is represented by the following ordered facts: \pred{(at A)} $\prec$ \pred{(at B)} $\prec$ \pred{(at C)} $\prec$ \pred{(at D)}, and we observe just one action during a plan execution, and this observed action contains the fact landmark \pred{(at C)} as an effect. 
From this observed action, we can infer that the predecessors of \pred{(at C)} must have been achieved before this observation (\idest, \pred{(at A)} and \pred{(at B)}), and therefore, we also include them as achieved landmarks. 
At the end of each iteration over an observed action $o$, the algorithm stores the set of achieved landmarks of $G$ in $\mathcal{AL}_{G}$ in Line \ref{alg:line:AddAchievedLandmarksGoal}. 
Finally, after computing the evidence of achieved landmarks in the observations for a candidate goal $G$, the algorithm stores the set of achieved landmarks $\mathcal{AL}_{G}$ of $G$ in $\Lambda_{\mathcal{G}}$ (Line~\ref{alg:line:AddAchievedLandmarksIntoMap}) and returns a map $\Lambda_{\mathcal{G}}$ containing all candidate goals and their respective achieved fact landmarks (Line~\ref{alg:line:ReturnAchievedLandmarks}). Example~\ref{exemp:computingAchievedLandmarks} illustrates the execution of Algorithm~\ref{alg:ComputeAchievedLandmarks} to compute achieved landmarks from the observations of our running example.

\begin{example}\label{exemp:computingAchievedLandmarks}
Consider the \textsc{Blocks-World} example from {\normalfont Figure~\ref{fig:blocksExample}}, and the following observed actions: {\normalfont \pred{(unstack E A)}} and {\normalfont \pred{(stack E D)}}. 
Thus, from these observed actions, the candidate goal {\normalfont \pred{RED}}, and the set of fact landmarks of this candidate goal {\normalfont (Figure~\ref{fig:RED-AchievedLandmarks})}, our algorithm computes that the following fact landmarks have been achieved: 

{
\normalfont
\begin{itemize}
	\item $\mathcal{AL}_{\pred{RED}}=\lbrace$\pred{[(clear R)]}, \pred{[(on E D)]}, \\\pred{[(clear R) (ontable R) (handempty)]}, \\ \pred{[(on E A) (clear E) (handempty)]}, \\\pred{[(clear D) (holding E)]}, \\\pred{[(on D B) (clear D) (handempty)]}$\rbrace$
\end{itemize}
}

In the preconditions of {\normalfont \pred{(unstack E A)}} the algorithm computes {\normalfont \pred{[(on E A) (clear E) (handempty)]}}. 
Subsequently, in the preconditions and effects of {\normalfont \pred{(stack E D)}} the algorithm computes {\normalfont \pred{[(clear D) (holding E)]}} and {\normalfont \pred{[(on E D)]}}, while it computes the other achieved landmarks for the word {\normalfont \pred{RED}} from the initial state.
{\normalfont Figure~\ref{fig:RED-AchievedLandmarks}} shows the set of achieved landmarks for the word {\normalfont \pred{RED}} in gray. {\normalfont Listing~\ref{lst:FactLandmarksUniquenessValue}} shows in bold the set of achieved landmarks that our algorithm computes for the set of candidate goals in {\normalfont Figure~\ref{fig:blocksExample}}.
\end{example}

\begin{figure}[th!]
  \centering
  \includegraphics[width=0.8\linewidth]{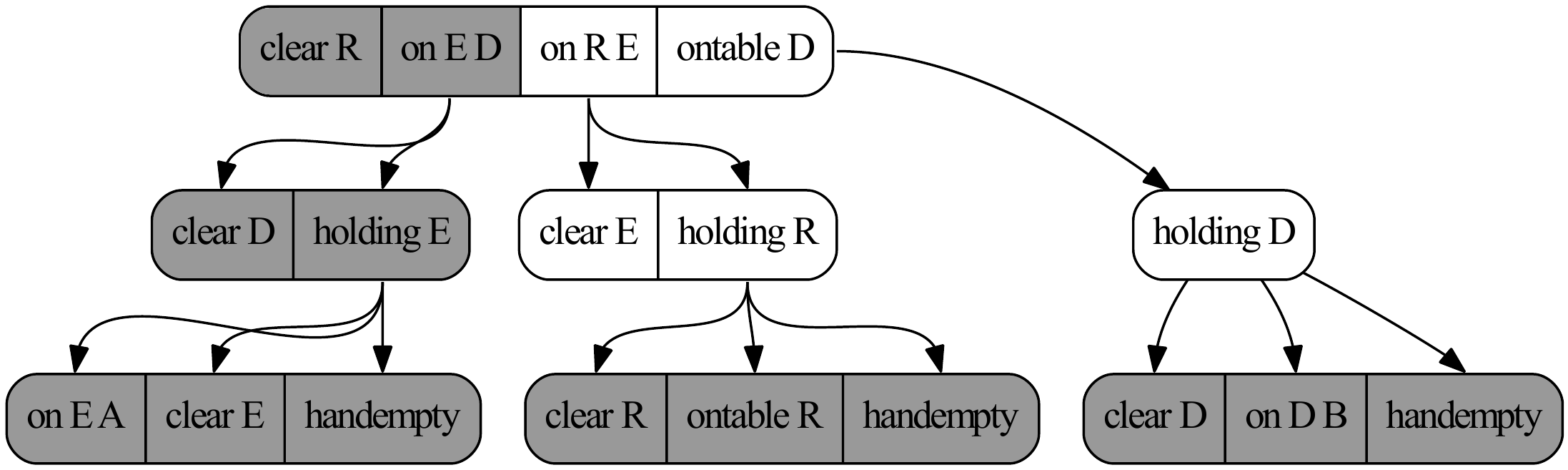}
  \caption{Ordered fact landmarks extracted for the stacked blocks for the word \pred{RED}. Fact landmarks that must be true together are represented by connected boxes. Connected boxes in grey represent achieved fact landmarks. Edges represent prerequisites between landmarks.}
  \label{fig:RED-AchievedLandmarks}
\end{figure}

The complexity of computing achieved landmarks in observations (Algorithm~\ref{alg:ComputeAchievedLandmarks}) with the process of extracting landmarks ($EL$) is: $O(EL + |\mathcal{G}|\cdot|O|\cdot|\mathcal{L}_{\mathcal{G}}|)$, where $\mathcal{G}$ is the set of candidate goals, $O$ is the observation sequence, and $\mathcal{L}_{\mathcal{G}}$ is the extracted landmarks for $\mathcal{G}$.

\subsection{Filtering Candidate Goals from Achieved Landmarks in Observations}
\label{subsec:filteringCandidateGoals}

We now develop an approach to filter candidate goals based on the evidence of fact landmarks and partitioned facts in preconditions and effects of observed actions in a plan execution~\cite{PereiraMeneguzzi_ECAI2016}. 
This filtering method analyzes fact landmarks in preconditions and effects of observed actions, and selects goals, from a set of candidate goals, that have achieved most of their associated landmarks. 

This filtering method is detailed in function \textsc{FilterCandidateGoals} of Algorithm~\ref{alg:filterCandidateGoals}. 
This algorithm takes as input a goal recognition problem $T_{GR}$, which is composed of a planning domain definition $\Xi$, an initial state $\mathcal{I}$, a set of candidate goals $\mathcal{G}$, a set of observed actions $O$, and a filtering threshold $\theta$. 
Our algorithm iterates over the set of candidate goals $\mathcal{G}$, and, for each goal $G$ in $\mathcal{G}$, it extracts and classifies fact landmarks and partitions for $G$ from the initial state $\mathcal{I}$ (Lines~\ref{alg:filter:extractLandmarks}~and~\ref{alg:filter:factPartitioner}). 
Function $\Call{PartitionFacts}{\mathcal{L}_{g},\mathcal{A}}$ takes a set of goals and the actions in the domain and returns the fact partitions induced by the actions in $\mathcal{A}$ into the sets $F_{sa}$ of strictly activating (from Definition~\ref{def:strictlyActivating}), $F_{ua}$ of unstable activating (from Definition~\ref{def:unstableActivating}), and $F_{st}$ of strictly terminal (from Definition~\ref{def:strictlyTerminal}) facts. 
We then check whether the observed actions $O$ contain fact landmarks or partitioned facts in either their preconditions or effects. 
At this point, if any \textit{Strictly Activating} facts for the candidate goal $G$ are not in initial state $\mathcal{I}$, then the candidate goal $G$ is no longer achievable, so we can discard it (Line~\ref{alg:filter:SA}). 
Subsequently, we check for \textit{Unstable Activating} and \textit{Strictly Terminal} facts of goal $G$ in the preconditions and effects of the observed actions $O$, and if we find any, we discard the candidate goal $G$ (Line~\ref{alg:filter:STandUA}).
If we observe no facts from partitions as evidence from the observed actions in $O$, we move on to checking landmarks of $G$ within the observed actions in $O$. 
If we observe any landmarks in the preconditions and positive effects of the observed actions (Line~\ref{alg:filter:identifyFactLandmarks}), we compute the evidence of achieved landmarks for the candidate goal $G$ (Line~\ref{alg:filter:computeFactLandmarks}). 
Like Algorithm~\ref{alg:ComputeAchievedLandmarks}, we deal with missing observations by inferring that the unobserved predecessors of observed fact landmarks must have been achieved in Line~\ref{alg:filter:predecessors}. 
Given the number of achieved fact landmarks of $G$, we then estimate the percentage of fact landmarks that the observed actions $O$ have achieved according to the ratio between the amount of achieved fact landmarks and the total amount of landmarks (Line~\ref{alg:filter:ratio}). 
Finally, after computing the percentage of landmark completion for all candidate goals in $\mathcal{G}$, we return the goals with the highest percentage of achieved landmarks within our filtering threshold $\theta$ (Line~\ref{alg:filter:filterGoals}). 
We follow Definition~\ref{def:planLandmark} of fact landmarks and consider conjunctive landmarks as a single landmark when counting achieved landmarks (Line~\ref{alg:filter:ratio}), except for the sub-goals, where each fact is a separate landmark. 
With respect to the threshold value, note that, if threshold $\theta=0$, the filter returns only the goals with maximum completion, given the observations. 
The threshold gives us flexibility when dealing with missing observations and sub-optimal plans, which, when $\theta=0$, it may cause some potential candidate goals to be filtered out before we get additional observations. 
Example~\ref{exemp:filterCandidateGoals} shows how our filtering method prunes efficiently goals from a set of candidate goals.

\afterpage{
\floatname{algorithm}{Algorithm}
\begin{algorithm}[h!]
    \caption{Filter Candidate Goals in Observations.}
    \textbf{Input:} $\Xi$ $=$ $\langle$$\Sigma$, $\mathcal{A}$$\rangle$ \textit{planning domain}, $\mathcal{I}$ \textit{initial state}, $\mathcal{G}$ \textit{set of candidate goals}, $O$ \textit{observations}, and $\theta$ \emph{threshold}.
    \\\textbf{Output:} \textit{A set of filtered candidate goals $\Lambda_{\mathcal{G}}$ with the highest percentage of achieved landmarks in observations $O$.}
	\label{alg:filterCandidateGoals}
    \begin{algorithmic}[1]
        \Function{FilterCandidateGoals}{$\Xi, \mathcal{I}, \mathcal{G}, O, \theta$}
        \State $\Lambda_{\mathcal{G}} \gets \langle \rangle$ \Comment{\textit{Map goals to \% of achieved landmarks}.}
        \For{each goal $G$ in $\mathcal{G}$}
			\State $\mathcal{L}_{G} \gets \Call{ExtractLandmarks}{\Xi, \mathcal{I}, G}$ \label{alg:filter:extractLandmarks}
			\State $\langle F_{sa}, F_{ua}, F_{st} \rangle \gets \Call{PartitionFacts}{\mathcal{L}_{g},\mathcal{A}}$ \label{alg:filter:factPartitioner} \Comment{\textit{$F_{sa}$: set of Strictly Activating facts, $F_{ua}$: set of Unstable Activating facts, and $F_{st}$: set of Strictly Terminal facts.}}
			\If{$F_{sa} \cap \mathcal{I} = \emptyset$}\label{alg:filter:SA}
				\State \textbf{continue} \Comment{\textit{Goal $G$ is no longer possible.}}
			\EndIf
			\State $\mathcal{AL}_{G} \gets \langle$ $\rangle$ \Comment{\textit{Achieved landmarks for $G$.}}
			\State $\mathcal{L_{\mathcal{I}}} \gets$ all fact landmarks $L \in \mathcal{I}$
			\For{each observed action $o$ in $O$}
				\If{$(F_{ua} \cup  F_{st}) \subseteq (\textit{pre}(o) \cup \textit{eff}(o)^+ \cup \textit{eff}(o)^-)$} \label{alg:filter:STandUA}
					\State $discardG = $ \textbf{true}
					\State \textbf{break}
				\Else
					\State $\mathcal{L} \gets$ remove all fact landmarks $L$ in $\mathcal{L}_{G}$ such that $L$ $\in$ \textit{eff}($o$)$^-$ and $L$ $\in$ $\mathcal{L}$\label{alg:filter:CheckAchievedAndDeletedLandmarks}
					\State $\mathcal{L} \gets$ all fact landmarks $L$ in $\mathcal{L}_{G}$ s.t $L$ $\in$ \textit{pre}($o$) $\cup$ \textit{eff}($o$)$^+$ and $L$ $\notin$ $\mathcal{L}$ \label{alg:filter:identifyFactLandmarks}
					\State $\mathcal{L}_{\prec} \gets$ predecessors $L_{\prec}$ of all $L$ in $\mathcal{L}$, s.t $L_{\prec} \notin  \mathcal{L}$\label{alg:filter:predecessors}
					\State $\mathcal{AL}_{G} \gets \mathcal{AL}_{G} \cup \lbrace \mathcal{L_{\mathcal{I}}} \cup \mathcal{L} \cup \mathcal{L}_{\prec}\rbrace$\label{alg:filter:computeFactLandmarks}
				\EndIf
			\EndFor
			\If{$discardG$} \textbf{break} \Comment{\textit{Avoid computing achieved landmarks for $G$.}}
			\EndIf
			\State $\Lambda_{\mathcal{G}} \gets \Lambda_{\mathcal{G}} \cup \langle G,\left(\frac{\mid\mathcal{AL}_{G}\mid}{\mid\mathcal{L}_{G}\mid}\right)\rangle$ \Comment{\textit{Percentage of achieved landmarks for $G$.}} \label{alg:filter:ratio}
		\EndFor
		\State \textbf{return} {all $G$ s.t $\langle G, v \rangle$ $\in\Lambda_{\mathcal{G}}$ and $v \geq (\max_{v_i}{ \langle G',v_i \rangle \in \Lambda_{\mathcal{G}}})- \theta $}
		\label{alg:filter:filterGoals}
        \EndFunction
    \end{algorithmic}
\end{algorithm}
}
\newpage
\begin{example}\label{exemp:filterCandidateGoals}
Consider the \textsc{Blocks-World} example shown in {\normalfont Figure~\ref{fig:blocksExample}} and that the following actions have been observed in the plan execution: {\normalfont \pred{(unstack E A)}} and {\normalfont \pred{(stack E D)}}. 
Using $\theta=0$, {\normalfont Algorithm~\ref{alg:filterCandidateGoals}} returns just the goal {\normalfont \texttt{RED}} because this goal has achieved 6 out of 10 fact landmarks, so it is the goal in the set of candidate goals with the highest percentage of achieved landmarks in observations. 
From the first observed action {\normalfont \pred{(unstack E A)}}, the algorithm computes in its preconditions the following fact landmark:

{
\normalfont
\begin{itemize}
	\item \textit{fact landmarks in preconditions}: \pred{[(on E A) (clear E) \\ (handempty)]};
\end{itemize}
}
Subsequently, the second observed action {\normalfont \pred{(stack E D)}} has in its preconditions and effects the following fact landmarks:
{
\normalfont
\begin{itemize}
	\item \textit{fact landmarks in preconditions}: \pred{[(clear D) (holding E)]}; and
	\item \textit{fact landmarks in effects}: \pred{[(on E D)]} (\textit{which is also a sub-goal});
\end{itemize}
}
From the initial state, it is also possible to compute the following set of achieved fact landmarks:
{
\normalfont
\begin{itemize}
	\item \pred{[(clear R) (ontable R) (handempty)]};
	\item \pred{[(clear D) (on D B) (handempty)]};
	\item \pred{[(clear R)]} (\textit{which is also a sub-goal});
\end{itemize}
}

Thus, the estimated percentage of achieved fact landmarks for the goal {\normalfont \texttt{RED}} is 60\%, because it has achieved 6 out of 10 fact landmarks. Note that we consider sub-goals like, such as {\normalfont \pred{(clear R)}} and {\normalfont \pred{(on E D)}}, as independent fact landmarks. 
Although all sub-goals of a goal must be true together for achieving the goal, in our filtering method we use them separately to estimate the percentage of achieved landmarks.

By contrast, for goals {\normalfont \pred{BED}} and  {\normalfont \texttt{SAD}}, the observed actions allow the filtering method to conclude that, respectively, 5 out of 10 and 5 out of 11 fact landmarks have been achieved for these goals. 
Thus, the estimated percentage of achieved fact landmarks for the {\normalfont \texttt{BED}} is 50\%, and for {\normalfont \texttt{SAD}} is 45\%.
From the evidence of fact landmarks in observations {\normalfont \pred{(unstack E A)}} and {\normalfont \pred{(stack E D)}}, {\normalfont Figures~\ref{fig:RED-AchievedLandmarks}, \ref{fig:BED-AchievedLandmarks}}, and~{\normalfont \ref{fig:SAD-AchievedLandmarks}} show the achieved fact landmarks for the candidate goals {\normalfont \pred{RED}, \pred{BED},} and {\normalfont \pred{SAD}}. 
Boxes in dark gray denote fact landmarks that have been achieved in observations.	
\end{example}

\begin{figure*}[h!]
	\centering
	\includegraphics[width=0.85\linewidth]{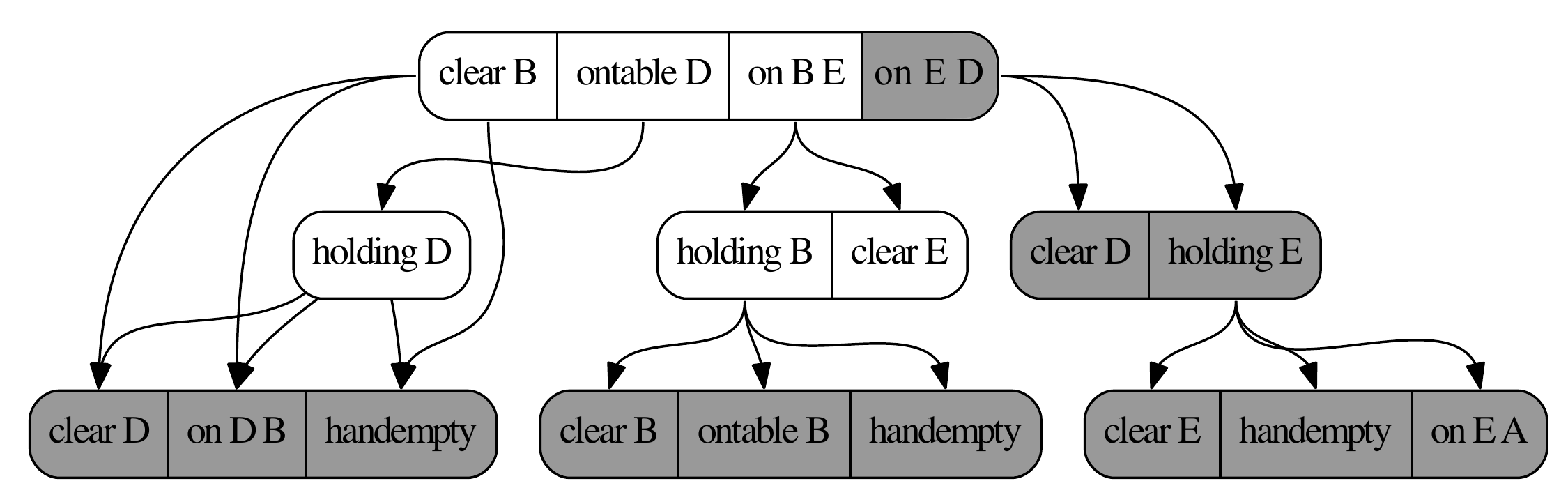}
	\caption{Fact landmarks for the word BED. Boxes in dark gray show achieved fact landmarks from the observed actions \pred{(unstack E A)} and \pred{(stack E D)}.}
	\label{fig:BED-AchievedLandmarks}
\end{figure*}

\begin{figure*}[h!]
	\centering
	\includegraphics[width=1\linewidth]{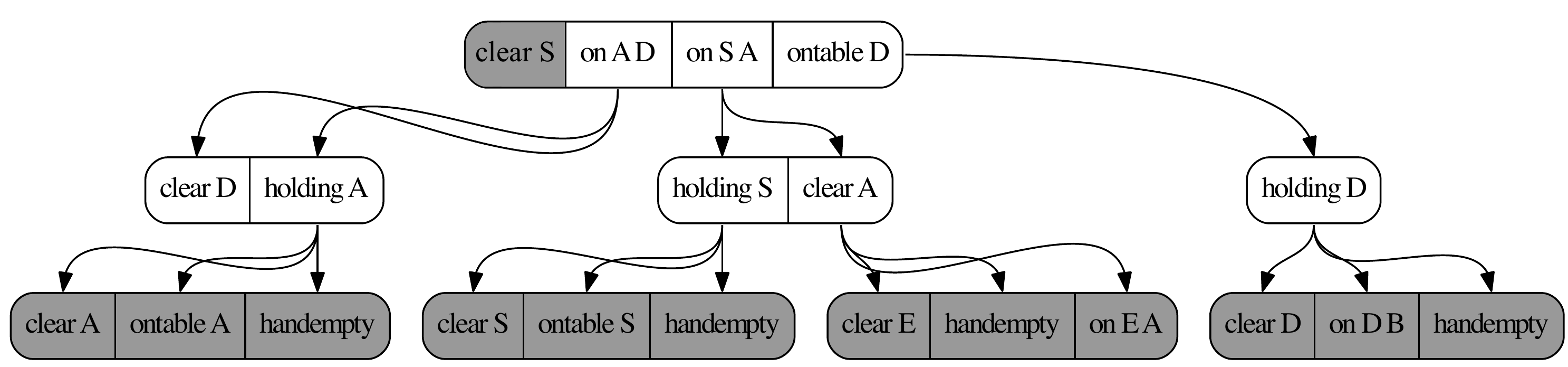}
	\caption{Fact landmarks for the word SAD. Boxes in dark gray show achieved fact landmarks from the observed actions \pred{(unstack E A)} and \pred{(stack E D)}.}
	\label{fig:SAD-AchievedLandmarks}
\end{figure*}

Example~\ref{exemp:filterCandidateGoals} does not show the  real impact of using the set of partition facts (Section~\ref{subsec:factparitioning}) in our filtering method. 
However, we argue that the evidence of such partitions in observations can immediately prune impossible candidate goals, avoiding the computation of achieved landmarks for such goal, improving the recognition time. 
We also show in Section~\ref{section:ExperimentsAndEvaluation} that our filtering method can be used with other planning-based goal and plan recognition approaches~\cite{RamirezG_IJCAI2009,RamirezG_AAAI2010,Sohrabi_IJCAI2016}, improving significantly the recognition time for all domains and problems, by reducing the number of calls to a planner (or heuristic).

The complexity of filtering candidate goals (Algorithm~\ref{alg:filterCandidateGoals}) in the worst case, including the process of extracting landmarks ($EL$) and fact partitions ($FP$) is: $O(|\mathcal{G}|\cdot EL \cdot FP \cdot |O|\cdot|\mathcal{L}_{\mathcal{G}}|)$,
where $\mathcal{G}$ is the set of candidate goals, $O$ is the observation sequence, and $\mathcal{L}_{\mathcal{G}}$ is the extracted landmarks for $\mathcal{G}$. Classifying facts into partitions is a simple iteration over the set of instantiated actions $\mathcal{A}$.

\begin{proposition}[\textbf{Soundness of the Goal Filtering Algorithm}]\label{pro:filter_correctness}
Let $T_{GR} = \langle\Xi, \mathcal{I}, \mathcal{G}, O\rangle$ be a goal recognition problem with candidate goals $\mathcal{G}$, a complete and noiseless observation sequence $O = \langle o_1, o_2, ..., o_n\rangle$. If $G^{*} \in \mathcal{G}$ is the correct hidden goal, then, for any landmark extraction algorithm that generates fact landmarks $\mathcal{L}_{\mathcal{G}}$ and computed landmarks $\Lambda_{\mathcal{G}}$, function \Call{FilterCandidateGoals}{$\Xi, \mathcal{I}, \mathcal{G}, O, \theta$} never filters out $G^{*}$ for any threshold $\theta$.
\end{proposition}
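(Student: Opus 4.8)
The plan is to show that the correct hidden goal $G^{*}$ always (i) survives the partition-based pruning of \textsc{FilterCandidateGoals} and (ii) attains the maximal possible landmark-completion ratio of $1$, so that its value $v_{G^{*}}$ equals the maximum over $\Lambda_{\mathcal{G}}$ and therefore satisfies $v_{G^{*}} \geq \max - \theta$ for every (non-negative) $\theta$, which is exactly the membership condition of the set returned in Line~\ref{alg:filter:filterGoals}. The starting observation is that, because $O$ is complete and noiseless, $O$ is precisely the action sequence of a valid plan $\pi$ (Definition~\ref{def:plan}) transforming $\mathcal{I}$ into a state satisfying $G^{*}$, with every action of $\pi$ appearing, in order, as an observation.

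The core step is a lemma stating that $\mathcal{AL}_{G^{*}} = \mathcal{L}_{G^{*}}$, i.e., every extracted fact landmark of $G^{*}$ is computed as achieved. I would argue this from Definition~\ref{def:planLandmark}: each fact landmark $L \in \mathcal{L}_{G^{*}}$ is true at some point along $\pi$. I then split on how $L$ becomes true. If $L$ already holds in $\mathcal{I}$, it is collected in $\mathcal{L}_{\mathcal{I}}$ and hence in $\mathcal{AL}_{G^{*}}$. Otherwise $L$ is established (or required) by some action of $\pi$; since $O$ is complete this action is observed as some $o$, so $L$ is captured by the precondition/positive-effect test in Line~\ref{alg:filter:identifyFactLandmarks}. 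The predecessor-closure in Line~\ref{alg:filter:predecessors} only adds further members of $\mathcal{L}_{G^{*}}$, so it never pushes $\mathcal{AL}_{G^{*}}$ outside $\mathcal{L}_{G^{*}}$; together with the deletion bookkeeping of Line~\ref{alg:filter:CheckAchievedAndDeletedLandmarks}, which must be shown not to permanently drop a landmark that is genuinely achieved along $\pi$, this yields $\mathcal{AL}_{G^{*}} = \mathcal{L}_{G^{*}}$ and hence a ratio $|\mathcal{AL}_{G^{*}}|/|\mathcal{L}_{G^{*}}| = 1$ in Line~\ref{alg:filter:ratio}.

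Next I would verify that $G^{*}$ is never discarded by the partition tests, so that it actually receives a value in $\Lambda_{\mathcal{G}}$. For the strictly activating test (Line~\ref{alg:filter:SA}), any strictly activating landmark of $G^{*}$ can never be added by an action (Definition~\ref{def:strictlyActivating}); since $\pi$ nevertheless achieves $G^{*}$, such a fact must already hold in $\mathcal{I}$, so the pruning condition cannot be met for $G^{*}$. For the unstable-activating and strictly-terminal test (Line~\ref{alg:filter:STandUA}), Definitions~\ref{def:unstableActivating}~and~\ref{def:strictlyTerminal} ensure that the only evidence firing this test is an observed action that destroys a landmark which can never be re-established; such an action cannot occur in a valid plan that actually reaches $G^{*}$, so $G^{*}$ again survives.

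Putting these together, $G^{*}$ is assigned $v_{G^{*}} = 1$, and since the completion ratio is bounded above by $1$, no candidate can exceed it; hence $v_{G^{*}} = \max_{\langle G', v_i\rangle \in \Lambda_{\mathcal{G}}} v_i$ and $v_{G^{*}} \geq \max - \theta$ for all $\theta \geq 0$, so Line~\ref{alg:filter:filterGoals} returns $G^{*}$. I expect the main obstacle to be the core lemma, specifically making rigorous the claim that the test of Line~\ref{alg:filter:identifyFactLandmarks} detects every achieved landmark even for conjunctive landmarks whose conjuncts may be established by different actions, and arguing that incorrect landmark orderings produced by the extraction algorithm cannot cause a required landmark of $G^{*}$ to be missed (they can only cause over-counting via predecessors, which is harmless since the ratio saturates at $1$).
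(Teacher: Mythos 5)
Your proof is correct, and while it shares the paper's overall skeleton (first show the fact-partition tests never discard $G^{*}$, then show $G^{*}$ survives the threshold test), the second half takes a genuinely different and arguably tighter route. The paper discharges that half by appeal to Theorem~\ref{thm:hgc_soundness}, i.e., it imports the ranking soundness of the $h_{gc}$ heuristic; you instead prove saturation directly against the filter's own score: $\mathcal{AL}_{G^{*}} = \mathcal{L}_{G^{*}}$ under complete, noiseless observations, hence $v_{G^{*}} = 1$, and since every candidate's ratio is bounded above by $1$, the membership condition $v_{G^{*}} \geq \max - \theta$ holds for all $\theta \geq 0$ regardless of ties. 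This buys two things the paper's delegation does not. First, Theorem~\ref{thm:hgc_soundness} carries the extra assumption $\forall G_1 \forall G_2 \in \mathcal{G}\,(G_1 \not\subset G_2)$, which Proposition~\ref{pro:filter_correctness} never states; your saturation argument needs no such hypothesis, because you only require $G^{*}$ to meet the threshold, not to rank strictly highest. Second, the filter's score in Line~\ref{alg:filter:ratio} is the aggregate ratio $|\mathcal{AL}_{G}|/|\mathcal{L}_{G}|$, not the per-subgoal aggregation of Equation~\ref{eq:heuristic}, so the paper's citation of Theorem~\ref{thm:hgc_soundness} is formally a slight mismatch that your direct argument avoids. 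Your handling of the partition tests (strictly activating facts of $G^{*}$ must hold in $\mathcal{I}$ since no action can add them; the unstable-activating/strictly-terminal test can only fire on an action that irrecoverably destroys a needed landmark, impossible on a valid plan to $G^{*}$) matches the paper's reasoning. Finally, you honestly flag the one point both arguments leave informal: that Line~\ref{alg:filter:identifyFactLandmarks} detects a conjunctive landmark only if all its conjuncts co-occur in a single observed action's preconditions or positive effects, which holds for the Hoffmann-style extractor used here (conjunctive landmarks arise as shared precondition sets) but is not self-evident for ``any landmark extraction algorithm'' as the proposition is phrased; the paper's proof glosses over this same subtlety via Theorem~\ref{thm:hgc_soundness}, so this is a shared informality rather than a gap specific to your proposal.
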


\begin{proof}\label{pf:filter_correctness}
	The proof of this proposition depends on two conditions: first that the reasoning performed over fact partitions never discards $G^{*}$; and second, that the ranking using the percentage of achieved landmarks always ranks $G^{*}$ highest (with possible ties). 
	
	The first property then relies on three conditions, namely that we never discard the true goal reasoning about: \textit{Strictly Activating} facts $F_{sa}$ (from Definition~\ref{def:strictlyActivating}), \textit{Unstable Activating} facts $F_{ua}$ (from Definition~\ref{def:unstableActivating}), and \textit{Strictly Terminal} facts $F_{st}$ (from Definition~\ref{def:strictlyTerminal}). 
	Since we only eliminate goals whose landmarks are \textit{Strictly Activating} ($F_{sa}$) that \textbf{are not} in the initial state $\mathcal{I}$, this condition only eliminates goals for which there is no possible plan from the initial state. 
	By Definition~\ref{def:goalRecognition}, $O$ must correspond to a plan from $\mathcal{I}$ that achieves $G^{*}$, so, if any landmark of $G^{*}$ is \textit{Strictly Activating}, it must be in $\mathcal{I}$. 
	Similarly, we only eliminate goals whose landmarks $\mathcal{L}$ are \textit{Unstable Activating} ($F_{ua}$) and \textit{Strictly Terminal} ($F_{st}$) if they are part of the preconditions or effects of the observations that occur before $l$ is needed (\idest, that they have become false throughout the plan before they were needed). 
	Since, landmarks $\mathcal{L}$ are necessary conditions, then any valid plan from $\mathcal{I}$ to $G^{*}$ must only delete $\mathcal{L}$ after they are needed, and thus only goals $G \in \mathcal{G}$ for which observation $O$ is not a valid plan can be discarded.
	The second property follows from Theorem~\ref{thm:hgc_soundness}.
\end{proof}

As a consequence of Proposition~\ref{pro:filter_correctness}, the filtering mechanism can do no worse than the goal recognition algorithm that uses the results of \textsc{FilterCandidateGoals} as candidate goals in full observability. 
Indeed the empirical results of Section~\ref{subsec:goalrecognitionResults} corroborate this theoretical result, as the accuracy for the filtered version of the Ram{\'{\i}}rez and Geffner~(\citeyear{RamirezG_IJCAI2009}) algorithm is strictly superior to the algorithm alone for full observability. 

\subsection{Landmark-Based Goal Completion Heuristic}\label{subsec:goalCompletionHeuristic}

We now describe a goal recognition heuristic that estimates the percentage of completion of a goal based on the number of landmarks that have been detected, and are required to achieve that goal~\cite{RamonNirMeneguzzi_AAAI2017}. 
This estimate represents the percentage of sub-goals in a goal that have been accomplished based on the evidence of achieved fact landmarks in the observations. 
We note that a candidate goal is composed of sub-goals comprised of the atomic facts that are part of a conjunction of facts in the goal definition.

Our heuristic method estimates the percentage of completion towards a goal by using the set of achieved fact landmarks computed by Algorithm~\ref{alg:ComputeAchievedLandmarks} (\textsc{ComputeAchievedLandmarks}).
More specifically, this heuristic operates by aggregating the percentage of completion of each sub-goal into an overall percentage of completion for all facts of a candidate goal. 
We denote this heuristic as $\mathit{h_{gc}}$, and it is formally defined by Equation~\ref{eq:heuristic}, where $\mathcal{AL}_{g}$ is the number of achieved landmarks from observations of every sub-goal $g$ of the candidate goal $G$ in $\mathcal{AL}_{G}$, and $\mathcal{L}_{g}$ represents the number of necessary landmarks to achieve every sub-goal $g$ of $G$ in $\mathcal{L}_{G}$.

\begin{equation}
\label{eq:heuristic}
h_{gc}(G, \mathcal{AL}_{G}, \mathcal{L}_{G}) = \left(\frac{\sum_{g \in G} \frac{|\mathcal{AL}_{g} \in \mathcal{AL}_{G} |}{|\mathcal{L}_{g} \in \mathcal{L}_{G}|}}{ |G| }\right)
\end{equation}

Thus, heuristic $\mathit{h_{gc}}$ estimates the completion of a goal $G$ by calculating the ratio between the sum of the percentage of completion for every sub-goal $g \in G$, \idest, $\sum_{g \in G} \frac{|\mathcal{AL}_{g} \in \mathcal{AL}_{G} |}{|\mathcal{L}_{g} \in \mathcal{L}_{G}|}$, and the size $|G|$ of the set of sub-goals, that is, the number of sub-goals in $G$. 
Algorithm~\ref{alg:RecognizeHeuristicCompletion} describes how to recognize goals using the $\mathit{h_{gc}}$ heuristic and takes as input a goal recognition problem $T_{GR}$, as well as a threshold value $\theta$. 
The $\theta$ threshold gives us flexibility to avoid eliminating candidate goals whose the percentage of goal completion are close to the highest completion value.
In Line~\ref{alg:Algo2:ExtractLandmarks}, the algorithm uses the \textsc{ExtractLandmarks} function to extract fact landmarks for all candidate goals. 
By taking as input the initial state $\mathcal{I}$, the observations $O$, and the extracted landmarks $\mathcal{L}_{\mathcal{G}}$, in Line~\ref{alg:Algo2:ComputeAchievedLandmarks}, our algorithm first computes the set of achieved landmarks $\Lambda_{\mathcal{G}}$ for every candidate goal using Algorithm~\ref{alg:ComputeAchievedLandmarks}. 
Finally, the algorithm uses the heuristic $\mathit{h_{gc}}$ to estimate goal completion for every candidate $G$ in $\mathcal{G}$, and as output (Line~\ref{alg:Algo2:ReturnHeuristicCompletion}), the algorithm returns those candidate goals with the highest estimated value within the threshold $\theta$. Example~\ref{exemp:goalCompletionHeuristic} shows how heuristic $\mathit{h_{gc}}$ estimates the completion of a candidate goal.

\floatname{algorithm}{Algorithm}
\begin{algorithm}[h!]
    \caption{Recognize Goals using the Goal Completion Heuristic $\mathit{h_{gc}}$.} 
    \textbf{Input:} $\Xi$ \textit{planning domain definition}, $\mathcal{I}$ \textit{initial state}, $\mathcal{G}$ \textit{set of candidate goals}, $O$ \textit{observations}, and $\theta$ \emph{threshold}.
    \\\textbf{Output:} \textit{Recognized goal(s).}
    \label{alg:RecognizeHeuristicCompletion}
    \begin{algorithmic}[1]
        \Function{Recognize}{$\Xi, \mathcal{I}, \mathcal{G}, O, \theta$}
        		\State $\mathcal{L}_{\mathcal{G}} \gets$ \textsc{ExtractLandmarks}($\Xi, \mathcal{I}, \mathcal{G}$)\label{alg:Algo2:ExtractLandmarks}
        		\State $\Lambda_{\mathcal{G}} \gets$ \textsc{ComputeAchievedLandmarks}($\mathcal{I}, \mathcal{G}, O, \mathcal{L}_{\mathcal{G}}$)\label{alg:Algo2:ComputeAchievedLandmarks}
        		\State $\mathit{maxh} \gets \displaystyle \max_{G' \in \mathcal{G}} h_{gc}(G',\Lambda_{\mathcal{G}}(G'), \mathcal{L}_{\mathcal{G}}(G'))$
        		\State \textbf{return} {all $G$ s.t $G \in \mathcal{G}$ and \newline {\phantom{return }} $\mathit{h_{gc}}(G, \Lambda_{\mathcal{G}}(G), \mathcal{L}_{\mathcal{G}}(G)) \geq (\mathit{maxh} - \theta)$}\label{alg:Algo2:ReturnHeuristicCompletion}
        \EndFunction
    \end{algorithmic}
\end{algorithm}

\newpage
\begin{example}\label{exemp:goalCompletionHeuristic}
As an example of how heuristic $\mathit{h_{gc}}$ estimates goal completion of a candidate goal, recall the \textsc{Blocks-World} example from {\normalfont Figure~\ref{fig:blocksExample}}. Consider that among these candidate goals {\normalfont (\pred{RED}, \pred{BED},} and {\normalfont \pred{SAD})} the correct hidden goal is {\normalfont \pred{RED}}, and we observe the following partial sequence of actions: {\normalfont \pred{(unstack E A)}} and {\normalfont \pred{(stack E D)}}. Thus, based on the achieved landmarks $\mathcal{AL}_{\texttt{RED}}$ computed using {\normalfont Algorithm~\ref{alg:ComputeAchievedLandmarks}} {\normalfont (Figure~\ref{fig:RED-AchievedLandmarks})}, our heuristic $\mathit{h_{gc}}$ estimates that the percentage of completion for the goal {\normalfont \texttt{RED}} is 0.66: {\normalfont \pred{(clear R)} = $\frac{1}{1}$ $+$ \pred{(on E D)} = $\frac{3}{3}$ $+$ \pred{(on R E)} = $\frac{1}{3}$ $+$ \pred{(ontable D)} = $\frac{1}{3}$}, and hence, $\frac{2.66}{4}$ = 0.66. For the words {\normalfont \pred{BED}} and {\normalfont \pred{SAD}} our heuristic $\mathit{h_{gc}}$ estimates respectively, 0.54 and 0.58.	
\end{example}

Besides extracting landmarks for every candidate goal ($EL$), our landmark-based goal completion approach iterates over the set of candidate goals $\mathcal{G}$, the observations sequence $O$, and the extracted landmarks $\mathcal{L}_{\mathcal{G}}$. 
The heuristic computation of $\mathit{h_{gc}}$ ($HC$) is linear on the number of fact landmarks. Thus, the complexity of this approach is: $O(EL + |\mathcal{G}|\cdot|O|\cdot|\mathcal{L}_{\mathcal{G}}| + HC)$. 
Finally, the goal ranking based on $\mathit{h_{gc}}$ always ensures (in full observability) that the correct goal ranks highest (\idest, it is sound), with possible ties, as stated in Theorem~\ref{thm:hgc_soundness}. 

\begin{theorem}[\textbf{Soundness of the $h_{gc}$ Goal Recognition Heuristic}]\label{thm:hgc_soundness}
Let $T_{GR} = \langle\Xi,\mathcal{I} ,\mathcal{G}, O\rangle$ be a goal recognition problem with candidate goals $\mathcal{G}$ such that $\forall{G_1}\forall{G_2}\in\mathcal{G} (G_1 \not\subset G_2)$, a complete and noiseless observation sequence $O = \langle o_1, o_2, ..., o_n\rangle$. 
If $G^{*} \in \mathcal{G}$ is the correct hidden goal, then, for any landmark extraction algorithm that generates fact landmarks $\mathcal{L}_{\mathcal{G}}$ and computed landmarks $\Lambda_{\mathcal{G}}$, the estimated value of $\mathit{h_{gc}}$ will always be highest for the correct hidden goal $G^{*}$, \idest, $\forall G \in \mathcal{G}$ it is the case that $\mathit{h_{gc}}(G^{*}, \Lambda_{\mathcal{G}}(G^{*}), \mathcal{L}_{\mathcal{G}}(G^{*})) \geq \mathit{h_{gc}}(G, \Lambda_{\mathcal{G}}(G), \mathcal{L}_{\mathcal{G}}(G)) $.
\end{theorem}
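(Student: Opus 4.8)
The plan is to establish two bounds and combine them: that $h_{gc}$ attains its maximum possible value on the true goal, $h_{gc}(G^{*}, \Lambda_{\mathcal{G}}(G^{*}), \mathcal{L}_{\mathcal{G}}(G^{*})) = 1$, and that no candidate goal can exceed the value $1$. Together these immediately yield $h_{gc}(G^{*}, \ldots) \geq h_{gc}(G, \ldots)$ for every $G \in \mathcal{G}$, with ties permitted exactly as the statement allows.

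For the upper bound I would first note that $h_{gc}(G)$ is an average over the sub-goals $g \in G$ of the per-sub-goal ratios $\frac{|\mathcal{AL}_{g}|}{|\mathcal{L}_{g}|}$. Since the achieved landmarks $\mathcal{AL}_{g}$ detected by Algorithm~\ref{alg:ComputeAchievedLandmarks} are, by construction, a subset of the extracted landmarks $\mathcal{L}_{g}$ for the sub-goal $g$, each such ratio lies in $[0,1]$, and therefore $h_{gc}(G) \leq 1$ for every candidate goal $G$. This direction is routine.

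The substantive step is showing $h_{gc}(G^{*}) = 1$, i.e.\ that every extracted landmark of every sub-goal of $G^{*}$ is marked as achieved. Here I would exploit that $O$ is complete and noiseless: by Definition~\ref{def:goalRecognition} the observations are exactly the action sequence of a valid plan $\pi$ that transforms $\mathcal{I}$ into a state satisfying $G^{*}$. By Definition~\ref{def:planLandmark}, each extracted landmark of $G^{*}$ must be true at some point along \emph{every} valid plan for $G^{*}$, hence along $\pi$. Such a landmark is therefore either already present in the initial state (captured by $\mathcal{L_{\mathcal{I}}}$ in Line~\ref{alg:line:CheckAchievedLandmarksInitialState}) or made true by the positive effect of some action of $\pi$; because a complete $O$ contains every action of $\pi$, Line~\ref{alg:line:CheckAchievedLandmarks} detects it, and any unobserved predecessors are added in Line~\ref{alg:line:CheckAchievedLandmarksPredecessors}. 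Consequently $\mathcal{AL}_{g} = \mathcal{L}_{g}$ for each sub-goal $g \in G^{*}$, every ratio equals $1$, and the average over sub-goals is $1$.

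The main obstacle I anticipate is the argument that completeness of $O$ forces detection of \emph{all} landmarks of $G^{*}$, and in particular the treatment of conjunctive landmarks: such a landmark requires several facts to hold simultaneously, so I must verify that the fact-by-fact scan of preconditions and effects in Algorithm~\ref{alg:ComputeAchievedLandmarks}, combined with predecessor inference along the landmark ordering, still recovers it rather than silently dropping it. The hypothesis $\forall G_1 \forall G_2 \in \mathcal{G}\,(G_1 \not\subset G_2)$ is what keeps the per-sub-goal decomposition non-degenerate: it rules out the situation in which $G^{*}$ coincides with a strict subset or superset of another candidate, so that attaining completion $1$ on $G^{*}$ is a genuine maximum over a proper set of sub-goals rather than an artefact of a collapsed goal. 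With the detection argument settled and the averaging well-posed, the two bounds combine to give $h_{gc}(G^{*}) = 1 \geq h_{gc}(G)$ for all $G \in \mathcal{G}$, as required.
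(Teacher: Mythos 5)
Your proposal is correct, and its central step is the same as the paper's: under a complete, noiseless observation sequence every landmark of $G^{*}$ is achieved, so $\Lambda_{\mathcal{G}}(G^{*})$ coincides with $\mathcal{L}_{\mathcal{G}}(G^{*})$ and $h_{gc}(G^{*})=1$. Where you genuinely diverge is the second half. The paper does not settle for the trivial bound $h_{gc}(G)\leq 1$: it temporarily assumes that candidate goals pairwise differ and that no single action achieves facts of two goals simultaneously, observes that every fact $g\in G$ is itself a landmark of $G$, and concludes that each competitor $G\neq G^{*}$ is missing at least one distinguishing goal fact from its numerator, yielding the strict inequality $h_{gc}(G)<1$; it then drops the auxiliary assumptions and retreats to possible ties. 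Your route --- $h_{gc}(G^{*})=1$ combined with $\mathcal{AL}_{g}\subseteq\mathcal{L}_{g}$ giving $h_{gc}(G)\leq 1$ --- is more elementary and proves exactly the stated non-strict inequality with none of those side assumptions; notably, under your argument the hypothesis $\forall G_1\forall G_2\,(G_1\not\subset G_2)$ is never actually invoked (with full observations a subgoal merely ties at $1$ rather than overtakes, which is the phenomenon behind Corollary~\ref{cor:thereCanBeOnly1}), whereas the paper carries it because its strictness argument and the surrounding ranking discussion need it. What the paper's longer detour buys is the stronger qualitative conclusion that $G^{*}$ is \emph{uniquely} top-ranked except when goals share facts or share achieving actions. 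Finally, the obstacle you flag about conjunctive landmarks is real but is closed by Line~\ref{alg:line:CheckAchievedLandmarksPredecessors} of Algorithm~\ref{alg:ComputeAchievedLandmarks}: each sub-goal fact of $G^{*}$ is eventually added by some observed action (or holds in $\mathcal{I}$), and once that terminal landmark is matched, all of its predecessors --- including conjunctive landmarks that never appear wholesale in the preconditions or effects of any single observed action --- are inferred as achieved; the paper's own proof simply asserts the full-detection claim without spelling this out, so your treatment is no less rigorous on that point.
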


\begin{proof}\label{pf:hgc_soundness}
The proof is straightforward from the definition of fact landmarks ensuring they are necessary conditions to achieve a goal $G$ and that all facts $g \in G$ are necessary. 
Let us first assume that any pair of goals $G_1,G_2 \in \mathcal{G}$ are different, \idest, $G_1 \cap G_2 \neq \emptyset$, and that no action $a$ in the domain $\Xi$ achieves facts that are in any pair of goals simultaneously. 
Since any landmark extraction algorithm includes all facts $g \in G_1$ as landmarks for a goal $G_1$, then, for every other goal $G_2$, there exists at least one fact $g$ such that $g \in G_1 \land g \not\in G_2$ that sets it apart from  $G_2$. 
Under these circumstances, an observation sequence $O$ for the correct goal $G^{*}$ will have achieved a set of landmarks $\Lambda_{\mathcal{G}}(G^{*})$ that is exactly the same as the complete computed set of landmarks $\mathcal{L}_{\mathcal{G}}(G^{*})$ for $G^{*}$. 
Hence $\mathit{h_{gc}}(G^{*}, \Lambda_{\mathcal{G}}(G^{*}), \mathcal{L}_{\mathcal{G}}(G^{*})) = 1$, and $\mathit{h_{gc}}(G, \Lambda_{\mathcal{G}}(G), \mathcal{L}_{\mathcal{G}}(G)) < 1$ for any other goal $G \in \mathcal{G}$, since the numerator of the $\mathit{h_{gc}}$ computation will be missing fact $g$ for $G$ as $g$ is not a landmark of $G$. 
If we drop the assumption about the actions not achieving facts simultaneously in any pair of goals or that goals are identical, it is possible that $\mathit{h_{gc}}(G^{*}, \Lambda_{\mathcal{G}}(G^{*}), \mathcal{L}_{\mathcal{G}}(G^{*})) = \mathit{h_{gc}}(G, \Lambda_{\mathcal{G}}(G), \mathcal{L}_{\mathcal{G}}(G)) = 1$, which still ensures that the under $\mathit{h_{gc}}$, $G^{*}$ always ranks at the top, possibly tied with other goals.
\end{proof}

Thus, our goal completion heuristic is sound under full observability in the sense that it can never rank the wrong goal higher than the correct goal when we observe the landmarks. 
We note that there is one specific case when our landmark approach can provide wrong rankings, but which we explicitly exclude from the theorem, which is when the set of candidate goals contains two goals such that one is a sub-goal of the other (\idest, $G_1, G_2 \in\mathcal{G} (G_1 \subset G_2)$). 
In this case, any kind of ``distance'' to goal metric will report $G_1$ as being more likely than $G_2$ until the observations take the observed agent past $G_1$ and closer to $G_2$ than $G_1$. 
We close this section by commenting on the effect of landmark orderings in the accuracy of the heuristic. 
Specifically, although we do use the landmark order to infer the achievement of necessary prior landmarks that were not observed in partially observable environments, our heuristic itself does not consider the actual ordering of the heuristics. 
We infer prior landmarks to obtain more landmarks when we deal with partial observability.
Nevertheless, we have experimented with different scoring mechanisms to account for landmarks having been observed in the expected order or not, and these showed almost no advantage over the current heuristic. 
Consequently, although there are various different algorithms that generate better landmark orderings~\cite{Hoffmann2004_OrderedLandmarks}, the way in which we use the landmarks does not seem to be affected by more or less accurate landmark orderings. 

There are two additional properties provable for our $h_{gc}$ heuristic, first, given how our heuristic accounts for landmarks, the value of this heuristic is strictly increasing. 

\begin{proposition}[\bf Monotonicity of $h_{gc}$] The value of $h_{gc}$ is monotonically (non-strictly) increasing in the observation sequence.
\end{proposition}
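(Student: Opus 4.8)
The plan is to reduce the statement to two monotonicity observations: one about how the achieved-landmark set evolves as observations accrue, and one about how $h_{gc}$ depends on that set. First I would fix an observation sequence $O = \langle o_1, \ldots, o_n\rangle$ together with its one-step extension $O' = \langle o_1, \ldots, o_n, o_{n+1}\rangle$, and write $\mathcal{AL}_G(O)$ and $\mathcal{AL}_G(O')$ for the achieved-landmark sets that Algorithm~\ref{alg:ComputeAchievedLandmarks} returns for a fixed candidate goal $G$. The key structural fact, read directly off the algorithm, is that the achieved-landmark set is built purely by union: Line~\ref{alg:line:AddAchievedLandmarksGoal} performs $\mathcal{AL}_G \gets \mathcal{AL}_G \cup \{\cdots\}$ on every iteration and no step ever removes an element from $\mathcal{AL}_G$. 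Consequently the loop over observations only accumulates landmarks, so running one additional iteration for $o_{n+1}$ can only enlarge the set, giving $\mathcal{AL}_G(O) \subseteq \mathcal{AL}_G(O')$. By induction on the length of the observation prefix, $\mathcal{AL}_G$ is monotonically non-decreasing with respect to set inclusion along the observation sequence.

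Next I would argue that $h_{gc}$ is non-decreasing in its achieved-landmark argument while its landmark set $\mathcal{L}_G$ is held fixed. The set $\mathcal{L}_G$ is extracted once per goal recognition problem from the planning instance and does not depend on $O$; hence the denominators $|\mathcal{L}_g|$ and the outer divisor $|G|$ in Equation~\ref{eq:heuristic} are constants across the sequence. For each sub-goal $g \in G$, the numerator counts exactly those landmarks of $g$ that have been achieved, \idest, that lie in $\mathcal{AL}_G$; since $\mathcal{AL}_G(O) \subseteq \mathcal{AL}_G(O')$, this count is non-decreasing, so each summand $\frac{|\mathcal{AL}_g|}{|\mathcal{L}_g|}$ is non-decreasing. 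A sum of non-decreasing non-negative terms divided by a positive constant is non-decreasing, which yields $h_{gc}(G, \mathcal{AL}_G(O), \mathcal{L}_G) \leq h_{gc}(G, \mathcal{AL}_G(O'), \mathcal{L}_G)$.

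Combining the two observations establishes the proposition: extending the observation sequence enlarges $\mathcal{AL}_G$, and enlarging $\mathcal{AL}_G$ cannot decrease $h_{gc}$. I expect the only point requiring care to be the justification that Algorithm~\ref{alg:ComputeAchievedLandmarks} is genuinely monotone in its accumulation, in particular contrasting it with the filtering routine of Algorithm~\ref{alg:filterCandidateGoals}, whose Line~\ref{alg:filter:CheckAchievedAndDeletedLandmarks} \emph{does} remove landmarks deleted by an observed action. The present proposition relies crucially on \textsc{ComputeAchievedLandmarks} containing no such deletion step, so the argument is specific to the heuristic pipeline and would not transfer verbatim to the filtering algorithm. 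Everything else is a routine verification that the per-sub-goal ratios and their average inherit monotonicity from set inclusion.
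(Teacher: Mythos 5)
Your proof is correct and takes essentially the same route as the paper's, which likewise argues that $\mathcal{AL}_{G}$ is monotonically increasing while all other quantities in Equation~\ref{eq:heuristic} remain constant, so $h_{gc}$ cannot decrease. Your version simply fills in what the paper asserts in one line---verifying the union-only accumulation in Algorithm~\ref{alg:ComputeAchievedLandmarks} and usefully contrasting it with the deletion step in the filtering routine---so it is a faithful, more detailed rendering of the same argument.
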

\begin{proof}
By definition, $\mathcal{AL}_G$ is monotonically increasing, while all other values in $h_{gc}$ remain constant. 
Therefore from Equation \ref{eq:heuristic}, it is clear that $h_{gc}$ must increase.
\end{proof}

Further, a corollary of Theorem~\ref{thm:hgc_soundness} is that, under full observation, only the correct goal can reach a heuristic value of $1$. 
This also illustrates why we restrict our theorems to settings where candidate goals are not subgoals of each other. 
Consider a goal to be at position $d$, and another to be at position $g$, with landmarks $a,b,c,d,e,f,g$, since $d$ itself is a landmark of $g$, $d$ is implicitly a subgoal of $g$. 
If we observe all landmarks in an observation, then $h_{gc}(d)=\frac{4}{4}=1$, and $h_{gc}(g)=\frac{7}{7}=1$, which leads to Corollary~\ref{cor:thereCanBeOnly1}. 

\begin{corollary}\label{cor:thereCanBeOnly1} If the goal being monitored has no subgoals being monitored under full observability, then $h_{gc}=1$ iff the goal the heuristic is monitoring has been achieved.
\end{corollary}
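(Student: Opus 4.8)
The plan is to reduce the biconditional to a simple characterisation of when the summands of $h_{gc}$ each attain their maximum. Starting from Equation~\ref{eq:heuristic}, I would first observe that every computed set of achieved landmarks is a subset of the required landmarks for the corresponding sub-goal, i.e.\ $\mathcal{AL}_{g} \subseteq \mathcal{L}_{g}$ for each $g \in G$, since \textsc{ComputeAchievedLandmarks} (Algorithm~\ref{alg:ComputeAchievedLandmarks}) only ever adds genuine landmarks of $G$ (and their predecessors) to $\mathcal{AL}_{G}$. Consequently each ratio $|\mathcal{AL}_{g}|/|\mathcal{L}_{g}|$ lies in $[0,1]$, so the normalised sum satisfies $h_{gc}(G) \in [0,1]$ and, crucially, $h_{gc}(G) = 1$ if and only if every summand equals $1$, that is, if and only if $\mathcal{AL}_{g} = \mathcal{L}_{g}$ for all $g \in G$. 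This reduces both directions to reasoning about whether all landmarks of every sub-goal have been detected.

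For the ($\Leftarrow$) direction I would argue exactly as in the first half of the proof of Theorem~\ref{thm:hgc_soundness}. If the monitored goal $G$ has been achieved, then some prefix of the observed plan is itself a valid plan from $\mathcal{I}$ to a state satisfying $G$; by Definition~\ref{def:planLandmark} every fact landmark in $\mathcal{L}_{G}$ must be true at some point along that prefix. Under full observability no observation is dropped, so \textsc{ComputeAchievedLandmarks} detects each such landmark and hence $\mathcal{AL}_{g} = \mathcal{L}_{g}$ for every sub-goal $g$, giving $h_{gc}(G) = 1$ by the characterisation above.

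The ($\Rightarrow$) direction is where the no-subgoal hypothesis does the real work, and it is the step I expect to be the main obstacle. From $h_{gc}(G) = 1$ the characterisation yields $\mathcal{AL}_{g} = \mathcal{L}_{g}$ for every $g \in G$; in particular each goal fact $g \in G$, being trivially a landmark of $G$, has been observed to hold. I would then need to rule out the degenerate situation highlighted immediately before the corollary, in which all of $G$'s facts are themselves landmarks of some other monitored goal and are therefore achieved merely \emph{en route} to that larger goal rather than because $G$ is the goal actually reached (the $h_{gc}(d) = \tfrac{4}{4} = 1$ case of the example). The hypothesis that $G$ is monitored in isolation from any such sub-goal relationship is precisely what excludes this case, so that the simultaneous achievement of all facts of $G$ can only be explained by $G$ itself having been reached. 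I would close by invoking Theorem~\ref{thm:hgc_soundness}, whose proof already establishes that, absent such subset relations among candidates, attaining the maximal value $1$ singles out the achieved goal; the corollary is then the specialisation of that statement to a single monitored goal.
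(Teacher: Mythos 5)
Your proposal is correct and takes essentially the same route as the paper's own proof: both reduce $h_{gc}=1$ to the condition that every summand attains its maximum, i.e.\ $|\mathcal{AL}_{g}| = |\mathcal{L}_{g}|$ for all $g \in G$, obtain the backward direction from the achieved goal making all its landmarks true (hence detected under full observability), and invoke the no-monitored-subgoal hypothesis to exclude the lone failure mode of the forward direction. Your write-up is simply a more explicit rendering of the paper's two-line argument, including the appeal to Theorem~\ref{thm:hgc_soundness} that the paper leaves implicit.
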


\begin{proof}
$h_{gc}=1$ when $\frac{\sum_g \frac{|\mathcal{AL}_G|}{\mathcal{L}}}{|G|}=1$, which can only occur when $|\mathcal{AL}_g|=|\mathcal{L}_g|$ for all $g \in G$. This clearly occurs when the goal being monitored is achieved. However, if the heuristic is also monitoring a subgoal, then this condition can be satisfied for the subgoal, hence the exception in the proposition.
\end{proof}

\subsection{Landmark-Based Uniqueness Heuristic}
\label{subsec:uniquenessHeuristic}

Many goal recognition problems contain multiple candidate goals that share common fact landmarks, generating ambiguity for our previous approaches. 
Clearly, landmarks that are common to multiple candidate goals are less useful for recognizing a goal than landmarks that exist for only a single goal. 
As a consequence, computing how unique (and thus informative) each landmark is can help disambiguate similar goals for a set of candidate goals. 
We now develop a second goal recognition heuristic based on this intuition. 
To develop this heuristic, we introduce the concept of \textit{landmark uniqueness}, which is the inverse frequency of a landmark among the landmarks found in a set of candidate goals~\cite{RamonNirMeneguzzi_AAAI2017}. 
For example, consider a landmark $L$ that occurs only for a single goal within a set of candidate goals; the uniqueness value for such a landmark is intuitively the maximum value of 1. 
Equation~\ref{eq:LandmarksUniqueness} formalizes this intuition, describing how the \textit{landmark uniqueness value} is computed for a landmark $L$ and a set of landmarks for goals $\mathcal{L}_{\mathcal{G}}$.

Using the \textit{landmark uniqueness value}, we estimate which candidate goal is the intended one by summing the uniqueness values of the landmarks achieved in the observations. 
Unlike our previous heuristic, which estimates progress towards goal completion by analyzing sub-goals and their achieved landmarks, the landmark-based uniqueness heuristic estimates the goal completion of a candidate goal $G$ by calculating the ratio between the sum of the uniqueness value of the achieved landmarks of $G$ and the sum of the uniqueness value of all landmarks of $G$. 
This algorithm effectively weighs the completion value by the informational value of a landmark so that unique landmarks have the highest weight. 
To estimate goal completion using the landmark uniqueness value, we calculate the uniqueness value for every extracted landmark in the set of landmarks of the candidate goals using Equation~\ref{eq:LandmarksUniqueness}.
This computes the landmark uniqueness value of every landmark $L$ of $\mathcal{L}_{\mathcal{G}}$ and store it into $\Upsilon_{uv}$. 
This heuristic is denoted as $\mathit{h_{uniq}}$ and formally defined in Equation~\ref{eq:HeuristicLandmarksUniqueness}.

\begin{equation}
\label{eq:LandmarksUniqueness}
L_{\mathit{Uniq}}(L, \mathcal{L}_{\mathcal{G}}) = \left(\frac{1}{\displaystyle\sum_{\mathcal{L} \in \mathcal{L_G}} |\{L |L \in \mathcal{L}\}|}\right)
\end{equation}
\vspace{2mm}
\begin{equation}
\label{eq:HeuristicLandmarksUniqueness}
h_{\mathit{uniq}}(G, \mathcal{AL}_{G}, \mathcal{L}_{G}, \Upsilon_{uv}) = \left(
\frac
{\displaystyle\sum_{\mathcal{A}_{L} \in \mathcal{AL}_{G}}\Upsilon_{uv}(\mathcal{A}_{L})}
{\displaystyle\sum_{L \in \mathcal{L}_{G}}\Upsilon_{uv}(L)}\right)
\end{equation}

Algorithm~\ref{alg:RecognizeHeuristicUniqueness} formalizes a goal recognition function that uses the $\mathit{h_{uniq}}$ heuristic. 
This algorithm takes as input the same parameters as the previous approach: a goal recognition problem and a threshold $\theta$. 
Like Algorithm~\ref{alg:ComputeAchievedLandmarks}, this algorithm extracts the set of landmarks for all candidate goals from the initial state $\mathcal{I}$, stores them in $\mathcal{L}_{\mathcal{G}}$ (Line~\ref{alg:Algo3:ExtractLandmarks}), and computes the set of achieved landmarks based on the observations, storing these in $\Lambda_{\mathcal{G}}$. 
Unlike Algorithm~\ref{alg:RecognizeHeuristicCompletion}, in Line~\ref{alg:Algo3:ComputeLandmarkUniquenessValue} this algorithm computes the landmark uniqueness value for every landmark $L$ in $\mathcal{L}_{\mathcal{G}}$ and stores it into $\Upsilon_{uv}$. 
Finally, using these computed structures, the algorithm recognizes which candidate goal is being pursued from observations using the heuristic $\mathit{h_{uniq}}$, returning those candidate goals with the highest estimated value within the $\theta$ threshold. Example~\ref{exemp:uniquenessHeuristic} shows how heuristic $\mathit{h_{uniq}}$ uses the concept of landmark uniqueness value to goal recognition.

\floatname{algorithm}{Algorithm}
\begin{algorithm}[h!]
    \caption{Recognize Goals using the Landmark Uniqueness Heuristic $\mathit{h_{uniq}}$.} 
    \textbf{Input:} $\Xi$ \textit{planning domain definition}, $\mathcal{I}$ \textit{initial state}, $\mathcal{G}$ \textit{set of candidate goals}, $O$ \textit{observations}, and $\theta$ \emph{threshold}.
    \\\textbf{Output:} \textit{Recognized goal(s).}
    \label{alg:RecognizeHeuristicUniqueness}
    \begin{algorithmic}[1]
        \Function{Recognize}{$\Xi, \mathcal{I}, \mathcal{G}, O, \theta$}
        		\State $\mathcal{L}_{\mathcal{G}} \gets$ \textsc{ExtractLandmarks}($\Xi, \mathcal{I}, \mathcal{G}$)\label{alg:Algo3:ExtractLandmarks}
				\State $\Lambda_{\mathcal{G}} \gets$ \textsc{ComputeAchievedLandmarks}($\mathcal{I},\mathcal{G},O,\mathcal{L}_{\mathcal{G}}$)\label{alg:Algo3:ComputeAchievedLAndmarks}
        		\State $\Upsilon_{uv} \gets \langle\rangle$ \Comment{\textit{Map of landmarks to their uniqueness value}.}
        		\For{\textbf{each} fact landmark $L$ in $\mathcal{L}_{\mathcal{G}}$}
        			\State $\Upsilon_{uv}(L) \gets L_{Uniq}(L,\mathcal{L}_{\mathcal{G}})$ \label{alg:Algo3:ComputeLandmarkUniquenessValue}
        		\EndFor
			\State $\mathit{maxh} \gets \displaystyle \max_{G' \in \mathcal{G}} h_{uniq}(G',\Lambda_{\mathcal{G}}(G'), \mathcal{L}_{\mathcal{G}}(G'), \Upsilon_{uv})$
			\State \textbf{return} {all $G$ s.t $G \in \mathcal{G}$ and \newline {\phantom{return }} $h_{uniq}(G, \Lambda_{\mathcal{G}}(G), \mathcal{L}_{\mathcal{G}}(G),\Upsilon_{uv}) \geq (\mathit{maxh} - \theta)$}
        \EndFunction
    \end{algorithmic}
\end{algorithm}

\begin{example}\label{exemp:uniquenessHeuristic}
Recall the \textsc{Blocks-World} example from {\normalfont Figure~\ref{fig:blocksExample}} consider the following observed actions: {\normalfont \pred{(unstack E A)}} and {\normalfont \pred{(stack E D)}}. 
{\normalfont Listing~\ref{lst:FactLandmarksUniquenessValue}} shows the set of extracted fact landmarks for the candidate goals in the \textsc{Blocks-World} example and their respective uniqueness value. 
Based on the set of achieved landmarks (shown in bold in {\normalfont Listing~\ref{lst:FactLandmarksUniquenessValue}}), our heuristic $\mathit{h_{uniq}}$ estimates the following percentage for each candidate goal: $\mathit{h_{uniq}}${\normalfont (\pred{RED})} = $\frac{3.66}{6.33}$ = 0.58; $\mathit{h_{uniq}}${\normalfont (\pred{BED})} = $\frac{2.66}{6.33}$ = 0.42; and $\mathit{h_{uniq}}${\normalfont (\pred{SAD})} = $\frac{3.66}{8.33}$ = 0.44. 
In this case, {\normalfont Algorithm~\ref{alg:RecognizeHeuristicUniqueness}} correctly estimates {\normalfont \pred{RED}} to be the intended goal since it has the highest heuristic value.	
\end{example}

\begin{lstlisting}[float=!t,caption={Extracted fact landmarks for the \textsc{Blocks-World} example in Figure~\ref{fig:blocksExample} and their respective uniqueness value.},label={lst:FactLandmarksUniquenessValue},basicstyle=\ttfamily\footnotesize]
- (and (clear B) (on B E) (on E D) (ontable D)) = 6.33
  (*\bfseries[(on E D)] = 0.5*), (*\bfseries[(clear D) (holding E)] = 0.5*),
  (*\bfseries[(on E A) (clear E) (handempty)] = 0.33*), [(ontable D)] = 0.33,
  (*\bfseries[(on D B) (clear D) (handempty)] = 0.33*), [(holding D)] = 0.33,
  (*\bfseries[(clear B) (ontable B) (handempty)] = 1.0*), [(on B E)] = 1.0,
  [(clear B)] = 1.0, [(clear E) (holding B)] = 1.0

- (and (clear S) (on S A) (on A D) (ontable D)) = 8.33
  (*\bfseries[(clear S)] = 1.0*), [(on A D)] = 1.0, [(on S A)] = 1.0,
  (*\bfseries[(clear A) (ontable A) (handempty)] = 1.0*), [(ontable D)] = 0.33,
  (*\bfseries[(clear S) (ontable S) (handempty)] = 1.0*), [(holding D)] = 0.33,
  (*\bfseries[(on E A) (clear E) (handempty)] = 0.33*),
  (*\bfseries[(on D B) (clear D) (handempty)] = 0.33*),
  [(clear A) (holding S)] = 1.0, [(clear D) (holding A)] = 1.0

- (and (clear R) (on R E) (on E D) (ontable D)) = 6.33
  (*\bfseries[(clear R)] = 1.0*), (*\bfseries[(clear R) (ontable R) (handempty)] = 1.0*),
  (*\bfseries[(clear D) (holding E)] = 0.5*), (*\bfseries[(on E D)] = 0.5*),
  (*\bfseries[(on E A) (clear E) (handempty)] = 0.33*), [(ontable D)] = 0.33,
  (*\bfseries[(on D B) (clear D) (handempty)] = 0.33*), [(holding D)] = 0.33,
  [(on R E)] = 1.0, [(clear E) (holding R)] = 1.0
\end{lstlisting}

Similar to our landmark-based goal completion approach, this approach iterates over the set of candidate goals $\mathcal{G}$, the observations sequence $O$, and the extracted landmarks $\mathcal{L}_{\mathcal{G}}$. 
However, for this approach we compute the uniqueness value ($CLUniq$) for every extracted landmarks, which is linear on the number of landmarks.  
The heuristic computation of $\mathit{h_{uniq}}$ ($HC$) is also linear on the number of fact landmarks. 
Thus, the complexity of this approach is: $O(EL + |\mathcal{G}|\cdot|O|\cdot|\mathcal{L}_{\mathcal{G}}| + CLUniq + HC)$.
Finally, since this is just a weighted version of the $\mathit{h_{gc}}$ heuristic, it follows trivially from Theorem~\ref{thm:hgc_soundness} that, for full observations, $\mathit{h_{uniq}}$ always ranks the correct goal $G^{*}$ highest.

\begin{corollary}[\textbf{Correctness of $\mathit{h_{uniq}}$ Goal Recognition Heuristic}]\label{cor:huniq_correct}
Let $T_{GR} = \langle\Xi,\mathcal{I} ,\mathcal{G}, O\rangle$ be a goal recognition problem with candidate goals $G \in \mathcal{G}$, a complete and noiseless observation sequence $O = \langle o_1, o_2, ..., o_n\rangle$. If $G^{*} \in \mathcal{G}$ is the correct goal, then, for any landmark extraction algorithm that generates fact landmarks $\mathcal{L}_{\mathcal{G}}$ and computed landmarks $\Lambda_{\mathcal{G}}$, the estimated value of $\mathit{h_{uniq}}$ will always be highest for the correct goal $G^{*}$, \idest, $\forall G \in \mathcal{G} \left(\mathit{h_{uniq}}(G^{*}, \Lambda_{\mathcal{G}}(G^{*}), \mathcal{L}_{\mathcal{G}}(G^{*})) \geq \mathit{h_{uniq}}(G, \Lambda_{\mathcal{G}}(G), \mathcal{L}_{\mathcal{G}}(G)) \right)$.
\end{corollary}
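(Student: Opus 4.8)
The plan is to mirror the argument for Theorem~\ref{thm:hgc_soundness} but to work directly with the weighted ratio of Equation~\ref{eq:HeuristicLandmarksUniqueness}, exploiting the single observation that full observability forces every landmark of the correct goal to be achieved. First I would establish that $h_{uniq}(G^{*}) = 1$. As argued in the proof of Theorem~\ref{thm:hgc_soundness}, since the observation sequence $O$ is complete and noiseless and corresponds to a valid plan that reaches $G^{*}$, and since by Definition~\ref{def:planLandmark} every fact landmark is a necessary condition along that plan, the set of achieved landmarks computed by \textsc{ComputeAchievedLandmarks} satisfies $\Lambda_{\mathcal{G}}(G^{*}) = \mathcal{L}_{\mathcal{G}}(G^{*})$; that is, $\mathcal{AL}_{G^{*}} = \mathcal{L}_{G^{*}}$. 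The numerator and denominator of Equation~\ref{eq:HeuristicLandmarksUniqueness} then sum the uniqueness value $\Upsilon_{uv}$ over exactly the same set of landmarks, so the ratio equals $1$ regardless of the particular weights.

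The second step is to show that no other candidate goal can exceed this value. For an arbitrary $G \in \mathcal{G}$, the achieved landmarks are by construction a subset of the goal's landmarks, $\mathcal{AL}_{G} \subseteq \mathcal{L}_{G}$. The crucial property is that every uniqueness weight is strictly positive: by Equation~\ref{eq:LandmarksUniqueness}, $\Upsilon_{uv}(L) = \frac{1}{k}$ where $k \geq 1$ is the number of candidate goals whose landmark sets contain $L$, and $k$ is at least $1$ because $L$ is a landmark of $G$ itself. Summing a nonnegative quantity over a subset therefore cannot exceed summing it over the whole set, so the numerator of Equation~\ref{eq:HeuristicLandmarksUniqueness} is at most its denominator, giving $h_{uniq}(G) \leq 1$ for every $G$. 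Combining both steps yields $h_{uniq}(G^{*}) = 1 \geq h_{uniq}(G)$ for all $G \in \mathcal{G}$, which is the desired ranking.

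I expect the only point requiring care --- and the reason the result is a corollary rather than a restatement of Theorem~\ref{thm:hgc_soundness} --- to be the positivity of the uniqueness weights. The soundness of $h_{gc}$ rests on an unweighted count, whereas $h_{uniq}$ reweights each landmark; were any weight allowed to be zero or negative, a subset of achieved landmarks could in principle carry more total weight than the full set and break the bound $h_{uniq}(G) \leq 1$. Since Equation~\ref{eq:LandmarksUniqueness} guarantees strictly positive weights, this danger does not arise, and the remainder of the argument is immediate. As with Theorem~\ref{thm:hgc_soundness}, ties at the top value of $1$ remain possible when candidate goals share all of their weighted landmarks or when one goal is a subgoal of another, so the guarantee is that $G^{*}$ ranks at the top, possibly tied, rather than strictly above every competitor.
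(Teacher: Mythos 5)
Your proof is correct and takes essentially the same route as the paper, which gives no separate argument for the corollary but simply remarks that $\mathit{h_{uniq}}$ is a weighted version of $\mathit{h_{gc}}$ and that the result ``follows trivially'' from Theorem~\ref{thm:hgc_soundness}. Your write-up merely makes explicit what that one-line appeal leaves implicit --- that full, noiseless observability yields $\Lambda_{\mathcal{G}}(G^{*}) = \mathcal{L}_{\mathcal{G}}(G^{*})$ and hence $\mathit{h_{uniq}}(G^{*})=1$, while the strictly positive uniqueness weights of Equation~\ref{eq:LandmarksUniqueness} together with $\mathcal{AL}_{G} \subseteq \mathcal{L}_{G}$ bound every competitor by $1$ --- which is a worthwhile detail to record, since positivity of the weights is exactly what lets the ranking survive the reweighting.
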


\section{Experiments and Evaluation}\label{section:ExperimentsAndEvaluation}

In this section, we describe the experiments and evaluation we carried out on our goal recognition approaches. 
We start with a description of the planning domains and the datasets, as well as the metrics we use to evaluate our approaches in Section~\ref{subsec:metrics}. 
Section~\ref{subsec:goalrecognitionResults} then details the experiments and evaluation results of our goal recognition approaches.

\subsection{Domains, Datasets, and Metrics}\label{subsec:domains}

We empirically evaluated our approaches using 15 domains from the planning literature\footnote{\scriptsize \url{http://ipc.icaps-conference.org}}.
Six of these domains are also used in the evaluation of other goal and plan recognition approaches~\cite{RamirezG_IJCAI2009,RamirezG_AAAI2010}\footnote{\scriptsize \url{https://sites.google.com/site/prasplanning/}}. We summarize these domains as follows. 

\begin{itemize}
	\item \textsc{Blocks-World} is a domain that consists of a set of blocks, a table, and a robot hand. Blocks can be stacked on top of other blocks or on the table. A block that has nothing on it is clear. The robot hand can hold one block or be empty. The goal is to find a sequence of actions that achieves a final configuration of blocks;
	
	\item \textsc{Campus} is a domain that consists of finding what activity is being performed by a student from his observations on a campus environment;

	\item \textsc{Depots} is a domain that combines transportation and stacking. For transportation, packages can be moved between depots by loading them on trucks. For stacking, hoists can stack packages on palettes or other packages. The goal is to move and stack packages by using trucks and hoists between depots;	
	
	\item \textsc{Driver-Log} is a domain that consists of drivers that can walk between locations and trucks that can drive between locations. Walking between locations requires traversal of different paths. Trucks can be loaded with or unloaded of packages. Goals in this domain consists of transporting packages between locations;
	
	\item \textsc{Dock-Worker-Robots (DWR)} is a domain that involves a number of cranes, locations, robots, containers, and piles, in which goals involve transporting containers to a final destination according to a desired order;
	
	\item \textsc{IPC-Grid} domain is a domain consists of an agent that moves in a grid from connected cells to others by transporting keys in order to open locked locations;
	
	\item \textsc{Ferry} is a domain that consists of set of cars that must be moved to desired locations using a ferry that can carry only one car at a time;
	
	\item \textsc{Intrusion-Detection} represents a domain where a hacker tries to access, vandalize, steal information, or  perform a combination of these attacks on a set of servers;
	
	\item \textsc{Kitchen} is a domain that consists of home-activities, in which the goals can be preparing dinner, breakfast, among others;
	
	\item \textsc{Logistics} is a domain which models cities, and each city contains locations. These locations are airports. For transporting packages between locations, there are two types o vehicles: trucks and airplanes. Trucks can drive between cities. Airplanes can fly between airports. The goal is to get and transport packages from locations to other locations;
	
	\item \textsc{Miconic} is a domain that involves transporting a number of passengers using an elevator to reach destination floors;
	
	\item \textsc{Rovers} is a domain that consists of a set of rovers that navigate on a planet surface in order to find samples and communicate experiments;
	
	\item \textsc{Satellite} is a domain that involves using one or more satellites to make observations, by collecting data and down-linking the data to a desired ground station;

	\item \textsc{Sokoban} is a domain that involves an agent whose goal is to push a set of boxes into specified goal locations in a grid with walls; and

	\item \textsc{Zeno-Travel} is a domain where passengers can embark and disembark onto aircraft that can fly at two alternative speeds between locations.
\end{itemize}

We formalize planning domains and problems using the STRIPS~\cite{STRIPSFikes1971} fragment of PDDL~\cite{PDDLMcdermott1998}. 
Based on the datasets provided by Ram{\'i}rez and Geffner~(\citeyear{RamirezG_IJCAI2009,RamirezG_AAAI2010}), which contain hundreds of goal recognition problems for 6 domains, we added non-trivial\footnote{A non-trivial planning problem contains a large search space (in terms of search branching factor and depth), and therefore, modern planners such as \textsc{Fast-Downward} takes up to 5-minutes to solve it. In our datasets, the number of instantiated (grounded) actions is  between 158 and 3258, and plan length is between 5 and 83.} and larger planning problems in their datasets and generated new datasets\footnote{\scriptsize \url{https://github.com/pucrs-automated-planning/goal_plan-recognition-dataset}} from the remaining 9 planning domains using open-source planners, such as \textsc{Fast-Downward}~\cite{HelmertFastDownward_2011}, \textsc{Fast-Forward}~\cite{FFHoffmann_2001}, and \textsc{LAMA}~\shortcite{RichterLPG_2010}, each of which is based on planning problems containing both optimal and sub-optimal plans of various sizes, including large problems to test the scalability of the approaches~\cite{Pereira2017_dataset}. 
We also generated datasets for 4 domains (\textsc{Campus}, \textsc{Intrusion}, \textsc{IPC-Grid}, and \textsc{Kitchen}) with missing, full, and noisy observations, which are the same domains that Sohrabi~\etal~use in~(\citeyear{Sohrabi_IJCAI2016}). 
The dataset for \textsc{Campus} domain with missing and noisy observations comes from Sohrabi~\etal~(\citeyear{Sohrabi_IJCAI2016})\footnote{\scriptsize \url{https://github.com/shirin888/planrecogasplanning-ijcai16-benchmarks}}. 
Thus, we evaluate our goal recognition approaches against the state-of-the-art not only using datasets with missing and full observations, but also using datasets with noisy observations in the same way as Sohrabi~\etal~(\citeyear{Sohrabi_IJCAI2016}).

\subsection{Evaluation Metrics}\label{subsec:metrics}

For evaluation of our goal recognition approaches against the state-of-the-art, we use the Accuracy metric (Equation~\ref{frac:accuracy}), which represents the fraction of times that the correct goal was among the goals found to be most likely, \idest, how well the correct hidden goal is recognized from a set of possible goals for a given goal recognition problem. 
Most goal recognition approaches~\cite{RamirezG_AAAI2010,NASA_GoalRecognition_IJCAI2015,Sohrabi_IJCAI2016} refer to this metric as quality, also denoted as Q.

\vspace{1mm}
\begin{equation}\label{frac:accuracy}
\textsc{Accuracy} = \cfrac{\sum True \enspace positive + \sum True \enspace negative}{\sum Total \enspace population}
\end{equation}
\vspace{1mm}

Like most goal and plan recognition approaches in the literature, we also evaluate the average number of returned goals, which is called as Spread in $\mathcal{G}$, and recognition time (speed), in seconds, representing the time that a goal recognition approach takes to recognize the most likely goal from a set of possible goals. 

\subsection{Goal Recognition Experimental Results}
\label{subsec:goalrecognitionResults}

Our experiments compare our goal recognition approaches and heuristics ($\mathit{h_{gc}}$ and $\mathit{h_{uniq}}$) to four other approaches. 
First, we use the heuristic estimator approach of Ram{\'{\i}}rez and Geffner~(\citeyear{RamirezG_IJCAI2009})\footnote{Ram{\'{\i}}rez and Geffner~(\citeyear{RamirezG_IJCAI2009}) developed a goal and plan recognition approach which uses a heuristic method to approximate the planning solution by computing a relaxed plan~\cite{Keyder_Heuristics_ECAI2008}. The authors show that this heuristic-based approach is their faster and more accurate goal and plan recognition approach.
}, denoted as R\&G 2009; as well as a combination of their approach and our filtering method with threshold $\theta = $ 10\%, denoted as \textsc{Filter}$_{10\%} +$~R\&G 2009. 
This is their fastest and most accurate algorithm\footnote{\scriptsize \url{https://sites.google.com/site/prasplanning/file-cabinet/plan-recognition.tar.bz2}}. 
Second, we use the probabilistic framework of Ram{\'{\i}}rez and Geffner~(\citeyear{RamirezG_AAAI2010})\footnote{\scriptsize \url{https://sites.google.com/site/prasplanning/file-cabinet/prob-plan-recognition.tar.bz2}} that allows the use of any off-the-shelf automated planner, denoted as R\&G 2010. 
The automated planner we used alongside this approach is \textsc{Fast-Downward}~\cite{HelmertFastDownward_2011} with the LM-Cut heuristic~\cite{helmert2009landmarks}, a planning heuristic that relies on landmarks to estimate the distance from a particular state to a goal state. 
We also use this approach with our filtering method (threshold $\theta = $ 10\%), denoted as \textsc{Filter}$_{10\%} +$~R\&G 2010. 
Third, we use the approach of Sohrabi~\etal~(\citeyear{Sohrabi_IJCAI2016})\footnote{Since the exact code from Sohrabi~\etal~(\citeyear{Sohrabi_IJCAI2016}) is unavailable, we developed our own version of this approach with some advice from the main author and the top-k planner she shared.}, which uses a top-K planner to extract multiple optimal and nearly optimal plans for a particular goal, denoted as IBM 2016\footnote{We ran experiments using a top-k planner rather than a diverse planner under advice from the main author.}. 
The automated planner we used alongside this approach is the most modern top-K planner TK$^{*}$~\cite{katz_etal_icaps18} with the LM-Cut heuristic, and the number of sampled plans parameter K=1000. 
These are exactly the same parameters that Sohrabi~\etal~used in the experiments and evaluation in~\cite{Sohrabi_IJCAI2016}.
Note that we use the LM-Cut heuristic~\cite{helmert2009landmarks} with the approaches from~\cite{RamirezG_AAAI2010} and~\cite{Sohrabi_IJCAI2016} because our goal recognition approaches proposed rely on landmarks.
The use of this landmark-based planning heuristic with the planners \textsc{Fast-Downward}~\cite{HelmertFastDownward_2011} and top-K planner TK$^{*}$~\cite{katz_etal_icaps18} aims to provide a fairer comparison against our landmark-based approaches. 
Finally, we compare our approaches against the approach of E-Mart\'{i}n~\etal~(\citeyear{NASA_GoalRecognition_IJCAI2015}), denoted as FGR 2015. 
Their goal recognition approach also obviates the use of calling a planner multiple times for the recognition process, and instead, uses a planning graph, resulting in fast goal recognition in planning settings. 
As advised by the authors, we use the FGR 2015 recognizer with interaction information equals to 0, which is their technique that yields the best results in terms of recognition time and accuracy\footnote{In an attempt to make a fair comparison, we obtained the code for the algorithms of~\cite{NASA_GoalRecognition_IJCAI2015} directly from the main author. 
Running on our datasets, these algorithms performed worse than the results of~\cite{NASA_GoalRecognition_IJCAI2015}. 
We believe that this could be due to different problem set sizes of our datasets. 
In addition, the code behaved unexpectedly on some domains of our datasets (denoted by a $\dag$ symbol in the tables), returning the same recognition score for all candidate goals.
At the time of submission we are working with the authors to clarify these discrepancies.}. 
In both non-noisy (missing) and noisy domains, the \textsc{IPC-Grid} domain timed out for more than 60\% of the problems in the FGR 2015 approach, so we do not report results for this specific domain as they would not be representative. 

These approaches take as input a goal recognition problem $T_{GR}$ (from Definition~\ref{def:goalRecognition}), \idest, a domain description as well as an initial state, a set of candidate goals $\mathcal{G}$, a correct hidden goal $G^{*}$ in $\mathcal{G}$, and an observation sequence $O$. 
An observation sequence contains actions that represent an optimal plan or sub-optimal plan that achieves a correct hidden goal $G^{*}$, and this observation sequence can be full or partial. 
Full observation sequences contain the entire plan for a correct hidden goal $G^{*}$, \idest, 100\% of the actions having been observed. 
Partial observation sequences represent plans for a correct hidden goal $G^{*}$ with 10\%, 30\%, 50\%, or 70\% of their actions having been observed. 
However, for experiments with noisy observations, the observability of partial observations is quite different because every observation sequence always includes at least two noisy observations, so a partial observation sequence with noisy observations represents a plan with 25\%, 50\%, or 75\% of its actions having been observed.

Our evaluation uses three metrics: accuracy of goal recognition (Equation~\ref{frac:accuracy}), the average number of goals in $\mathcal{G}$ that have been found to be the most likely, and recognition time (speed). 
Note that in many domains, all algorithms return more than one candidate goal. 
In the case of Ram{\'{\i}}rez and Geffner~(\citeyear{RamirezG_IJCAI2009}), \idest, R\&G 2009, this may occur when goals have the same distance from their estimated state. 
Alternatively, for our goal recognition heuristics, this may occur when there are ties between the heuristic value of candidate goals within the threshold margin. 
Thus, like most goal recognition approaches, we also evaluate the average number of returned goals for a given goal recognition problem, \idest, the Spread in $\mathcal{G}$. 
We ran all experiments using a single core of a 12 core Intel(R) Xeon(R) CPU E5-2620 v3 @ 2.40GHz with 16GB of RAM, set a maximum memory usage limit of 4GB, and set a 20-minute timeout for each recognition process. 

\subsubsection{Experimental Results with Missing and Full Observations}

Our first set of experiments consists of running the various goal recognition algorithms in datasets containing thousands of problems for 15 domains with missing and full observations. 
In what follows, each table shows the total number of goal recognition problems used under each domain name (first column). 
Each row in the tables express averages for the number of candidate goals $|\mathcal{G}|$; the percentage of the plan that is actually observed \% Obs; the average number of observations (actions) per problem $|O|$; and for each approach, the time in seconds to recognize the goal given the observations (Time); the Accuracy with which the approaches correctly infer the goal; and Spread in $\mathcal{G}$ represents the average number of returned goals. 
For our goal recognition heuristics $\mathit{h_{gc}}$ and $\mathit{h_{uniq}}$, we show their results under different thresholds: 0\%, 10\%, and 20\%. 
If the threshold value is $\theta=0$, our approaches do not give any flexibility estimating candidate goals, returning only the goals with the highest estimated value. 
Tables~\ref{tab:goalRecognitionResults1_1} and~\ref{tab:goalRecognitionResults1_2} show comparative results of our heuristics and previous approaches for the first set of domains (\textsc{Blocks-World} to \textsc{Intrusion}). 
Table~\ref{tab:goalRecognitionResults1_1} shows the results of our goal recognition heuristics $\mathit{h_{gc}}$ and $\mathit{h_{uniq}}$, against R\&G 2009~\cite{RamirezG_IJCAI2009} as well as this approach enhanced with our filtering method with threshold $\theta = 10\%$ (Filter$_{10\%}$). 
Similarly, Table~\ref{tab:goalRecognitionResults1_2} shows the results of R\&G 2010~\cite{RamirezG_AAAI2010}, FGR 2015~\cite{NASA_GoalRecognition_IJCAI2015}, and IBM 2016~\cite{Sohrabi_IJCAI2016}. We show both approaches of R\&G 2010~\cite{RamirezG_AAAI2010} and IBM 2016~\cite{Sohrabi_IJCAI2016} individually as well as enhanced with our filtering method (again with $\theta = 10\%$). 
Tables~\ref{tab:goalRecognitionResults2_1}, and~\ref{tab:goalRecognitionResults2_2} show comparative results of our heuristics and previous approaches for the second set of domains (\textsc{Kitchen} to \textsc{Zeno-Travel}). 
From these tables, it is possible to see that our landmark-based approaches are both faster and more accurate than R\&G 2009, R\&G 2010, FGR 2015, and IBM 2016, and, when we combine their algorithms with our filtering method, the resulting approaches get a substantial speedup and often accuracy improvements. 
As we increase the threshold, our heuristic approaches quickly surpass the other approaches in all domains tested. 
Note that we report the accuracy averaged over all of the problems for each observability. 
For example, in Table~\ref{tab:goalRecognitionResults1_2}, for the \textsc{Campus} domain, there are $15$ problems with $50\%$ observability (totaling $75$ for the entire domain), and the IBM 2016~\cite{Sohrabi_IJCAI2016} includes this goal in its output for $6$ out of $15$, resulting in $40\%$ accuracy. 

The \textit{Receiver Operating Characteristic (ROC) curve} allows us to provide a summary of the discriminatory performance of inferences such as goal recognition over diverse datasets. 
The ROC curve shows graphically the performance of classifier systems by evaluating true positive rate against the false positive rate at various threshold settings. We adapt the notion of the ROC curve into points over the ROC space to compare not only true positive predictions (\idest, Accuracy), but also to compare the false positive ratio of the experimented goals recognition approaches. 
Each prediction result of a goal recognition approach represents one point in the space.
In ROC space, the diagonal line represents a random guess to recognize a goal from observations. 
This diagonal line divides the ROC space, in which points above the diagonal represent good classification results (better than random), whereas points below the line represent poor results (worse than random). 
The best possible (perfect) prediction for recognizing goals must be a point in the upper left corner (\idest, coordinate x = 0 and y = 100) in the ROC space. 
Thus, the closer a goal recognition approach (point) gets to the upper left corner, the better it is for recognizing goals. 
To visualize the comparative performance of the multiple approaches, we adapt the notation of the ROC space, and, rather than plotting a single point per goal recognition problem, we aggregate multiple problems for all domains and plot these results in ROC space.

Figure~\ref{fig:rocspace_alldomains} shows the trade-off between true positive results and false positive results in ROC space for the evaluated goal and plan recognition approaches. 
Recall that the closer a goal recognition approach (point) is to the upper left corner, the better it is for recognizing goals and plans. 
To compare the recognition results of our approaches against the others in the ROC space, we select the results of our heuristics using the threshold $\theta$ = 20\%. 
For each approach, we plot its recognition results for all domains into a cloud of points, which represents (in general) how well each approach recognizes the correct hidden goal from missing and full observations. 
Thus, the points in ROC space show that our heuristics are not only competitive against the four other approaches (R\&G 2009, R\&G 2010, FGR 2015, and IBM 2016) for all variations of observability, but also surpasses the approaches in a substantial number of domains. 

\begin{figure}[ht!]
  \centering
  \includegraphics[width=0.7\linewidth]{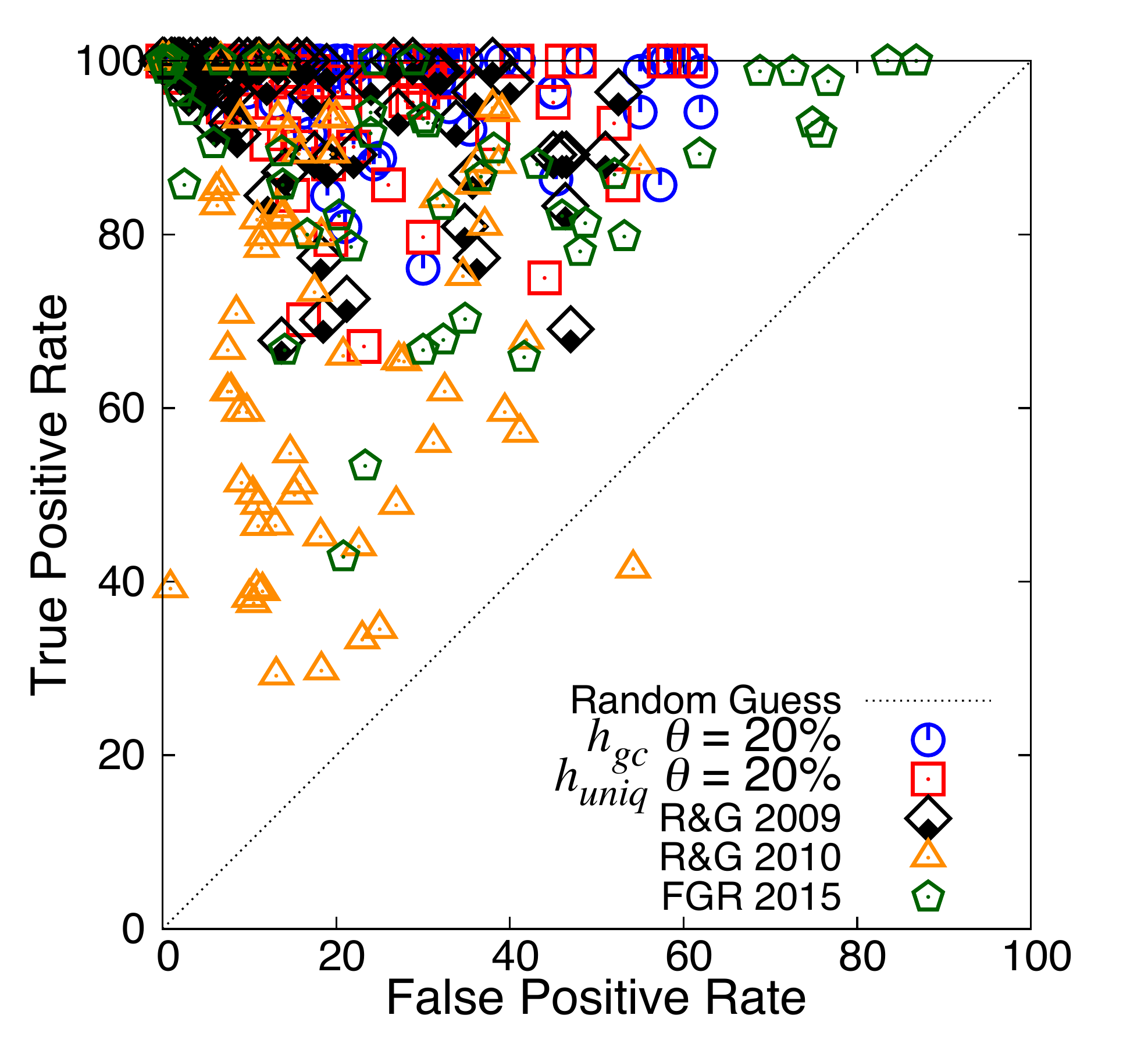}
  \caption{ROC space for all domains with missing and full observations for our landmark-based heuristics ($\mathit{h_{gc}}$ and $\mathit{h_{uniq}}$) against R\&G 2009~\cite{RamirezG_IJCAI2009}, R\&G 2010~\cite{RamirezG_AAAI2010}, and FGR 2015~\cite{NASA_GoalRecognition_IJCAI2015}. The results obtained for \textsc{Campus} domain from IBM 2016~\cite{Sohrabi_IJCAI2016} are not included in the ROC space.}
  \label{fig:rocspace_alldomains}
\end{figure}

With respect to recognition time, we compare the time that each approach takes to recognize the correct hidden goal for different sizes of the observation sequence.
Figures~\ref{fig:recognition_time-missing_1} and~\ref{fig:recognition_time-missing_2} show the runtime as a function of the average length of the observation sequences ($|O|$ column in the Tables) for all of the approaches we evaluated (apart from IBM 2016, which timed out for almost all domains), as reported in the time column of Tables~\ref{tab:goalRecognitionResults1_1}, \ref{tab:goalRecognitionResults1_2}, \ref{tab:goalRecognitionResults2_1}, and~\ref{tab:goalRecognitionResults2_2}). 
Figure~\ref{fig:recognition_time-missing_1} shows the runtime for our heuristic approaches in comparison with R\&G 2009 and this approach alongside our filtering method, whereas Figure~\ref{fig:recognition_time-missing_2} shows the runtime of R\&G 2010 with and without the filtering method, and FGR 2015. 
We used separate graphs for these techniques, given the widely different magnitude of the time taken to recognize a goal.
When measuring recognition time for our heuristics and the filtering method, we also include the time to extract the set of landmarks, so that landmark extraction is performed online, \idest, during the goal recognition process.
Curves in the graph represent the average runtime when observation sizes were the same smoothed over the resulting points. 
The graph shows the scalability of the 4 evaluated approaches. 
Our goal recognition heuristics never take more than 1 second ($\approx 0.7$ seconds) to compute the correct hidden goal in the set of candidate goals, while the other approaches seem to grow super-linearly (for R\&G 2009), and exponentially (for R\&G 2010). 
The approaches of Ram{\'{\i}}rez and Geffner~(\citeyear{RamirezG_IJCAI2009,RamirezG_AAAI2010}), R\&G 2009 and R\&G 2010, took at most $\approx$ $25$ seconds and $\approx 1200$ seconds, respectively. 
Apart from the \textsc{Campus} domain, the approach of IBM 2016~\cite{Sohrabi_IJCAI2016} timed out for all goal recognition problems (all domains) in the datasets we used, probably because the top-K planner~\cite{katz_etal_icaps18} (even the latest top-K planning algorithm) does not scale very well when dealing with non-trivial planning problems, especially when the planner has to sample $1000$ plans for a (transformed) planning problem. 
In this case, even the use of our filtering method (which reduces the number of candidate goals) did not improve the recognition time of the approach from IBM 2016~\cite{Sohrabi_IJCAI2016}. 
While our filtering method significantly improves the recognition time of the approaches R\&G 2009 and R\&G 2010, it sometimes causes a loss of accuracy due to it ruling out the correct hidden from the set of candidate goals. FGR 2015 took at most $\approx$ $355$ seconds over all evaluated domains (and timed out for most problems of the \textsc{IPC-Grid} domain). 
FGR 2015 is much faster than R\&G 2010 and R\&G 2010 with our filtering method, though not as fast as our recognition heuristics and R\&G 2009. 
Finally, the evaluation of the domains \textsc{DWR} and \textsc{Sokoban} shows that larger plan lengths lead R\&G 2009 and R\&G 2010 to rapidly lose accuracy, whereas our approaches show improved accuracy without affecting the recognition time. 

As Tables~\ref{tab:goalRecognitionResults1_1}, \ref{tab:goalRecognitionResults1_2}, \ref{tab:goalRecognitionResults2_1}, and~\ref{tab:goalRecognitionResults2_2} show, our goal recognition heuristics are not only competitive (using thresholds between $10\%$ and $20\%$) against the other approaches with superior accuracy, but also at least an order of magnitude faster (for all evaluated domains), for example, $\approx$ 2900 times faster than R\&G 2010 in \textsc{DWR} domain. 
Comparison with two other state-of-the-art recent techniques, we can also see that IBM 2016 is substantially slower, even compared to R\&G, whereas FGR 2015, while consistently much faster than R\&G 2010, is also slower than our heuristics techniques (up to an order of magnitude) across the board. 
When comparing with our heuristics, the results show that the goal completion heuristic $\mathit{h_{gc}}$ is often more accurate than the uniqueness heuristic $\mathit{h_{uniq}}$. 
However, $\mathit{h_{uniq}}$ returns fewer candidate goals (Spread in $\mathcal{G}$) than the goal completion heuristic $\mathit{h_{gc}}$ as a result of the \textit{landmark uniqueness value}, which weights landmark information among all landmarks for all goals, making $\mathit{h_{uniq}}$ more precise (but sometimes less accurate) than the goal completion heuristic $\mathit{h_{gc}}$. 
We use the threshold value to provide flexibility when the heuristic approaches fail to observe landmarks. 
While our approach is more accurate than virtually all other approaches with a recognition threshold value of $20\%$ of optimal (sometimes with larger spread), the comparison becomes more complex for other thresholds. 
The only domain in which the FGR 2015 approach is more accurate than ours is DWR with with observability under $70\%$, however, the spread in $\mathcal{G}$ is nearly twice as large as ours, meaning that FGR 2015 is worse at disambiguating goals. 
Apart from the \textsc{Campus} and \textsc{Kitchen} domains, our approaches have similar or worse accuracy at very low ($30\%$ or less) observability. 
This loss of accuracy happens for low observability problems because the number of landmarks that happen to be observed is much lower (as the likelihood of observing a landmark goes down) creating a challenge to disambiguate and recognize the correct hidden goal. 
The results for \textsc{Campus} and \textsc{Kitchen} are explained by the reduced number of goal hypotheses in each domain and the informativeness of the actions, which yield landmarks that favor our approaches. 
For domains such as \textsc{DWR}, \textsc{Depots}, \textsc{Sokoban}, \textsc{Zeno-Travel}, which are considered more complex because traditional planning heuristics are not very informative for them, our results are mixed. 
Sometimes, we are able to achieve high accuracy with low observability (albeit with high spread in \textsc{DWR} and \textsc{Depots}), whereas sometimes we achieve lower accuracy with low spread for \textsc{Sokoban} and \textsc{Zeno-Travel}. 
In this particular setting, \textsc{Sokoban} is known to be a particularly difficult domain for planning heuristics~\cite{PereiraAndre_AIJ2015}, and yields a small number of landmarks per goal. Nevertheless, when our heuristic approaches deal with more than $30\%$ of observability the results are very good both in Accuracy and Spread in $\mathcal{G}$ for all domains.

\begin{figure}[h!]
  \centering
  \includegraphics[width=0.73\linewidth]{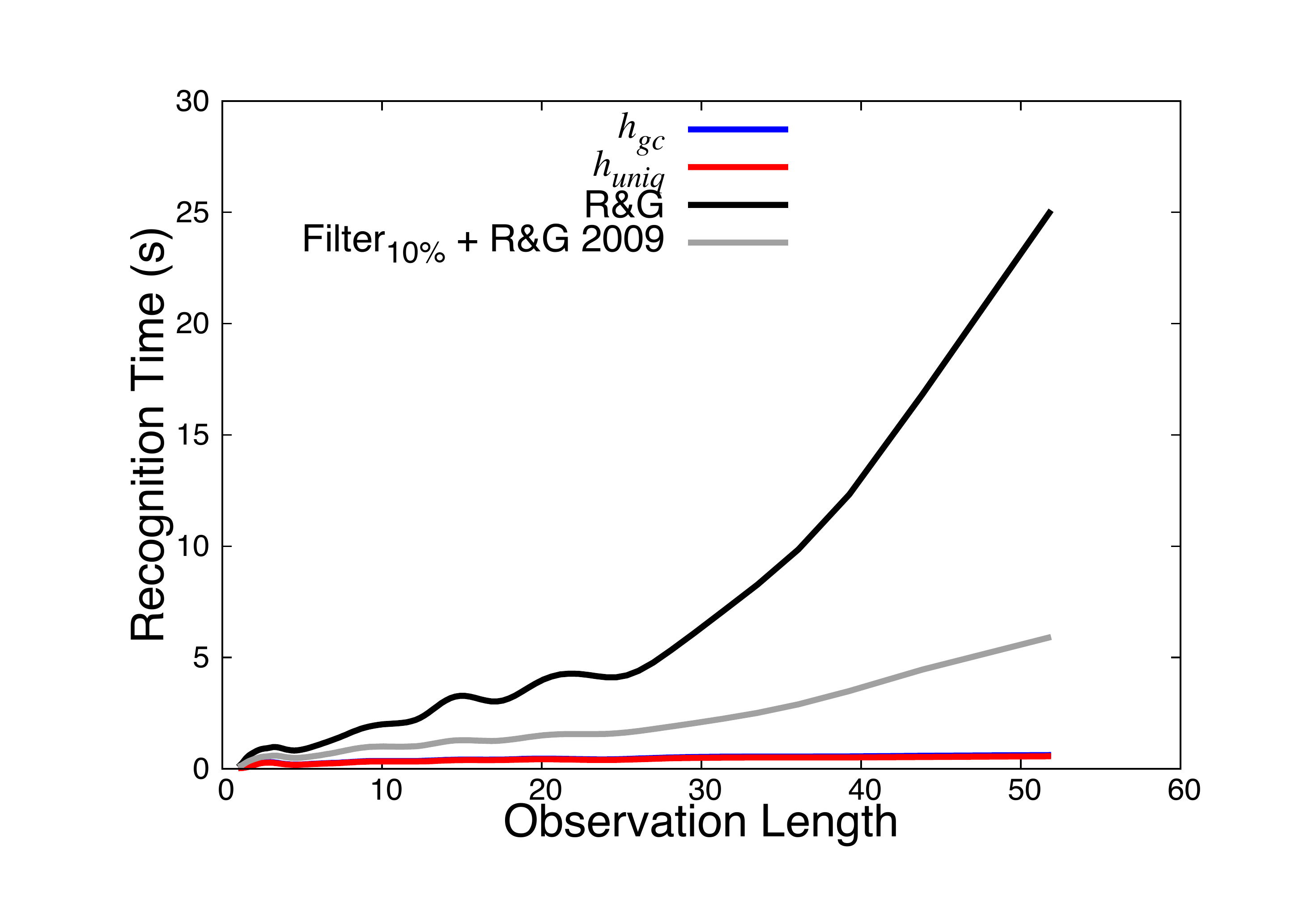}
  \caption{Recognition time comparison for missing and full observations for our landmark-based heuristics ($\mathit{h_{gc}}$ and $\mathit{h_{uniq}}$) against R\&G 2009~\cite{RamirezG_IJCAI2009}, and R\&G 2009 using our filtering method with 10\% of threshold.}
  \label{fig:recognition_time-missing_1}
\end{figure}

\begin{figure}[h!]
  \centering
  \includegraphics[width=0.73\linewidth]{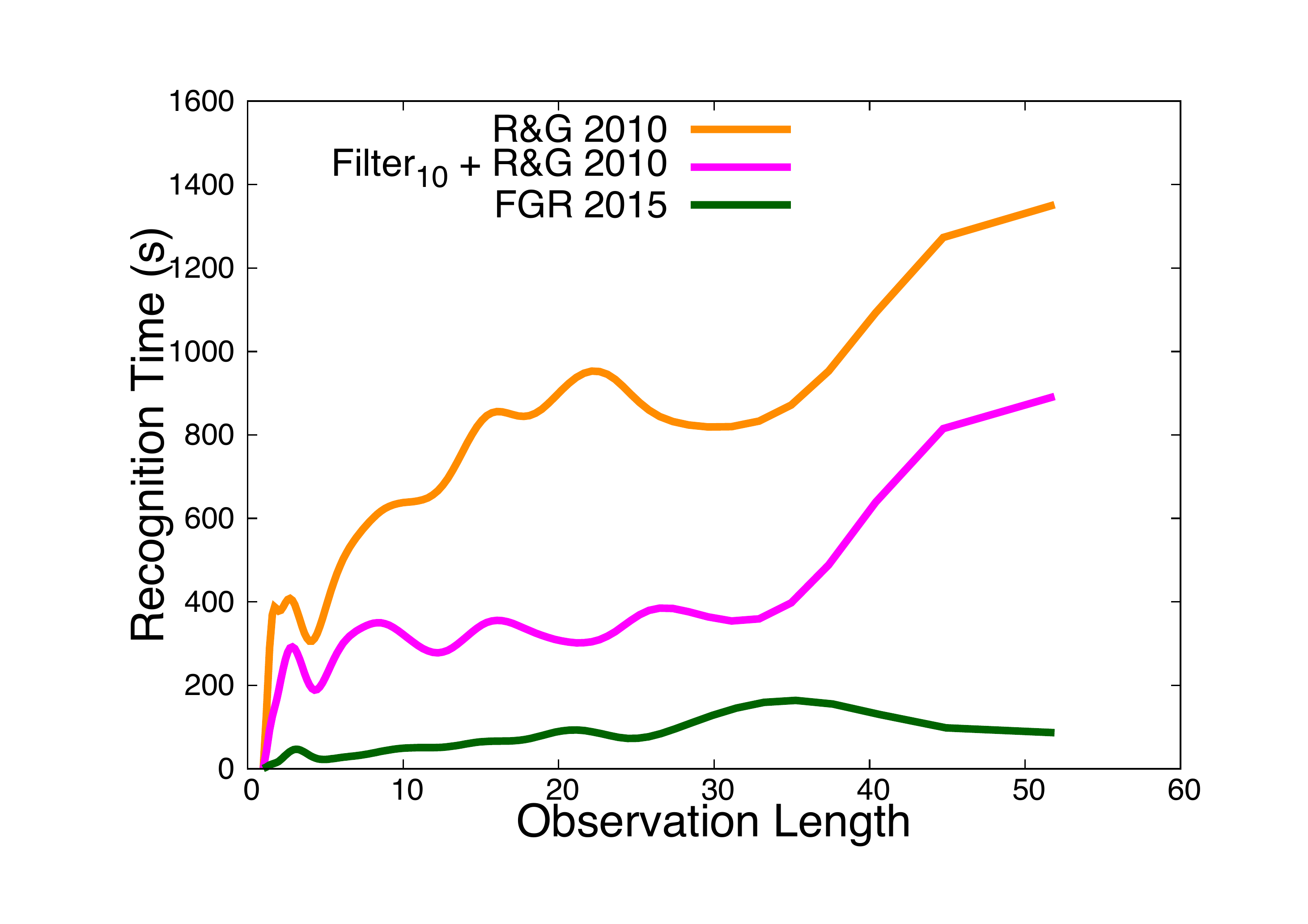}
  \caption{Recognition time comparison for missing and full observations for R\&G 2010 using \textsc{Fast-Downward} with LM-Cut heuristic~\cite{RamirezG_AAAI2010}, R\&G 2010 using our filtering method with 10\% of threshold, and FGR 2015~\cite{NASA_GoalRecognition_IJCAI2015}.}
  \label{fig:recognition_time-missing_2}
\end{figure}

\afterpage{
\begin{landscape}
\begin{table*}[]
\centering
\fontsize{5}{8}\selectfont
\setlength\tabcolsep{3pt}
\begin{tabular}{|c|c|cc|ccc|ccc|ccc|}
\hline
                   &                     & \multicolumn{2}{c|}{} & \multicolumn{3}{c|}{{\footnotesize $\mathit{h_{gc}}$}}                                   & \multicolumn{3}{c|}{{\footnotesize $\mathit{h_{uniq}}$}}                                 & \multicolumn{3}{c|}{\scriptsize R\&G 2009 / Filter$_{10\%}$ + R\&G 2009}    \\ \hline
\#                 & $|\mathcal{G}|$                   & \textbf{\% Obs}        & $|O|$        & \textbf{\begin{tabular}[c]{@{}c@{}}Time\\$\theta$ (0 / 10 / 20)\end{tabular}}                  & \textbf{\begin{tabular}[c]{@{}c@{}}Accuracy\\$\theta$ (0 / 10 / 20)\end{tabular}}              & \textbf{\begin{tabular}[c]{@{}c@{}}Spread in $\mathcal{G}$\\$\theta$ (0 / 10 / 20)\end{tabular}}  & \textbf{\begin{tabular}[c]{@{}c@{}}Time\\$\theta$ (0 / 10 / 20)\end{tabular}}                  & \textbf{\begin{tabular}[c]{@{}c@{}}Accuracy\\$\theta$ (0 / 10 / 20)\end{tabular}}              & \textbf{\begin{tabular}[c]{@{}c@{}}Spread in $\mathcal{G}$\\$\theta$ (0 / 10 / 20)\end{tabular}}  & \textbf{Time}          & \textbf{Accuracy}      & \textbf{Spread in $\mathcal{G}$} \\ \hline

\multirow{5}{*}{\rotatebox[origin=c]{90}{\textsc{Blocks-World}} \rotatebox[origin=c]{90}{(1076)}} 
& \multirow{5}{*}{20} 
										 & 10         & 1.8
											& 0.137 / 0.143 / 0.151 & 39.9\% / 59.2\% / 86.4\% & 1.05 / 4.62 / 8.92 
											
											& 0.131 / 0.139 / 0.146 & 31.6\% / 53.1\% / 67.1\% & 1.03 / 2.77 / 6.06 
											
											& 1.235 / 0.631 & 86.8\% / 60.9\% & 7.84 / 3.34     \\
                   &                     & 30         & 4.9        
											& 0.152 / 0.160 / 0.169 & 50.6\% / 79.4\% / 92.1\% & 1.09 / 3.96 / 7.76
											
											& 0.144 / 0.153 / 0.164 & 51.4\% / 67.1\% / 79.4\% & 1.06 / 2.51 / 5.18 
											
											& 1.698 / 0.819 & 87.2\% / 75.7\% & 3.56 / 1.79     \\
                   &                     & 50         & 7.6        
											& 0.179 / 0.187 / 0.196    & 65.1\% / 86.1\% / 98.7\% & 1.09 / 3.13 / 6.11
											& 0.168 / 0.174 / 0.185     & 60.1\% / 77.7\% / 90.1\% & 1.08 / 2.24 / 4.48 
											& 2.497 / 1.008 & 97.9\% / 87.2\% & 2.63 / 1.42     \\
                   &                     & 70         & 11.1        
											& 0.192 / 0.201 / 0.214    & 84.7\% / 95.8\% / 100\% & 1.12 / 2.51 / 3.79 
											
											& 0.184 / 0.193 / 0.207    & 79.1\% / 90.5\% / 97.9\% & 1.13 / 2.02 / 3.44 
											
											& 3.704 / 1.225 & 97.5\% / 94.6\% & 1.83 / 1.18     \\
                   &                     & 100        & 14.5       
											& 0.245 / 0.254 / 0.261   & 100\% / 100\% / 100\% & 1.09 / 1.78 / 2.51 
											
											& 0.238 / 0.246 / 0.253   & 100\% / 100\% / 100\% & 1.09 / 1.61 / 2.51 
											
											& 6.123 / 1.567 & 100\% / 100\% & 1.46 / 1.06     \\ \hline

\multirow{5}{*}{\rotatebox[origin=c]{90}{\textsc{Campus}} \rotatebox[origin=c]{90}{(75)}} 
& \multirow{5}{*}{2} 
										 & 10         & 1
											& 0.031 / 0.032 / 0.034 & 93.3\% / 100\% / 100\% & 1.0 / 1.33 / 1.46 
											
											& 0.027 / 0.029 / 0.030 & 100\% / 100\% / 100\% & 1.13 / 1.46 / 1.46 
											
											& 0.084 / 0.082 & 100\% / 100\% & 1.46 / 1.46     \\
                   &                     & 30         & 2        
											& 0.045 / 0.048 / 0.049 & 100\% / 100\% / 100\% & 1.0 / 1.33 / 1.46
											
											& 0.042 / 0.044 / 0.046 & 100\% / 100\% / 100\% & 1.13 / 1.46 / 1.46 
											
											& 0.097 / 0.096 & 100\% / 100\% & 1.33 / 1.33     \\
                   &                     & 50         & 3       
											& 0.057 / 0.060 / 0.064    & 93.3\% / 100\% / 100\% & 1.0 / 1.33 / 1.46
											
											& 0.055 / 0.055 / 0.057     & 93.3\% / 100\% / 100\% & 1.13 / 1.46 / 1.46
											
											& 0.104 / 0.102 & 100\% / 100\% & 1.33 / 1.33     \\
                   &                     & 70         & 4.4        
											& 0.061 / 0.063 / 0.068    & 100\% / 100\% / 100\% & 1.0 / 1.33 / 1.33 
											
											& 0.058 / 0.059 / 0.062    & 100\% / 100\% / 100\% & 1.0 / 1.33 / 1.33 
											
											& 0.115 / 0.113 & 100\% / 100\% & 1.26 / 1.26     \\
                   &                     & 100        & 5.5       
											& 0.066 / 0.067 / 0.070   & 100\% / 100\% / 100\% & 1.0 / 1.0 / 1.0 
											
											& 0.061 / 0.063 / 0.064   & 100\% / 100\% / 100\% & 1.0 / 1.0 / 1.0
											
											& 0.128 / 0.129 & 100\% / 100\% & 1.13 / 1.13     \\ \hline
											
\multirow{5}{*}{\rotatebox[origin=c]{90}{\textsc{Depots}} \rotatebox[origin=c]{90}{(364)}} 
& \multirow{5}{*}{8.5} 
										 & 10         & 3.1
											& 0.348 / 0.364 / 0.375 & 35.7\% / 66.6\% / 85.7\% & 1.17 / 3.42 / 4.97 
											
											& 0.331 / 0.342 / 0.357 & 32.1\% / 48.8\% / 75\% & 1.09 / 2.70 / 3.86 
											
											& 1.485 / 0.721 & 77.3\% / 66.6\% & 3.98 / 2.89     \\
                   &                     & 30         & 8.6        
											& 0.372 / 0.401 / 0.428 & 58.3\% / 82.1\% / 96.4\% & 1.05 / 2.53 / 3.73
											
											& 0.356 / 0.389 / 0.410 & 47.6\% / 71.4\% / 91.6\% & 1.07 / 2.57 / 2.84 
											
											& 2.307 / 1.214 & 77.3\% / 76.1\% & 2.39 / 1.72     \\
                   &                     & 50         & 14.1       
											& 0.439 / 0.482 / 0.514    & 76.1\% / 94.1\% / 97.6\% & 1.05 / 1.77 / 2.53
											
											& 0.415 / 0.447 / 0.470     & 71.4\% / 86.9\% / 96.4\% & 1.02 / 2.10 / 1.75
											
											& 3.433 / 1.532 & 84.5\% / 89.2\% & 1.91 / 1.33     \\
                   &                     & 70         & 19.7        
											& 0.502 / 0.555 / 0.593    & 89.2\% / 94.1\% / 97.6\% & 1.01 / 1.35 / 1.91 
											
											& 0.481 / 0.528 / 0.562    & 84.5\% / 96.4\% / 96.4\% & 1.01 / 1.09 / 1.46 
											
											& 5.149 / 1.866 & 91.6\% / 94.1\% & 1.67 / 1.13     \\
                   &                     & 100        & 24.4       
											& 0.613 / 0.642 / 0.677   & 100\% / 100\% / 100\% & 1.03 / 1.05 / 1.65 
											
											& 0.575 / 0.601 / 0.633   & 100\% / 100\% / 100\% & 1.03 / 1.09 / 1.46 
											
											& 7.094 / 2.408 & 92.8\% / 100\% & 1.46 / 1.07     \\ \hline

\multirow{5}{*}{\rotatebox[origin=c]{90}{\textsc{Driver-Log}} \rotatebox[origin=c]{90}{(364)}} 
& \multirow{5}{*}{10.5} 
										 & 10         & 2.6
											& 0.295 / 0.302 / 0.310 & 41.6\% / 52.3\% / 76.1\% & 1.03 / 1.88 / 2.60 
											
											& 0.284 / 0.292 / 0.298 & 35.7\% / 54.7\% / 79.7\% & 1.10 / 1.84 / 2.88
											
											& 1.192 / 0.499 & 96.4\% / 61.9\% & 4.71 / 2.40     \\
                   &                     & 30         & 6.9        
											& 0.303 / 0.310 / 0.316 & 54.7\% / 72.6\% / 90.1\% & 1.13 / 1.90 / 2.77
											
											& 0.291 / 0.299 / 0.305 & 47.6\% / 69.1\% / 85.7\% & 1.09 / 2.03 / 2.40 
											
											& 1.444 / 0.605 & 92.8\% / 72.6\% & 3.34 / 1.66     \\
                   &                     & 50         & 11.1       
											& 0.312 / 0.320 / 0.325    & 72.6\% / 85.7\% / 96.4\% & 1.16 / 1.88 / 2.60
											
											& 0.290 / 0.297 / 0.301     & 64.2\% / 82.1\% / 92.8\% & 1.14 / 1.84 / 2.35
											
											& 1.608 / 0.757 & 94.1\% / 86.9\% & 2.88 / 1.41     \\
                   &                     & 70         & 15.6        
											& 0.322 / 0.330 / 0.342    & 90.4\% / 94.1\% / 100\% & 1.14 / 1.58 / 2.15 
											
											& 0.298 / 0.304 / 0.309    & 90.4\% / 97.6\% / 100\% & 1.14 / 1.47 / 2.05 
											
											& 1.925 / 0.846 & 89.2\% / 98.8\% & 2.46 / 1.35     \\
                   &                     & 100        & 21.7       
											& 0.331 / 0.339 / 0.344   & 100\% / 100\% / 100\% & 1.21 / 1.32 / 1.92 
											
											& 0.305 / 0.311 / 0.318   & 100\% / 100\% / 100\% & 1.17 / 1.25 / 1.46 
											
											& 2.809 / 1.213 & 89.2\% / 96.4\% & 2.14 / 1.14     \\ \hline

\multirow{5}{*}{\rotatebox[origin=c]{90}{\textsc{DWR}} \rotatebox[origin=c]{90}{(364)}} 
& \multirow{5}{*}{7.25} 
										 & 10         & 5.7
											& 0.523 / 0.534 / 0.541 & 36.9\% / 85.7\% / 94.1\% & 1.09 / 4.32 / 5.83
											
											& 0.491 / 0.499 / 0.511 & 33.3\% / 70.2\% / 85.7\% & 1.05 / 3.07 / 4.32
											
											& 1.634 / 0.921 & 83.3\% / 79.7\% & 4.21 / 2.29     \\
                   &                     & 30         & 16        
											& 0.535 / 0.546 / 0.557 & 60.7\% / 95.2\% / 98.8\% & 1.03 / 3.61 / 4.84
											
											& 0.518 / 0.527 / 0.538 & 51.1\% / 80.1\% / 95.2\% & 1.05 / 2.41 / 3.61 
											
											& 2.977 / 1.204 & 80.9\% / 83.3\% & 3.34 / 2.89     \\
                   &                     & 50         & 26.2        
											& 0.560 / 0.572 / 0.581 & 66.6\% / 97.6\% / 100\% & 1.0 / 3.19 / 3.92
											
											& 0.533 / 0.541 / 0.552 & 61.9\% / 88.1\% / 97.6\% & 1.04 / 2.16 / 3.19
											
											& 4.485 / 2.121 & 72.6\% / 85.7\% & 2.27 / 1.64     \\
                   &                     & 70         & 36.8        
											& 0.601 / 0.608 / 0.599 & 89.2\% / 98.8\% / 100\% & 1.0 / 2.50 / 3.07
											
											& 0.540 / 0.547 / 0.555 & 78.5\% / 98.8\% / 98.8\% & 1.03 / 2 / 2.50
											
											& 10.432 / 3.555 & 70.2\% / 89.9\% & 2.04 / 1.48     \\
                   &                     & 100        & 51.9       
											& 0.613 / 0.620 / 0.626 & 100\% / 100\% / 100\% & 1.0 / 1.67 / 2.50
											
											& 0.559 / 0.551 / 0.564   & 100\% / 100\% / 100\% & 1.01 / 1.60 / 1.67 
											
											& 25.091 / 5.921 & 67.8\% / 92.8\% & 1.67 / 1.14     \\ \hline

\multirow{5}{*}{\rotatebox[origin=c]{90}{\textsc{IPC-Grid}} \rotatebox[origin=c]{90}{(673)}} 
& \multirow{5}{*}{9} 
										 & 10         & 2.9
											& 0.243 / 0.255 / 0.262 & 66.6\% / 86.2\% / 94.1\% & 2.57 / 3.28 / 4.41
											
											& 0.220 / 0.229 / 0.237 & 62.7\% / 82.3\% / 92.8\% & 2.34 / 3.13 / 4.09
											
											& 1.084 / 0.708 & 96.1\% / 85.6\% & 2.45 / 2.11     \\
                   &                     & 30         & 7.8        
											& 0.251 / 0.264 / 0.273 & 81.6\% / 87.5\% / 88.8\% & 1.64 / 2.32 / 3.28
											
											& 0.234 / 0.241 / 0.251 & 83.6\% / 89.5\% / 90.1\% & 1.66 / 2.34 / 3.28
											
											& 1.475 / 0.960 & 97.3\% / 87.5\% & 1.42 / 1.25     \\
                   &                     & 50         & 12.7        
											& 0.260 / 0.269 / 0.276 & 90.8\% / 93.4\% / 93.4\% & 1.18 / 1.26 / 2.32
											
											& 0.245 / 0.252 / 0.259 & 90.1\% / 94.7\% / 94.7\% & 1.18 / 1.48 / 2.57
											
											& 1.932 / 1.125 & 100\% / 93.4\% & 1.15 / 1.04     \\
                   &                     & 70         & 17.9        
											& 0.272 / 0.278 / 0.285 & 97.3\% / 97.3\% / 98.1\% & 1.07 / 1.15 / 1.44 
											
											& 0.253 / 0.260 / 0.267 & 97.3\% / 97.3\% / 98.1\% & 1.11 / 1.16 / 1.48
											
											& 2.556 / 1.211 & 100\% / 97.3\% & 1.05 / 1.0     \\
                   &                     & 100        & 24.8       
											& 0.286 / 0.289 / 0.291 & 100\% / 100\% / 100\% & 1.0 / 1.0 / 1.0
											
											& 0.261 / 0.268 / 0.279 & 100\% / 100\% / 100\% & 1.0 / 1.0 / 1.0
											
											& 3.868 / 1.304 & 100\% / 100\% & 1.0 / 1.0     \\ \hline
																						
\multirow{5}{*}{\rotatebox[origin=c]{90}{\textsc{Ferry}} \rotatebox[origin=c]{90}{(364)}} 
& \multirow{5}{*}{7.5} 
										 & 10         & 2.9
											& 0.077 / 0.083 / 0.093 & 58.3\% / 85.7\% / 98.8\% & 1.26 / 3.19 / 4.76
											
											& 0.068 / 0.083 / 0.087 & 58.3\% / 89.2\% / 100\% & 1.17 / 3.14 / 3.45
											
											& 0.511 / 0.302 & 98.8\% / 90.4\% & 3.36 / 2.35     \\
                   &                     & 30         & 7.6        
											& 0.084 / 0.092 / 0.099 & 85.7\% / 97.6\% / 100\% & 1.11 / 2.13 / 3.25
											
											& 0.073 / 0.081 / 0.088 & 83.3\% / 95.2\% / 100\% & 1.05 / 1.90 / 2.28
											
											& 0.677 / 0.399 & 100\% / 97.6\% & 1.76 / 1.41     \\
                   &                     & 50         & 12.3        
											& 0.091 / 0.096 / 0.102 & 95.2\% / 98.8\% / 100\% & 1.07 / 1.5 / 1.72
											
											& 0.084 / 0.086 / 0.092 & 91.6\% / 92.8\% / 100\% & 1.01 / 1.38 / 1.40
											
											& 0.794 / 0.410 & 100\% / 98.8\% & 1.41 / 1.16     \\
                   &                     & 70         & 17.3        
											& 0.098 / 0.100 / 0.107 & 100\% / 100\% / 100\% & 1.01 / 1.13 / 1.17
											
											& 0.092 / 0.094 / 0.101 & 100\% / 100\% / 100\% & 1.0 / 1.11 / 1.38
											
											& 1.202 / 0.525 & 98.8\% / 100\% & 1.14 / 1.02     \\
                   &                     & 100        & 24.2       
											& 0.104 / 0.108 / 0.112 & 100\% / 100\% / 100\% & 1.0 / 1.0 / 1.01
											
											& 0.099 / 0.102 / 0.106 & 100\% / 100\% / 100\% & 1.0 / 1.0 / 1.07
											
											& 1.693 / 0.571 & 100\% / 100\% & 1.07 / 1.0     \\ \hline
																																		
\multirow{5}{*}{\rotatebox[origin=c]{90}{\textsc{Intrusion}} \rotatebox[origin=c]{90}{(465)}} 
& \multirow{5}{*}{15} 
										 & 10         & 1.9
											& 0.095 / 0.098 / 0.102 & 62.8\% / 96.1\% / 100\% & 1.14 / 2.56 / 5.12
											
											& 0.077 / 0.084 / 0.090 & 64.7\% / 100\% / 100\% & 1.23 / 2.54 / 7.29
											
											& 0.724 / 0.444 & 100\% / 100\% & 2.53 / 2.53     \\
                   &                     & 30         & 4.5        
											& 0.101 / 0.106 / 0.108 & 94.2\% / 100\% / 100\% & 1.01 / 1.96 / 2.56
											
											& 0.083 / 0.088 / 0.092 & 85.7\% / 100\% / 100\% & 1.02 / 1.96 / 6.12
											
											& 0.803 / 0.486 & 100\% / 100\% & 1.11 / 1.11     \\
                   &                     & 50         & 6.7        
											& 0.109 / 0.111 / 0.114 & 99.1\% / 100\% / 100\% & 1.01 / 1.19 / 1.61
											
											& 0.089 / 0.091 / 0.094 & 94.2\% / 100\% / 100\% & 1.04 / 1.91 / 3.31
											
											& 0.888 / 0.513 & 100\% / 100\% & 1.02 / 1.0     \\
                   &                     & 70         & 9.5        
											& 0.113 / 0.115 / 0.121 & 100\% / 100\% / 100\% & 1.0 / 1.02 / 1.19
											
											& 0.093 / 0.096 / 0.099 & 94.2\% / 100\% / 100\% & 1.0 / 1.67 / 2.46
											
											& 1.012 / 0.539 & 100\% / 100\% & 1.0 / 1.0     \\
                   &                     & 100        & 13.1       
											& 0.120 / 0.126 / 0.129 & 100\% / 100\% / 100\% & 1.0 / 1.02 / 1.02
											
											& 0.098 / 0.100 / 0.102 & 100\% / 100\% / 100\% & 1.0 / 1.60 / 1.88
											
											& 1.257 / 0.550 & 100\% / 100\% & 1.0 / 1.0     \\ \hline
																					
\end{tabular}
\caption{Experiments and evaluation with missing and full observations for $\mathit{h_{gc}}$, $\mathit{h_{uniq}}$, R\&G 2009, and our filtering method (10\% of threshold) with R\&G 2009 (Part 1).}
\label{tab:goalRecognitionResults1_1}
\end{table*}
\end{landscape}
}

\afterpage{
\begin{landscape}
\begin{table*}[]
\centering
\fontsize{5}{8}\selectfont
\setlength\tabcolsep{3pt}
\begin{tabular}{|c|c|cc|ccc|ccc|ccc|}
\hline
                   &                     
				   & \multicolumn{2}{c|}{} & \multicolumn{3}{c|}{\begin{tabular}[c]{@{}c@{}}\scriptsize R\&G 2010 / Filter$_{10\%}$ + R\&G 2010 \\ \scriptsize (\textsc{Fast-Downward} with LM-Cut heuristic)\end{tabular}}

				   & \multicolumn{3}{c|}{\begin{tabular}[c]{@{}c@{}}\scriptsize FGR 2015\end{tabular}}
				   
				   & \multicolumn{3}{c|}{\begin{tabular}[c]{@{}c@{}}\scriptsize IBM 2016 / Filter$_{10\%}$ + IBM 2016 \\ \scriptsize (TK$^*$ with LM-Cut heuristic, top-1000)\end{tabular}}    \\ \hline
				   
\#                 & $|\mathcal{G}|$                   & \textbf{\% Obs}        & $|O|$        

& \textbf{\begin{tabular}[c]{@{}c@{}}Time\end{tabular}}                  
& \textbf{\begin{tabular}[c]{@{}c@{}}Accuracy\end{tabular}}              
& \textbf{\begin{tabular}[c]{@{}c@{}}Spread in $\mathcal{G}$\end{tabular}}  

& \textbf{Time}          
& \textbf{Accuracy}      
& \textbf{Spread in $\mathcal{G}$}

& \textbf{Time}          
& \textbf{Accuracy}      
& \textbf{Spread in $\mathcal{G}$} \\ \hline

\multirow{5}{*}{\rotatebox[origin=c]{90}{\textsc{Blocks-World}} \rotatebox[origin=c]{90}{(1076)}} 
& \multirow{5}{*}{20} 
										 & 10         & 1.8
											& 1271.282 / 398.090 & 41.4\% / 68.2\% & 11.41 / 3.34

											& 36.562 & 65.8\% & 9.11
											
											& \timeout / \timeout & - / - & - / -     \\
                   &                     & 30         & 4.9        
											& 1280.655 / 481.905 & 75.2\% / 47.1\% & 7.76 / 1.84
											
											& 36.648 & 78.1\% & 10.53
											
											& \timeout / \timeout & - / - & - / -     \\
                   &                     & 50         & 7.6        
											& 1284.269 / 445.196 & 84.1\% / 52.4\% & 7.24 / 1.78

											& 34.290 & 81.3\% & 10.68

											& \timeout / \timeout & - / - & - / -     \\
                   &                     & 70         & 11.1        
											& 1296.773 / 395.902 & 88.2\% / 59.3\% & 8.23 / 1.67

											& 37.056 & 89.8\% & 8.63
											
											& \timeout / \timeout & - / - & - / -     \\
                   &                     & 100        & 14.5       
											& 1305.220 / 288.751 & 94.5\% / 65.2\% & 8.66 / 1.71

											& 40.405 & 100.0\% & 1.22
											
											& \timeout / \timeout & - / - & - / -     \\ \hline

\multirow{5}{*}{\rotatebox[origin=c]{90}{\textsc{Campus}} \rotatebox[origin=c]{90}{(75)}} 
& \multirow{5}{*}{2} 
										 & 10         & 1
											& 1.021 / 1.038 & 93.3\% / 93.3\% & 1.33 / 1.33

											& 0.717 & 53.3\% & 1.0
											
											& 45.749 / 44.834 & 53.3\% / 53.3\% & 1.0 / 1.0     \\
                   &                     & 30         & 2        
											& 1.113 / 1.090 & 100.0\% / 100.0\% & 1.0 / 1.0

											& 0.696 & 80.0\% & 1.13
											
											& 48.438 / 48.112 & 53.3\% / 53.3\% & 1.0 / 1.0     \\
                   &                     & 50         & 3       
											& 1.333 / 1.285 & 100.0\% / 100.0\% & 1.0 / 1.0

											& 0.676 & 66.6\% & 1.26
											
											& 54.111 / 53.402 & 40.0\% / 40.0\% & 1.0 / 1.0     \\
                   &                     & 70         & 4.4        
											& 1.725 / 1.706 & 100.0\% / 100.0\% & 1.0 / 1.0

											& 0.668 & 86.6\% & 1.6
											
											& 89.708 / 89.037 & 53.3\% / 53.3\% & 1.0 / 1.0     \\
                   &                     & 100        & 5.5       
											& 1.809 / 1.771 & 100.0\% / 100.0\% & 1.0 / 1.0

											& 0.631 & 93.3\% & 1.53
											
											& 183.123 / 182.910 & 60.0\% / 60.0\% & 1.0 / 1.0     \\ \hline
											
\multirow{5}{*}{\rotatebox[origin=c]{90}{\textsc{Depots}} \rotatebox[origin=c]{90}{(364)}} 
& \multirow{5}{*}{8.5} 
										 & 10         & 3.1
											& 1347.85 / 1166.02 & 59.5\% / 50.0\% & 4.10 / 1.61

											& $\dag$ & $\dag$  & $\dag$   
											
											& \timeout / \timeout & - / - & - / -     \\
                   &                     & 30         & 8.6        
											& 1369.22 / 1037.97 & 44.0\% / 52.3\% & 2.44 / 0.90

											& $\dag$ & $\dag$  & $\dag$   
											
											& \timeout / \timeout & - / - & - / -     \\
                   &                     & 50         & 14.1       
											& 1335.18 / 1034.36 & 48.8\% / 50.0\% & 2.86 / 0.75

											& $\dag$ & $\dag$  & $\dag$   
											
											& \timeout / \timeout & - / - & - / -     \\
                   &                     & 70         & 19.7        
											& 1392.55 / 853.40 & 55.9\% / 61.9\% & 3.32 / 0.74

											& $\dag$ & $\dag$ & $\dag$   
											
											& \timeout / \timeout & - / - & - / -     \\
                   &                     & 100        & 24.4       
											& 1370.81 / 670.99 & 67.8\% / 75.0\% & 4.39 / 0.72

											& $\dag$ & $\dag$ & $\dag$   
											
											& \timeout / \timeout & - / - & - / -     \\ \hline
											
\multirow{5}{*}{\rotatebox[origin=c]{90}{\textsc{Driver-Log}} \rotatebox[origin=c]{90}{(364)}} 
& \multirow{5}{*}{10.5} 
										 & 10         & 2.6
											& 737.530 / 509.306 & 51.2\% / 52.4\% & 1.64 / 0.89

											& 79.487 & 42.8\% & 1.91
											
											& \timeout / \timeout & - / - & - / -     \\
                   &                     & 30         & 6.9        
											& 846.176 / 438.371 & 50.0\% / 60.7\% & 1.58 / 0.74

											& 60.168 & 70.2\% & 3.19
											
											& \timeout / \timeout & - / - & - / -     \\
                   &                     & 50         & 11.1       
											& 851.659 / 379.450 & 48.8\% / 70.2\% & 1.27 / 0.75

											& 64.427 & 79.7\% & 4.59
											
											& \timeout / \timeout & - / - & - / -     \\
                   &                     & 70         & 15.6        
											& 891.158 / 308.775 & 54.8\% / 91.7\% & 1.61 / 0.93

											& 75.084 & 82.1\% & 4.10
											
											& \timeout / \timeout & - / - & - / -     \\
                   &                     & 100        & 21.7       
											& 945.013 / 196.093 & 46.4\% / 96.4\% & 1.39 / 0.96

											& 96.091 & 96.4\% & 1.11
											
											& \timeout / \timeout & - / - & - / -     \\ \hline
\multirow{5}{*}{\rotatebox[origin=c]{90}{\textsc{DWR}} \rotatebox[origin=c]{90}{(364)}} 
& \multirow{5}{*}{7.25} 
										 & 10         & 5.7
											& 1246.506 / 1079.001 & 61.9\% / 56.0\% & 2.99 / 1.63

											& 66.496 & 92.8\% & 6.38
											
											& \timeout / \timeout & - / - & - / -     \\
                   &                     & 30         & 16        
											& 1476.392 / 1129.304 & 29.8\% / 39.3\% & 1.63 / 0.75

											& 54.461 & 97.6\% & 6.56
											
											& \timeout / \timeout & - / - & - / -     \\
                   &                     & 50         & 26.2        
											& 1501.524 / 1065.484 & 33.3\% / 40.5\% & 2.01 / 0.62

											& 56.255 & 98.8\% & 6.27
											
											& \timeout / \timeout & - / - & - / -     \\
                   &                     & 70         & 36.8        
											& 1505.305 / 980.959 & 34.5\% / 48.8\% & 2.17 / 0.63

											& 65.101 & 98.8\% & 6.0
											
											& \timeout / \timeout & - / - & - / -     \\
                   &                     & 100        & 51.9       
											& 1351.309 / 891.953 & 57.1\% / 57.1\% & 3.57 / 0.64

											& 86.459 & 100.0\% & 1.0
											
											& \timeout / \timeout & - / - & - / -     \\ \hline
\multirow{5}{*}{\rotatebox[origin=c]{90}{\textsc{IPC-Grid}} \rotatebox[origin=c]{90}{(673)}} 
& \multirow{5}{*}{9} 
										 & 10         & 2.9
											& 259.349 / 127.138 & 66.0\% / 62.7\% & 2.46 / 1.35

											& \timeout & -  & -   
											
											& \timeout / \timeout & - / - & - / -     \\
                   &                     & 30         & 7.8        
											& 377.482 / 143.669 & 85.6\% / 81.0\% & 1.44 / 0.90

											& \timeout & -  & -   
											
											& \timeout / \timeout & - / - & - / -     \\
                   &                     & 50         & 12.7        
											& 516.035 / 100.172 & 85.0\% / 90.2\% & 1.39 / 0.93

											& \timeout & -  & -   
											
											& \timeout / \timeout & - / - & - / -     \\
                   &                     & 70         & 17.9        
											& 639.157 / 140.685 & 81.7\% / 94.8\% & 1.76 / 0.95

											& \timeout & -  & -   
											
											& \timeout / \timeout & - / - & - / -     \\
                   &                     & 100        & 24.8       
											& 708.007 / 177.43 & 93.4\% / 100.0\% & 2.54 / 1.0

											& \timeout & -  & -   
											
											& \timeout / \timeout & - / - & - / -     \\ \hline
																						
\multirow{5}{*}{\rotatebox[origin=c]{90}{\textsc{Ferry}} \rotatebox[origin=c]{90}{(364)}} 
& \multirow{5}{*}{7.5} 
										 & 10         & 2.9
											& 251.648 / 120.596 & 89.3\% / 90.5\% & 2.08 / 1.44

											& 6.659 & 91.6\% & 6.65
											
											& \timeout / \timeout & - / - & - / -     \\
                   &                     & 30         & 7.6        
											& 425.151 / 91.857 & 83.3\% / 97.6\% & 1.31 / 1.06

											& 6.801 & 100.0\% & 7.57
											
											& \timeout / \timeout & - / - & - / -     \\
                   &                     & 50         & 12.3        
											& 662.567 / 54.453 & 66.7\% / 98.8\% & 1.24 / 1.02

											& 8.296 & 100.0\% & 7.57
											
											& \timeout / \timeout & - / - & - / -     \\
                   &                     & 70         & 17.3        
											& 820.501 / 40.898 & 59.5\% / 100.0\% & 1.26 / 1.0

											& 10.649 & 100.0\% & 7.32
											
											& \timeout / \timeout & - / - & - / -     \\
                   &                     & 100        & 24.2       
											& 1015.216 / 46.148 & 50.0\% / 100.0\% & 1.29 / 1.0

											& 13.625 & 100.0\% & 1.07
											
											& \timeout / \timeout & - / - & - / -     \\ \hline
																																		
\multirow{5}{*}{\rotatebox[origin=c]{90}{\textsc{Intrusion}} \rotatebox[origin=c]{90}{(465)}} 
& \multirow{5}{*}{15} 
										 & 10         & 1.9
											& 5.683 / 2.889 & 73.3\% / 81.0\% & 3.66 / 2.37

											& 0.475 & 89.5\% & 3.18
											
											& \timeout / \timeout & - / - & - / -     \\
                   &                     & 30         & 4.5        
											& 5.908 / 4.348 & 100.0\% / 100.0\% & 1.11 / 1.11

											& 0.476 & 90.5\% & 1.88
											
											& \timeout / \timeout & - / - & - / -     \\
                   &                     & 50         & 6.7        
											& 6.248 / 4.807 & 100.0\% / 100.0\% & 1.02 / 1.02

											& 0.496 & 94.3\% & 1.45
											
											& \timeout / \timeout & - / - & - / -     \\
                   &                     & 70         & 9.5        
											& 6.665 / 5.261 & 100.0\% / 100.0\% & 1.0 / 1.0

											& 0.637 & 99.1\% & 1.05
											
											& \timeout / \timeout & - / - & - / -     \\
                   &                     & 100        & 13.1       
											& 7.372 / 5.815 & 100.0\% / 100.0\% & 1.0 / 1.0

											& 0.828 & 100.0\% & 1.04
											
											& \timeout / \timeout & - / - & - / -     \\ \hline
\end{tabular}
\caption{Experiments and evaluation with missing and full observations for R\&G 2010 using \textsc{Fast-Downward} with LM-Cut heuristic, FGR 2015~\cite{NASA_GoalRecognition_IJCAI2015}, and IBM 2016 using TK$^*$ with LM-Cut heuristic, top-1000 (Part 1).}
\label{tab:goalRecognitionResults1_2}
\end{table*}
\end{landscape}
}

\afterpage{
\begin{landscape}
\begin{table*}[]
\centering
\fontsize{5}{8}\selectfont
\setlength\tabcolsep{3pt}
\begin{tabular}{|c|c|cc|ccc|ccc|ccc|}
\hline
                   &                     & \multicolumn{2}{c|}{} & \multicolumn{3}{c|}{{\footnotesize $\mathit{h_{gc}}$}}                                   & \multicolumn{3}{c|}{{\footnotesize $\mathit{h_{uniq}}$}}                                 & \multicolumn{3}{c|}{\scriptsize R\&G 2009 / Filter$_{10\%}$ + R\&G 2009}    \\ \hline
\#                 & $|\mathcal{G}|$                   & \textbf{\% Obs}        & $|O|$        & \textbf{\begin{tabular}[c]{@{}c@{}}Time\\$\theta$ (0 / 10 / 20)\end{tabular}}                  & \textbf{\begin{tabular}[c]{@{}c@{}}Accuracy\\$\theta$ (0 / 10 / 20)\end{tabular}}              & \textbf{\begin{tabular}[c]{@{}c@{}}Spread in $\mathcal{G}$\\$\theta$ (0 / 10 / 20)\end{tabular}}  & \textbf{\begin{tabular}[c]{@{}c@{}}Time\\$\theta$ (0 / 10 / 20)\end{tabular}}                  & \textbf{\begin{tabular}[c]{@{}c@{}}Accuracy\\$\theta$ (0 / 10 / 20)\end{tabular}}              & \textbf{\begin{tabular}[c]{@{}c@{}}Spread in $\mathcal{G}$\\$\theta$ (0 / 10 / 20)\end{tabular}}  & \textbf{Time}          & \textbf{Accuracy}      & \textbf{Spread in $\mathcal{G}$} \\ \hline

\multirow{5}{*}{\rotatebox[origin=c]{90}{\textsc{Kitchen}} \rotatebox[origin=c]{90}{(75)}} 
& \multirow{5}{*}{3} 
										 & 10         & 1.3
											& 0.003 / 0.003 / 0.004 & 93.3\% / 100\% / 100\% & 1.46 / 2.33 / 3 
											
											& 0.002 / 0.003 / 0.003 & 100\% / 100\% / 100\% & 1.33 / 2.60 / 3
											
											& 0.085 / 0.084 & 100\% / 100\% & 1.86 / 1.86     \\
                   &                     & 30         & 3.5        
											& 0.004 / 0.005 / 0.006 & 93.3\% / 100\% / 100\% & 1.46 / 2.33 / 3 
											
											& 0.003 / 0.004 / 0.005 & 100\% / 100\% / 100\% & 1.33 / 2.60 / 3
											
											& 0.097 / 0.098 & 100\% / 100\% & 1.33 / 1.33     \\
                   &                     & 50         & 4        
											& 0.003 / 0.003 / 0.003 & 93.3\% / 100\% / 100\% & 1.46 / 2.33 / 3
											
											& 0.006 / 0.005 / 0.007 & 100\% / 100\% / 100\% & 1.33 / 2.60 / 3
											
											& 0.104 / 0.103 & 100\% / 100\% & 1.46 / 1.46     \\
                   &                     & 70         & 5        
											& 0.008 / 0.008 / 0.009    & 93.3\% / 100\% / 100\% & 1.46 / 2.60 / 2.60 
											
											& 0.006 / 0.007 / 0.007    & 100\% / 100\% / 100\% & 1.46 / 2.33 / 2.60 
											
											& 0.115 / 0.116 & 100\% / 100\% & 1.26 / 1.26     \\
                   &                     & 100        & 7.4       
											& 0.008 / 0.009 / 0.009   & 100\% / 100\% / 100\% & 1.0 / 1.0 / 1.0
											
											& 0.007 / 0.006 / 0.008   & 100\% / 100\% / 100\% & 1.0 / 1.0 / 1.0
											
											& 0.119 / 0.115 & 100\% / 100\% & 1.26 / 1.26     \\ \hline

\multirow{5}{*}{\rotatebox[origin=c]{90}{\textsc{Logistics}} \rotatebox[origin=c]{90}{(673)}} 
& \multirow{5}{*}{10.5} 
										 & 10         & 2.9
											& 0.614 / 0.618 / 0.629 & 55.5\% / 84.3\% / 94.7\% & 1.72 / 3.58 / 4.22
											
											& 0.563 / 0.572 / 0.579 & 55.5\% / 77.1\% / 95.4\% & 1.24 / 2.84 / 3.50
											
											& 1.201 / 0.705 & 99.3\% / 90.8\% & 2.98 / 2.69     \\
                   &                     & 30         & 8.2        
											& 0.632 / 0.637 / 0.644 & 80.3\% / 95.4\% / 99.3\% & 1.20 / 2.14 / 3.58
											
											& 0.571 / 0.580 / 0.584 & 76.4\% / 86.9\% / 98.1\% & 1.20 / 1.96 / 2.84
											
											& 1.798 / 1.166 & 98.6\% / 97.3\% & 1.39 / 1.35     \\
                   &                     & 50         & 13.4        
											& 0.656 / 0.662 / 0.675 & 90.1\% / 99.3\% / 100\% & 1.10 / 1.92 / 2.14
											
											& 0.599 / 0.603 / 0.607 & 86.2\% / 95.4\% / 100\% & 1.10 / 1.50 / 1.77
											
											& 2.545 / 1.212 & 98.6\% / 98.6\% & 1.29 / 1.15     \\
                   &                     & 70         & 18.9        
											& 0.670 / 0.674 / 0.683 & 96.7\% / 98.6\% / 100\% & 1.05 / 1.39 / 1.57
											
											& 0.608 / 0.611 / 0.622 & 96.7\% / 98.6\% / 100\% & 1.05 / 1.47 / 1.50
											
											& 3.460 / 1.503 & 100\% / 100\% & 1.13 / 1.11     \\
                   &                     & 100        & 26.5       
											& 0.681 / 0.685 / 0.692 & 100\% / 100\% / 100\% & 1.0 / 1.0 / 1.27
											
											& 0.615 / 0.624 / 0.631 & 100\% / 100\% / 100\% & 1.0 / 1.0 / 1.08
											
											& 4.887 / 1.691 & 100\% / 100\% & 1.0 / 1.0     \\ \hline
											
\multirow{5}{*}{\rotatebox[origin=c]{90}{\textsc{Miconic}} \rotatebox[origin=c]{90}{(364)}} 
& \multirow{5}{*}{6} 
										 & 10         & 3.9
											& 0.350 / 0.361 / 0.366 & 67.8\% / 98.8\% / 100\% & 1.33 / 3.28 / 4.34
											
											& 0.321 / 0.325 / 0.334 & 54.7\% / 97.6\% / 100\% & 1.26 / 3.40 / 4.05
											
											& 0.838 / 0.521 & 100\% / 98.8\% & 3.26 / 3.15     \\
                   &                     & 30         & 11.1        
											& 0.357 / 0.369 / 0.370 & 96.4\% / 100\% / 100\% & 1.10 / 2.27 / 3.86
											
											& 0.326 / 0.333 / 0.341 & 90.1\% / 100\% / 100\% & 1.08 / 2.53 / 3.86
											
											& 1.196 / 0.707 & 100\% / 100\% & 1.58 / 1.55     \\
                   &                     & 50         & 18.1        
											& 0.368 / 0.373 / 0.375 & 96.4\% / 100\% / 100\% & 1.01 / 1.54 / 2.11
											
											& 0.339 / 0.342 / 0.352 & 96.4\% / 100\% / 100\% & 1.01 / 1.47 / 1.83
											
											& 1.722 / 1.099 & 100\% / 100\% & 1.28 / 1.27    \\
                   &                     & 70         & 25.3        
											& 0.372 / 0.378 / 0.384 & 100\% / 100\% / 100\% & 1.01 / 1.20 / 1.57
											
											& 0.344 / 0.357 / 0.365 & 100\% / 100\% / 100\% & 1.0 / 1.21 / 1.57
											
											& 2.504 / 1.516 & 100\% / 100\% & 1.03 / 1.03     \\
                   &                     & 100        & 35.6       
											& 0.389 / 0.394 / 0.397 & 100\% / 100\% / 100\% & 1.0 / 1.0 / 1.54
											
											& 0.356 / 0.363 / 0.372 & 100\% / 100\% / 100\% & 1.0 / 1.0 / 1.47
											
											& 5.105 / 2.013 & 100\% / 100\% & 1.0 / 1.0     \\ \hline

\multirow{5}{*}{\rotatebox[origin=c]{90}{\textsc{Rovers}} \rotatebox[origin=c]{90}{(364)}} 
& \multirow{5}{*}{6} 
										 & 10         & 3
											& 0.342 / 0.343 / 0.350 & 64.2\% / 91.6\% / 96.4\% & 1.72 / 2.45 / 3.83
											
											& 0.310 / 0.318 / 0.324 & 51.1\% / 79.7\% / 95.2\% & 1.10 / 3.01 / 3.42
											
											& 0.704 / 0.512 & 98.8\% / 95.2\% & 2.85 / 2.29     \\
                   &                     & 30         & 7.9        
											& 0.347 / 0.358 / 0.361 & 83.3\% / 91.1\% / 100\% & 1.23 / 2.14 / 3.72
											
											& 0.323 / 0.322 / 0.335 & 69.1\% / 90.1\% / 97.6\% & 1.07 / 2.19 / 2.46
											
											& 1.029 / 0.787 & 100\% / 97.6\% & 1.66 / 1.41     \\
                   &                     & 50         & 12.7        
											& 0.374 / 0.383 / 0.375 & 92.8\% / 96.4\% / 100\% & 1.08 / 1.72 / 3.01
											
											& 0.331 / 0.338 / 0.344 & 85.7\% / 95.2\% / 97.6\% & 1.01 / 1.57 / 2.19
											
											& 1.355 / 0.841 & 100\% / 98.8\% & 1.29 / 1.17     \\
                   &                     & 70         & 17.9        
											& 0.389 / 0.382 / 0.391 & 98.8\% / 100\% / 100\% & 1.01 / 1.35 / 2.14
											
											& 0.345 / 0.350 / 0.353 & 91.6\% / 98.8\% / 100\% & 1.0 / 1.20 / 1.58
											
											& 1.796 / 1.008 & 100\% / 100\% & 1.07 / 1.05     \\
                   &                     & 100        & 24.9       
											& 0.392 / 0.394 / 0.396 & 100\% / 100\% / 100\% & 1.0 / 1.07 / 1.35
											
											& 0.356 / 0.361 / 0.365 & 100\% / 100\% / 100\% & 1.0 / 1.03 / 1.25
											
											& 2.292 / 1.314 & 100\% / 100\% & 1.07 / 1.0     \\ \hline

\multirow{5}{*}{\rotatebox[origin=c]{90}{\textsc{Satellite}} \rotatebox[origin=c]{90}{(364)}} 
& \multirow{5}{*}{6.5} 
										 & 10         & 2.1
											& 0.458 / 0.466 / 0.471 & 57.1\% / 85/7\% / 100\% & 1.55 / 2.33 / 2.88
											
											& 0.431 / 0.445 / 0.456 & 47.6\% / 79.7\% / 96.4\% & 1.21 / 2.09 / 2.33
											
											& 1.049 / 0.599 & 97.6\% / 90.4\% & 3.41 / 3.02     \\
                   &                     & 30         & 5.4        
											& 0.465 / 0.474 / 0.482 & 76.1\% / 94.1\% / 100\% & 1.31 / 1.92 / 2.13 
											
											& 0.442 / 0.454 / 0.460 & 69.1\% / 89.2\% / 97.6\% & 1.14 / 1.91 / 2.09
											
											& 1.182 / 0.723 & 97.6\% / 96.4\% & 2.40 / 1.94     \\
                   &                     & 50         & 8.7        
											& 0.472 / 0.485 / 0.494 & 85.7\% / 98.8\% / 100\% & 1.09 / 1.48 / 1.91
											
											& 0.458 / 0.463 / 0.477 & 80.9\% / 91.6\% / 98.8\% & 1.10 / 1.63 / 1.91
											
											& 1.398 / 0.901 & 97.6\% / 97.6\% & 1.69 / 1.47     \\
                   &                     & 70         & 12.2        
											& 0.489 / 0.490 / 0.498 & 97.6\% / 100\% / 100\% & 1.07 / 1.48 / 1.75
											
											& 0.460 / 0.471 / 0.486 & 94.1\% / 98.8\% / 100\% & 1.03 / 1.34 / 1.63
											
											& 1.884 / 1.076 & 96.4\% / 100\% & 1.52 / 1.25     \\
                   &                     & 100        & 16.8       
											& 0.491 / 0.499 / 0.512 & 100\% / 100\% / 100\% & 1.02 / 1.21 / 1.63
											
											& 0.475 / 0.482 / 0.490 & 100\% / 100\% / 100\% & 1.07 / 1.21 / 1.50
											
											& 2.107 / 1.224 & 96.4\% / 100\% & 1.33 / 1.10     \\ \hline

\multirow{5}{*}{\rotatebox[origin=c]{90}{\textsc{Sokoban}} \rotatebox[origin=c]{90}{(364)}} 
& \multirow{5}{*}{7.25} 
										 & 10         & 3.1
											& 0.549 / 0.552 / 0.554 & 53.5\% / 86.9\% / 88.1\% & 2.05 / 2.89 / 3.78
											
											& 0.523 / 0.530 / 0.539 & 51.1\% / 67.8\% / 88.1\% & 1.85 / 2.78 / 3.04
											
											& 3.025 / 1.857 & 69.1\% / 71.4\% & 4.02 / 2.51     \\
                   &                     & 30         & 8.7        
											& 0.555 / 0.560 / 0.562 & 57.1\% / 77.3\% / 84.5\% & 1.36 / 1.81 / 2.69
											
											& 0.531 / 0.538 / 0.543 & 55.9\% / 69.1\% / 84.5\% & 1.21 / 1.77 / 2.69
											
											& 4.429 / 2.081 & 89.2\% / 76.1\% & 4.10 / 1.67     \\
                   &                     & 50         & 14.1        
											& 0.568 / 0.571 / 0.573 & 71.4\% / 88.1\% / 94.1\% & 1.32 / 1.80 / 2.02
											
											& 0.540 / 0.544 / 0.551 & 69.1\% / 83.3\% / 91.6\% & 1.20 / 1.80 / 1.82
											
											& 7.553 / 2.409 & 89.2\% / 85.7\% & 4.16 / 1.63     \\
                   &                     & 70         & 19.8        
											& 0.577 / 0.580 / 0.585 & 83.3\% / 91.6\% / 96.4\% & 1.04 / 1.31 / 1.80
											
											& 0.554 / 0.556 / 0.558 & 86.9\% / 92.8\% / 95.2\% & 1.08 / 1.60 / 1.77
											
											& 9.112 / 2.572 & 89.2\% / 86.9\% & 4.17 / 1.19     \\
                   &                     & 100        & 35.5       
											& 0.586 / 0.591 / 0.598 & 100\% / 100\% / 100\% & 1.0 / 1.0 / 1.0
											
											& 0.562 / 0.572 / 0.574 & 100\% / 100\% / 100\% & 1.0 / 1.03 / 1.28
											
											& 12.008 / 2.610 & 89.2\% / 100\% & 4.53 / 1.03     \\ \hline
																						
\multirow{5}{*}{\rotatebox[origin=c]{90}{\textsc{Zeno-Travel}} \rotatebox[origin=c]{90}{(364)}} 
& \multirow{5}{*}{7.5} 
										 & 10         & 2.6
											& 0.502 / 0.511 / 0.528 & 39.2\% / 55.9\% / 80.9\% & 1.15 / 1.92 / 3.04 
											
											& 0.491 / 0.502 / 0.509 & 36.9\% / 48.8\% / 70.2\% & 1.04 / 1.92 / 2.13
											
											& 1.834 / 1.207 & 96.4\% / 66.6\% & 3.41 / 1.58     \\
                   &                     & 30         & 6.7        
											& 0.517 / 0.523 / 0.536 & 70.2\% / 78.5\% / 91.6\% & 1.10 / 1.73 / 2.26
											
											& 0.504 / 0.515 / 0.520 & 60.7\% / 79.7\% / 90.4\% & 1.02 / 1.71 / 1.73
											
											& 2.528 / 1.396 & 88.1\% / 79.7\% & 2.11 / 1.29     \\
                   &                     & 50         & 10.8        
											& 0.521 / 0.534 / 0.544 & 78.5\% / 86.9\% / 95.2\% & 1.07 / 1.40 / 1.71
											
											& 0.516 / 0.521 / 0.528 & 76.1\% / 88.1\% / 95.2\% & 1.0 / 1.57 / 1.61
											
											& 3.071 / 1.513 & 92.8\% / 90.4\% & 1.41 / 1.14     \\
                   &                     & 70         & 15.2        
											& 0.535 / 0.542 / 0.550 & 97.6\% / 97.6\% / 100\% & 1.04 / 1.14 / 1.40
											
											& 0.522 / 0.533 / 0.539 & 90.4\% / 95.2\% / 100\% & 1.0 / 1.29 / 1.57
											
											& 3.986 / 1.605 & 96.4\% / 100\% & 1.13 / 1.0     \\
                   &                     & 100        & 21.1       
											& 0.548 / 0.555 / 0.564 & 100\% / 100\% / 100\% & 1.0 / 1.0 / 1.07
											
											& 0.530 / 0.541 / 0.552 & 100\% / 100\% / 100\% & 1.0 / 1.07 / 1.10
											
											& 4.815 / 1.722 & 100\% / 100\% & 1.07 / 1.0     \\ \hline
																					
\end{tabular}
\caption{Experiments and evaluation with missing and full observations for $\mathit{h_{gc}}$, $\mathit{h_{uniq}}$, R\&G 2009, and our filtering method (10\% of threshold) with R\&G 2009 (Part 2).}
\label{tab:goalRecognitionResults2_1}
\end{table*}
\end{landscape}
}

\afterpage{
\begin{landscape}
\begin{table*}[]
\centering
\fontsize{5}{8}\selectfont
\setlength\tabcolsep{3pt}
\begin{tabular}{|c|c|cc|ccc|ccc|ccc|}
\hline
                   &                     
				   & \multicolumn{2}{c|}{} & \multicolumn{3}{c|}{\begin{tabular}[c]{@{}c@{}}\scriptsize R\&G 2010 / Filter$_{10\%}$ + R\&G 2010 \\ \scriptsize (\textsc{Fast-Downward} with LM-Cut heuristic)\end{tabular}}                        
				   
				   & \multicolumn{3}{c|}{\begin{tabular}[c]{@{}c@{}}\scriptsize FGR 2015\end{tabular}}				   
				     
				   & \multicolumn{3}{c|}{\begin{tabular}[c]{@{}c@{}}\scriptsize IBM 2016 / Filter$_{10\%}$ + IBM 2016 \\ \scriptsize (TK$^*$ with LM-Cut heuristic, top-1000)\end{tabular}}    \\ \hline
\#                 & $|\mathcal{G}|$                   & \textbf{\% Obs}        & $|O|$        

& \textbf{\begin{tabular}[c]{@{}c@{}}Time\end{tabular}}                  
& \textbf{\begin{tabular}[c]{@{}c@{}}Accuracy\end{tabular}}              
& \textbf{\begin{tabular}[c]{@{}c@{}}Spread in $\mathcal{G}$\end{tabular}}

& \textbf{Time}          
& \textbf{Accuracy}      
& \textbf{Spread in $\mathcal{G}$}

& \textbf{Time}          
& \textbf{Accuracy}      
& \textbf{Spread in $\mathcal{G}$} \\ \hline

\multirow{5}{*}{\rotatebox[origin=c]{90}{\textsc{Kitchen}} \rotatebox[origin=c]{90}{(75)}} 
& \multirow{5}{*}{3} 
										 & 10         & 1.3
											& 1.310 / 1.222 & 93.3\% / 93.3\% & 1.33 / 1.33

											& 0.373 & 100.0\% & 1.86
											
											& \timeout / \timeout & - / - & - / -     \\
                   &                     & 30         & 3.5        
											& 1.365 / 1.238 & 93.3\% / 93.3\% & 1.20 / 1.13

											& 0.360 & 100.0\% & 1.33
											
											& \timeout / \timeout & - / - & - / -     \\
                   &                     & 50         & 4        
											& 1.571 / 1.438 & 100.0\% / 100.0\% & 1.33 / 1.33

											& 0.392 & 100.0\% & 1.33
											
											& \timeout / \timeout & - / - & - / -     \\
                   &                     & 70         & 5        
											& 1.702 / 1.609 & 100.0\% / 100.0\% & 1.20 / 1.20

											& 0.378 & 100.0\% & 1.20
											
											& \timeout / \timeout & - / - & - / -     \\
                   &                     & 100        & 7.4       
											& 2.144 / 1.983 & 100.0\% / 100.0\% & 1.40 / 1.40

											& 0.483 & 100.0\% & 1.40
											
											& \timeout / \timeout & - / - & - / -     \\ \hline

\multirow{5}{*}{\rotatebox[origin=c]{90}{\textsc{Logistics}} \rotatebox[origin=c]{90}{(673)}} 
& \multirow{5}{*}{10.5} 
										 & 10         & 2.9
											& 334.315 / 288.357 & 65.4\% / 69.3\% & 3.57 / 1.52

											& $\dag$ & $\dag$ & $\dag$   
											
											& \timeout / \timeout & - / - & - / -     \\
                   &                     & 30         & 8.2        
											& 411.948 / 321.727 & 81.7\% / 83.0\% & 2.26 / 0.99

											& $\dag$ & $\dag$ & $\dag$
											
											& \timeout / \timeout & - / - & - / -     \\
                   &                     & 50         & 13.4        
											& 431.775 / 218.848 & 78.4\% / 88.9\% & 1.97 / 0.93

											& $\dag$ & $\dag$ & $\dag$
											
											& \timeout / \timeout & - / - & - / -     \\
                   &                     & 70         & 18.9        
											& 409.629 / 181.421 & 83.0\% / 92.2\% & 2.22 / 0.94

											& $\dag$ & $\dag$ & $\dag$
											
											& \timeout / \timeout & - / - & - / -     \\
                   &                     & 100        & 26.5       
											& 310.563 / 109.189 & 91.8\% / 95.1\% & 2.43 / 0.95

											& $\dag$ & $\dag$ & $\dag$
											
											& \timeout / \timeout & - / - & - / -     \\ \hline
											
\multirow{5}{*}{\rotatebox[origin=c]{90}{\textsc{Miconic}} \rotatebox[origin=c]{90}{(364)}} 
& \multirow{5}{*}{6} 
										 & 10         & 3.9
											& 313.117 / 258.977 & 89.3\% / 92.9\% & 2.07 / 1.63

											& $\dag$ & $\dag$ & $\dag$
											
											& \timeout / \timeout & - / - & - / -     \\
                   &                     & 30         & 11.1        
											& 590.095 / 338.774 & 59.5\% / 88.1\% & 1.18 / 1.0

											& $\dag$ & $\dag$ & $\dag$
											
											& \timeout / \timeout & - / - & - / -     \\
                   &                     & 50         & 18.1        
											& 578.263 / 224.679 & 61.9\% / 100.0\% & 1.10 / 1.04

											& $\dag$ & $\dag$ & $\dag$
											
											& \timeout / \timeout & - / - & - / -     \\
                   &                     & 70         & 25.3        
											& 577.087 / 123.984 & 61.9\% / 100.0\% & 1.07 / 1.0

											& $\dag$ & $\dag$ & $\dag$
											
											& \timeout / \timeout & - / - & - / -     \\
                   &                     & 100        & 35.6       
											& 155.422 / 30.114 & 100.0\% / 100.0\% & 1.0 / 1.0

											& $\dag$ & $\dag$ & $\dag$
											
											& \timeout / \timeout & - / - & - / -     \\ \hline

\multirow{5}{*}{\rotatebox[origin=c]{90}{\textsc{Rovers}} \rotatebox[origin=c]{90}{(364)}} 
& \multirow{5}{*}{6} 
										 & 10         & 3
											& 551.746 / 524.997 & 88.1\% / 48.8\% & 4.18 / 0.95

											& $\dag$ & $\dag$ & $\dag$
											
											& \timeout / \timeout & - / - & - / -     \\
                   &                     & 30         & 7.9        
											& 589.213 / 518.466 & 94.0\% / 57.1\% & 3.29 / 0.64

											& $\dag$ & $\dag$ & $\dag$
											
											& \timeout / \timeout & - / - & - / -     \\
                   &                     & 50         & 12.7        
											& 640.802 / 518.649 & 88.1\% / 56.0\% & 3.20 / 0.62

											& $\dag$ & $\dag$ & $\dag$
											
											& \timeout / \timeout & - / - & - / -     \\
                   &                     & 70         & 17.9        
											& 641.881 / 517.961 & 81.0\% / 57.1\% & 3.04 / 0.61

											& $\dag$ & $\dag$ & $\dag$
											
											& \timeout / \timeout & - / - & - / -     \\
                   &                     & 100        & 24.9       
											& 589.669 / 518.319 & 85.7\% / 57.1\% & 3.0 / 0.57

											& $\dag$ & $\dag$ & $\dag$
											
											& \timeout / \timeout & - / - & - / -     \\ \hline

\multirow{5}{*}{\rotatebox[origin=c]{90}{\textsc{Satellite}} \rotatebox[origin=c]{90}{(364)}} 
& \multirow{5}{*}{6.5} 
										 & 10         & 2.1
											& 477.756 / 418.734 & 65.5\% / 81.0\% & 2.40 / 1.74

											& 14.821 & 89.3\% & 4.86
											
											& \timeout / \timeout & - / - & - / -     \\
                   &                     & 30         & 5.4        
											& 488.884 / 278.747 & 79.8\% / 89.3\% & 1.98 / 1.30

											& 32.172 & 86.9\% & 4.21
											
											& \timeout / \timeout & - / - & - / -     \\
                   &                     & 50         & 8.7        
											& 553.301 / 208.331 & 79.8\% / 90.5\% & 1.79 / 1.11

											& 51.567 & 88.1\% & 3.65
											
											& \timeout / \timeout & - / - & - / -     \\
                   &                     & 70         & 12.2        
											& 520.356 / 186.682 & 79.8\% / 94.0\% & 1.54 / 1.06

											& 75.363 & 92.8\% & 2.89
											
											& \timeout / \timeout & - / - & - / -     \\
                   &                     & 100        & 16.8       
											& 455.197 / 172.089 & 82.1\% / 96.4\% & 1.71 / 1.04

											& 113.381 & 100.0\% & 2.57
											
											& \timeout / \timeout & - / - & - / -     \\ \hline

\multirow{5}{*}{\rotatebox[origin=c]{90}{\textsc{Sokoban}} \rotatebox[origin=c]{90}{(364)}} 
& \multirow{5}{*}{7.25} 
										 & 10         & 3.1
											& 637.342 / 377.933 & 70.8\% / 70.8\% & 1.31 / 0.85

											& 461.701 & 67.8\% & 2.98
											
											& \timeout / \timeout & - / - & - / -     \\
                   &                     & 30         & 8.7        
											& 850.272 / 310.082 & 51.4\% / 62.5\% & 1.15 / 0.63

											& 370.412 & 83.3\% & 3.14
											
											& \timeout / \timeout & - / - & - / -     \\
                   &                     & 50         & 14.1        
											& 1029.601 / 310.888 & 38.9\% / 68.1\% & 1.19 / 0.74

											& 358.028 & 82.1\% & 2.27
											
											& \timeout / \timeout & - / - & - / -     \\
                   &                     & 70         & 19.8        
											& 1082.685 / 176.873 & 37.5\% / 86.1\% & 1.11 / 0.88

											& 353.721 & 85.7\% & 1.84
											
											& \timeout / \timeout & - / - & - / -     \\
                   &                     & 100        & 35.5       
											& 1153.979 / 108.782 & 29.2\% / 95.8\% & 1.21 / 0.96

											& 353.183 & 85.7\% & 1.03
											
											& \timeout / \timeout & - / - & - / -     \\ \hline
																						
\multirow{5}{*}{\rotatebox[origin=c]{90}{\textsc{Zeno-Travel}} \rotatebox[origin=c]{90}{(364)}} 
& \multirow{5}{*}{7.5} 
										 & 10         & 2.6
											& 782.17 / 405.126 & 45.2\% / 53.6\% & 1.70 / 0.76

											& 93.917 & 66.6\% & 1.63
											
											& \timeout / \timeout & - / - & - / -     \\
                   &                     & 30         & 6.7        
											& 829.058 / 458.575 & 46.4\% / 66.7\% & 1.27 / 0.76

											& 88.285 & 78.6\% & 2.27
											
											& \timeout / \timeout & - / - & - / -     \\
                   &                     & 50         & 10.8        
											& 884.339 / 382.015 & 39.3\% / 71.4\% & 1.13 / 0.75

											& 105.814 & 91.6\% & 2.56
											
											& \timeout / \timeout & - / - & - / -     \\
                   &                     & 70         & 15.2        
											& 922.641 / 221.105 & 38.1\% / 81.0\% & 1.07 / 0.81

											& 125.652 & 94.1\% & 2.58
											
											& \timeout / \timeout & - / - & - / -     \\
                   &                     & 100        & 21.1       
											& 949.088 / 153.976 & 39.3\% / 89.3\% & 1.07 / 0.89

											& 168.674 & 100.0\% & 1.0
											
											& \timeout / \timeout & - / - & - / -     \\ \hline
																					
\end{tabular}
\caption{Experiments and evaluation with missing and full observations for R\&G 2010 using \textsc{Fast-Downward} with LM-Cut heuristic, FGR 2015~\cite{NASA_GoalRecognition_IJCAI2015}, and IBM 2016 using TK$^*$ with LM-Cut heuristic, top-1000 (Part 2).}
\label{tab:goalRecognitionResults2_2}
\end{table*}
\end{landscape}
}

\newpage
\subsubsection{Experimental Results with Missing, Noisy, and Full Observations}

Our second set of experiments uses datasets containing hundreds of problems for four domains with missing, noisy, and full observations.
Tables~\ref{tab:goalRecognitionResultsWithNoisy1} and \ref{tab:goalRecognitionResultsWithNoisy2} compare results for the experiments with missing, noisy, and full observations for our goal recognition heuristics (using a threshold between 0\% and 10\%) against R\&G 2009~\cite{RamirezG_IJCAI2009}, R\&G 2010~\cite{RamirezG_AAAI2010}, FGR 2015~\cite{NASA_GoalRecognition_IJCAI2015}, and~\citeauthor{Sohrabi_IJCAI2016}~(\citeyear{Sohrabi_IJCAI2016}), denoted as IBM 2016. 
We use our filtering method with 10\% of threshold alongside all these three approaches, denoted as Filter$_{10\%}$.
For this set of experiments, we used the same 4 domains used by~\citeauthor{Sohrabi_IJCAI2016}~(\citeyear{Sohrabi_IJCAI2016}). 
In these experiments, column $|N|$ represents the average number of noisy observations, \idest, extra observations that we added randomly to the observation sequence $O$. 
These two extra observations represent 12\% of noise regarding the total number of observations~\cite{Sohrabi_IJCAI2016}. 
Since \citeauthor{Sohrabi_IJCAI2016}~(\citeyear{Sohrabi_IJCAI2016}) timed out for all recognition problems, we are unable to provide a direct comparison of accuracy and runtime performance against this approach. 
However, given our understanding of the underlying technique, we believe that our approaches are almost certainly more computationally efficient, since they use a top-K planner~\cite{katz_etal_icaps18} during the recognition process (extracting 1000 sampled plans), much like \cite{RamirezG_AAAI2010}. 
The FGR 2015 approach is closer to ours in runtime performance for noisy observations, but still two to ten times slower. 

Under noisy observations, it is clear from the results in Tables~\ref{tab:goalRecognitionResultsWithNoisy1} and~\ref{tab:goalRecognitionResultsWithNoisy2} that the approaches R\&G 2009 and R\&G 2010 are not only much slower but substantially less accurate (with threshold value of 10\%) than our heuristics for virtually all 4 domains, reaching a low of 4.4 and 3.3 percent (respectively) of accuracy in the \textsc{IPC-Grid} domain. 
However, using the recognition threshold = 0\%, the R\&G 2009 and R\&G 2010 approaches are more accurate than our heuristics for two particular domains, more specifically, for \textsc{Intrusion} and \textsc{Campus} (respectively), while the FGR 2015 approach is more accurate than ours for the \textsc{Intrusion} domain, as well as \textsc{Campus} and \textsc{Kitchen} under some conditions (multiple noisy observations). 
Our uniqueness heuristic $\mathit{h_{uniq}}$ performed better (more accurate and faster) than the goal completion heuristic $\mathit{h_{gc}}$ for all 4 domains. 
Regarding the difference in accuracy for $h_{gc}$ and $h_{uniq}$ in the Kitchen domain under low observability, note the number of useful actions actually observed (2.5, 2 of which are known to be noise) at that observability level. 
This means that on average, in this domain, each experiment will have seen mostly noise and possibly one or two actions, or at times, no non-noisy action. 
Under these conditions being able to get the most information out of the observation (and correctly ignoring noise) is key. 
Here, the analysis of propositions that are not landmarks performed by the FGR 2015 approach seems to allow coping with a substantial amount of noisy versus non-noisy observations better than our approach. 
Note that by increasing the threshold parameter, we increase the spread to closer values to FGR 2015 and reach similar levels of accuracy. 

Figure~\ref{fig:rocspace_alldomains-noisy} shows the trade-off between true positive results and false positive results in a ROC space for all 4 domains with missing, noisy, and full observations. 
Figures~\ref{fig:recognition_time-noisy_1} and~\ref{fig:recognition_time-noisy_2} show a comparison of recognition time for our heuristics against the approaches R\&G 2009 and R\&G 2010. We used separate graphs for R\&G 2010, Filter$_{10\%} + $ R\&G 2010, and FGR 2015 given the widely different magnitude of the time taken to recognize a goal.

\begin{figure}[ht!]
  \centering
  \includegraphics[width=0.85\linewidth]{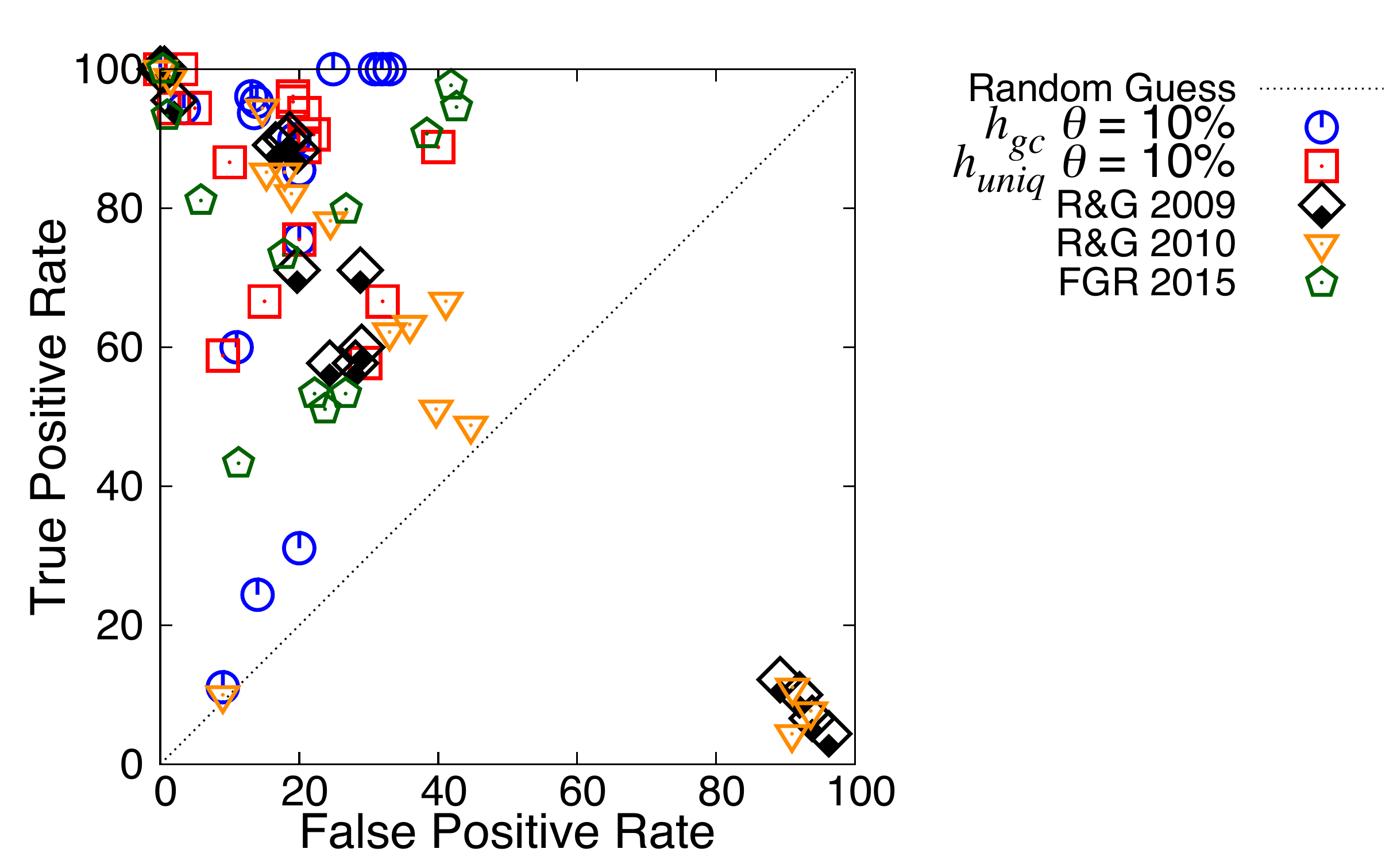}
  \caption{ROC space for all domains with missing, noisy, and full observations for our landmark-based heuristics ($\mathit{h_{gc}}$ and $\mathit{h_{uniq}}$) against R\&G 2009~\cite{RamirezG_IJCAI2009}, R\&G 2010~\cite{RamirezG_AAAI2010}, and FGR 2015~\cite{NASA_GoalRecognition_IJCAI2015}.}
  \label{fig:rocspace_alldomains-noisy}
\end{figure}

\begin{figure}[ht!]
  \centering
  \includegraphics[width=0.72\linewidth]{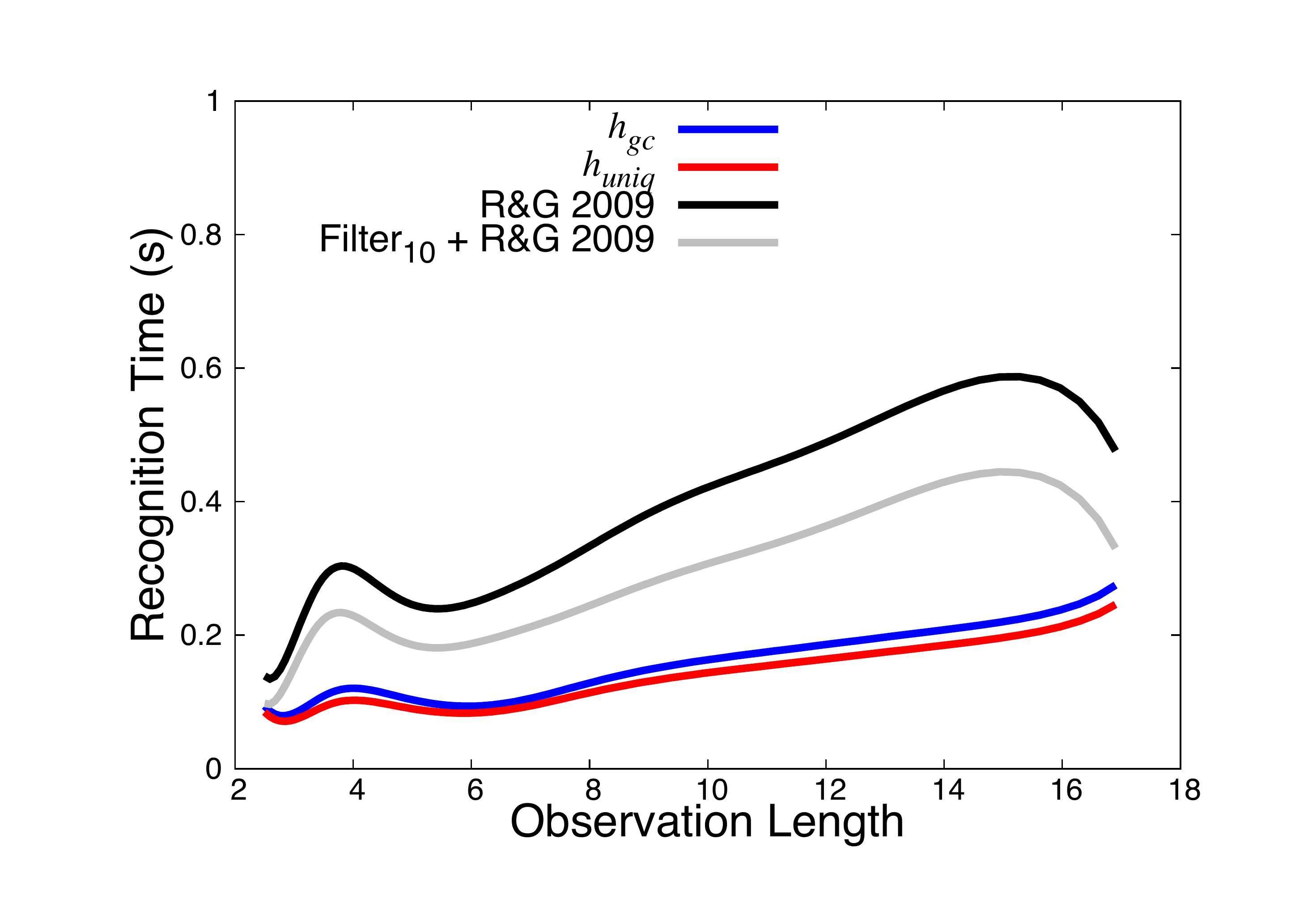}
  \caption{Recognition time comparison for missing, noisy, and full observations for our landmark-based heuristics ($\mathit{h_{gc}}$ and $\mathit{h_{uniq}}$) against R\&G 2009~\cite{RamirezG_IJCAI2009}, and R\&G 2009 using our filtering method with 10\% of threshold.}
  \label{fig:recognition_time-noisy_1}
\end{figure}

\begin{figure}[ht!]
  \centering
  \includegraphics[width=0.72\linewidth]{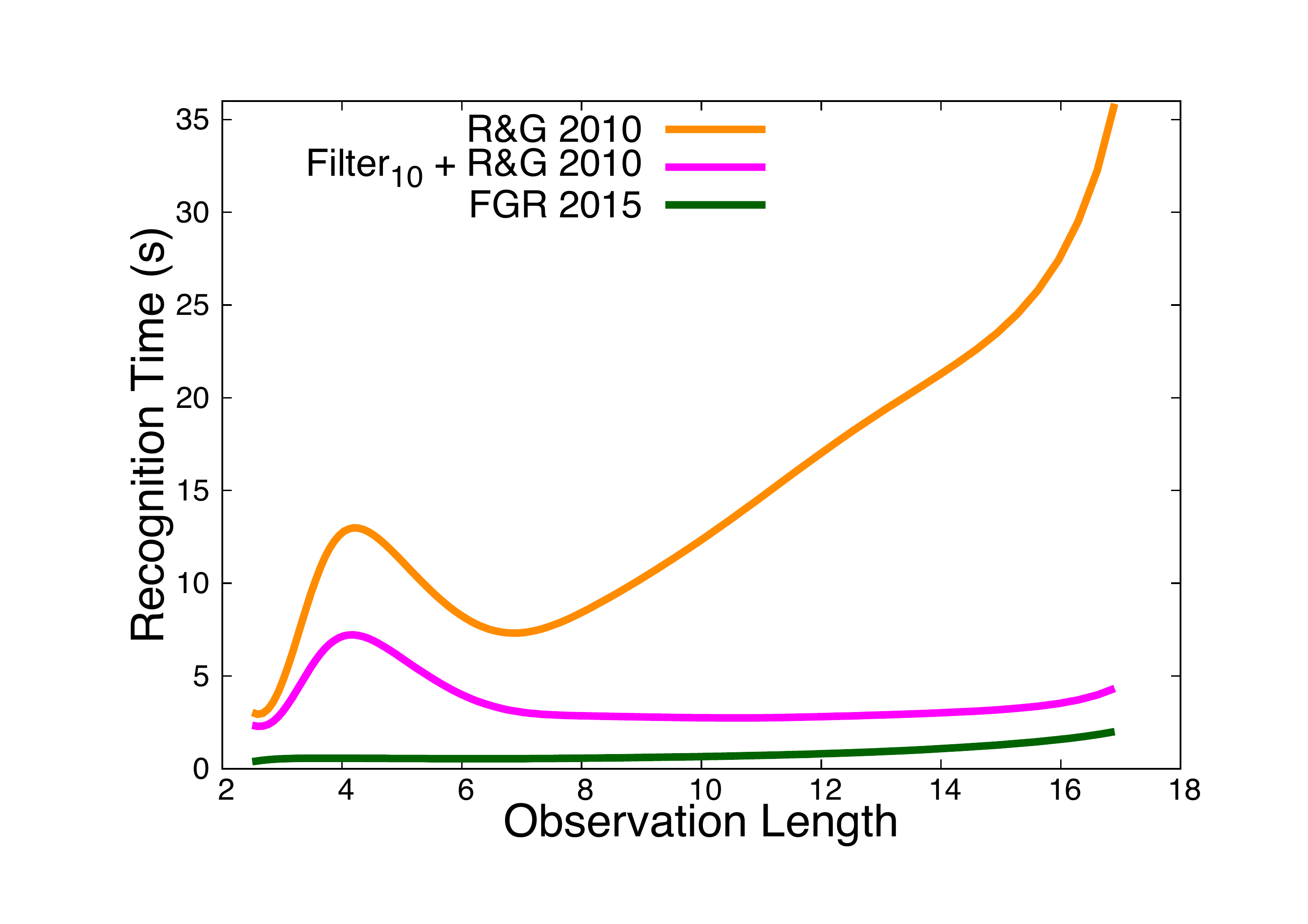}
  \caption{Recognition time comparison for missing, noisy, and full observations for R\&G 2010 using \textsc{Fast-Downward} with LM-Cut heuristic~\cite{RamirezG_AAAI2010}, R\&G 2010 using our filtering method with 10\% of threshold, and FGR 2015~\cite{NASA_GoalRecognition_IJCAI2015}.}
  \label{fig:recognition_time-noisy_2}
\end{figure}

\afterpage{
\begin{landscape}
\begin{table*}[]
\centering
\fontsize{7}{10}\selectfont
\setlength\tabcolsep{4pt}
\begin{tabular}{|c|c|ccc|ccc|ccc|ccc|}
\hline
                   &                    & \multicolumn{3}{c|}{} & \multicolumn{3}{c|}{\scriptsize $\mathit{h_{gc}}$}                                   & \multicolumn{3}{c|}{\scriptsize $\mathit{h_{uniq}}$}                                 & \multicolumn{3}{c|}{\scriptsize R\&G 2009 / Filter$_{10\%}$ + R\&G 2009}    \\ \hline
\#                 & $|\mathcal{G}|$                  & \% \textbf{Obs}   &  $|N|$   & $|O + N|$        & \textbf{\begin{tabular}[c]{@{}c@{}}Time\\$\theta$ (0 / 10)\end{tabular}}                  & \textbf{\begin{tabular}[c]{@{}c@{}}Accuracy\\$\theta$ (0 / 10)\end{tabular}}             & \textbf{\begin{tabular}[c]{@{}c@{}}Spread in $\mathcal{G}$\\$\theta$ (0 / 10)\end{tabular}}  & \textbf{\begin{tabular}[c]{@{}c@{}}Time\\$\theta$ (0 / 10)\end{tabular}}                  & \textbf{\begin{tabular}[c]{@{}c@{}}Accuracy\\$\theta$ (0 / 10)\end{tabular}}             & \textbf{\begin{tabular}[c]{@{}c@{}}Spread in $\mathcal{G}$\\$\theta$ (0 / 10)\end{tabular}}  & \textbf{Time}          & \textbf{Accuracy}      & \textbf{Spread in $\mathcal{G}$ }\\ \hline
\multirow{4}{*}{\rotatebox[origin=c]{90}{\textsc{Campus}} \rotatebox[origin=c]{90}{(516)}} & \multirow{4}{*}{2}
										& 25         
											& 2 & 3.1
												& 0.031 / 0.034 & 68.2\% / 89.9\% & 1.0 / 1.28
												& 0.030 / 0.032 & 82.1\% / 90.6\% & 1.13 / 1.44 
												& 0.073 / 0.060  & 88.3\% / 78.2\%  & 1.27 / 1.13      \\
                   &                    & 50         
				   							& 2 & 4.5
				   								& 0.033 / 0.035 & 75.9\% / 93.7\% & 1.0 / 1.20
												& 0.031 / 0.032 & 78.2\% / 93.7\% & 1.02 / 1.43 
												& 0.076 / 0.068 & 89.9\% / 82.1\% & 1.26 / 1.09     \\
                   &                    & 75         
				   							& 2 & 6.4
				   								& 0.035 / 0.039 & 73.6\% / 96.2\% & 1.0 / 1.22
												& 0.034 / 0.036 & 73.6\% / 96.1\% & 1.0 / 1.42
												& 0.079 / 0.071  & 90.6\% / 85.2\% & 1.27 / 1.10     \\
                   &                    & 100        
				   							& 2 & 7.5
				   								& 0.038 / 0.041 & 72.1\% / 95.3\% & 1.0 / 1.23
												& 0.037 / 0.039 & 72.1\% / 95.3\% & 1.0 / 1.41
												& 0.084 / 0.080 & 89.1\% / 85.2\% & 1.22 / 1.06     \\ \hline
\multirow{4}{*}{\rotatebox[origin=c]{90}{\textsc{Intrusion}} \rotatebox[origin=c]{90}{(300)}} & \multirow{4}{*}{16.6} 
										& 25         
											& 2 & 3.6
												& 0.125 / 0.127 & 33.3\% / 68.8\% & 1.11 / 4.43
												& 0.102 / 0.032 & 30.0\% / 58.8\% & 1.11 / 3.94 
												& 0.537 / 0.456 & 71.1\% / 63.3\% & 2.65 / 2.34     \\
                   &                    & 50         
				   							& 2 & 6.7
				   								& 0.134 / 0.135 & 83.3\% / 93.3\% & 1.06 / 2.04
												& 0.116 / 0.118 & 64.4\% / 88.8\% & 1.03 / 2.68 
												& 0.649 / 0.483 & 95.5\% / 94.4\% & 1.28 / 1.27     \\
                   &                    & 75         
				   							& 2 & 10.2
				   								& 0.146 / 0.150 & 94.4\% / 98.8\% & 1.01 / 1.33
												& 0.124 / 0.130 & 87.7\% / 94.4\% & 1.03 / 1.82
												& 0.712 / 0.524 & 100\% / 98.8\% & 1.01 / 1.01     \\
                   &                    & 100        
				   							& 2 & 15.1
				   								& 0.155 / 0.159 & 100\% / 100\% & 1.0 / 1.10
												& 0.136 / 0.138 & 100\% / 100\% & 1.0 / 1.63
												& 0.805 / 0.659 & 100\% / 100.0\% & 1.0 / 1.0     \\ \hline
\multirow{4}{*}{\rotatebox[origin=c]{90}{\textsc{IPC-Grid}} \rotatebox[origin=c]{90}{(300)}} & \multirow{4}{*}{8.3} 
										& 25         
											& 2 & 4.1
												& 0.253 / 0.260 & 58.8\% / 75.5\% & 1.76 / 2.95
												& 0.208 / 0.211 & 53.3\% / 75.5\% & 1.72 / 2.83 
												& 0.462 / 0.301 & 12.2\% / 11.1\% & 7.55 / 2.81     \\
                   &                    & 50         
				   							& 2 & 7.6
				   								& 0.261 / 0.267 & 85.5\% / 85.5\% & 1.33 / 1.71
												& 0.212 / 0.220 & 83.3\% / 86.6\% & 1.33 / 1.71 
												& 0.469 / 0.312 & 4.4\% / 4.4\% & 8.06 / 1.61     \\
                   &                    & 75         
				   							& 2 & 11.5
				   								& 0.269 / 0.272 & 94.4\% / 94.4\% & 1.08 / 1.23
												& 0.224 / 0.233 & 94.4\% / 94.4\% & 1.08 / 1.15
												& 0.475 / 0.323 & 6.6\% / 7.7\% & 7.88 / 1.10     \\
                   &                    & 100        
				   							& 2 & 16.9
				   								& 0.275 / 0.288 & 100\% / 100\% & 1.0 / 1.0
												& 0.239 / 0.246 & 100\% / 100\% & 1.0 / 1.0
												& 0.476 / 0.330 & 10.0\% / 10.0\% & 7.76 / 1.0    \\ \hline
\multirow{4}{*}{\rotatebox[origin=c]{90}{\textsc{Kitchen}} \rotatebox[origin=c]{90}{(150)}} & \multirow{4}{*}{3} 
										& 25         
											& 2 & 2.5
												& 0.094 / 0.097 & 11.1\% / 11.1\% & 0.22 / 0.22
												& 0.081 / 0.083 & 88.8\% / 88.8\% & 2.55 / 2.55 
												& 0.139 / 0.098 & 71.1\% / 62.2\% & 1.57 / 1.46     \\
                   &                    & 50         
				   							& 2 & 4.8
				   								& 0.095 / 0.099 & 28.8\% / 31.1\% & 0.64 / 0.66
												& 0.084 / 0.088 & 64.4\% / 66.6\% & 1.71 / 1.73 
												& 0.135 / 0.102 & 57.7\% / 51.1\% & 1.42 / 1.20     \\
                   &                    & 75         
				   							& 2 & 7.3
				   								& 0.097 / 0.101 & 24.4\% / 24.4\% & 0.66 / 0.66
												& 0.090 / 0.092 & 57.7\% / 57.7\% & 1.66 / 1.66
												& 0.138 / 0.103 & 57.7\% / 48.8\% & 1.31 / 1.24     \\
                   &                    & 100        
				   							& 2 & 11
				   								& 0.104 / 0.105 & 60.0\% / 60.0\% & 0.93 / 0.93
												& 0.093 / 0.095 & 66.6\% / 66.6\% & 1.13 / 1.13
												& 0.144 / 0.109 & 60.0\% / 66.6\% & 1.46 / 1.13    \\ \hline
\end{tabular}
\caption{Experiments and evaluation with missing, noisy, and full observations for $\mathit{h_{gc}}$, $\mathit{h_{uniq}}$, R\&G 2009, and our filtering method (10\% of threshold) with R\&G 2009 (Part 1).}
\label{tab:goalRecognitionResultsWithNoisy1}
\end{table*}
\end{landscape}
}

\afterpage{
\begin{landscape}
\begin{table*}[]
\centering
\fontsize{7}{10}\selectfont
\setlength\tabcolsep{4pt}
\begin{tabular}{|c|c|ccc|ccc|ccc|ccc|}
\hline
                   &                    
				   & \multicolumn{3}{c|}{} & \multicolumn{3}{c|}{\begin{tabular}[c]{@{}c@{}}\scriptsize R\&G 2010 / Filter$_{10\%}$ + R\&G 2010 \\ \scriptsize (\textsc{Fast-Downward} with LM-Cut heuristic)\end{tabular}}

				   & \multicolumn{3}{c|}{\begin{tabular}[c]{@{}c@{}}\scriptsize FGR 2015\end{tabular}}
				   
				   & \multicolumn{3}{c|}{\begin{tabular}[c]{@{}c@{}}\scriptsize IBM 2016 / Filter$_{10\%}$ + IBM 2016 \\ \scriptsize (TK$^*$ with LM-Cut heuristic, top-1000)\end{tabular}}    \\ \hline
\#                 & $|\mathcal{G}|$                  & \% \textbf{Obs}   &  $|N|$   & $|O + N|$        

& \textbf{\begin{tabular}[c]{@{}c@{}}Time\\$\theta$ (0 / 10)\end{tabular}}                  
& \textbf{\begin{tabular}[c]{@{}c@{}}Accuracy\\$\theta$ (0 / 10)\end{tabular}}             
& \textbf{\begin{tabular}[c]{@{}c@{}}Spread in $\mathcal{G}$\\$\theta$ (0 / 10)\end{tabular}}  

& \textbf{Time}          
& \textbf{Accuracy}      
& \textbf{Spread in $\mathcal{G}$ }

& \textbf{Time}          
& \textbf{Accuracy}      
& \textbf{Spread in $\mathcal{G}$ }\\ \hline
\multirow{4}{*}{\rotatebox[origin=c]{90}{\textsc{Campus}} \rotatebox[origin=c]{90}{(516)}} & \multirow{4}{*}{2}
										& 25         
											& 2 & 3.1
												& 1.958 / 1.862 & 88.3\% / 88.3\% & 1.24 / 1.24

												& 0.713 & 79.8\% & 1.33
											
												& \timeout / \timeout & - / - & - / -     \\
                   &                    & 50         
				   							& 2 & 4.5
				   								& 2.255 / 2.220 & 94.5\% / 94.5\% & 1.13 / 1.13

												& 0.666 & 90.6\% & 1.67
											
												& \timeout / \timeout & - / - & - / -     \\
                   &                    & 75         
				   							& 2 & 6.4
				   								& 2.784 / 2.731 & 99.2\% / 99.2\% & 1.11 / 1.11

												& 0.655 & 94.6\% & 1.79
											
												& \timeout / \timeout & - / - & - / -     \\
                   &                    & 100        
				   							& 2 & 7.5
				   								& 2.808 / 2.790 & 99.2\% / 99.2\%  & 1.10 / 1.10

												& 0.644 & 97.7\% & 1.81
												
												& \timeout / \timeout & - / - & - / -     \\ \hline
\multirow{4}{*}{\rotatebox[origin=c]{90}{\textsc{Intrusion}} \rotatebox[origin=c]{90}{(300)}} & \multirow{4}{*}{16.6} 
										& 25         
											& 2 & 3.6
												& 6.216 / 2.750 & 35.5\% / 38.8\% & 0.78 / 0.81
												
												& 0.494 & 43.3\% & 2.31
												
												& \timeout / \timeout & - / - & - / -     \\
                   &                    & 50         
				   							& 2 & 6.7
				   								& 6.792 / 1.881 & 74.4\% / 78.8\% & 1.10 / 0.93
												
												& 0.511 & 81.1\% & 1.78											
												
												& \timeout / \timeout & - / - & - / -     \\
                   &                    & 75         
				   							& 2 & 10.2
				   								& 8.081 / 1.686 & 91.1\% / 93.3\% & 0.94 / 0.94
												
												& 0.654 & 93.3\% & 1.10
												
												& \timeout / \timeout & - / - & - / -     \\
                   &                    & 100        
				   							& 2 & 15.1
				   								& 8.753 / 1.670 & 100\% / 100\% & 1.0 / 1.10
												
												& 0.885 & 100.0\% & 1.06
												
												& \timeout / \timeout & - / - & - / -     \\ \hline
\multirow{4}{*}{\rotatebox[origin=c]{90}{\textsc{IPC-Grid}} \rotatebox[origin=c]{90}{(300)}} & \multirow{4}{*}{8.3} 
										& 25         
											& 2 & 4.1
												& 36.275 / 20.612 & 8.8\% / 10.0\% & 0.91 / 1.0
												
												& \timeout & -  & -												
												
												& \timeout / \timeout & - / - & - / -     \\
                   &                    & 50         
				   							& 2 & 7.6
				   								& 16.310 / 4.304 & 3.3\% / 3.3\% & 0.98 / 0.98
												
												& \timeout & -  & -
												
												& \timeout / \timeout & - / - & - / -     \\
                   &                    & 75         
				   							& 2 & 11.5
				   								& 33.358 / 2.737 & 7.7\% / 7.7\% & 1.03 / 0.92
												
												& \timeout & -  & -												
												
												& \timeout / \timeout & - / - & - / -     \\
                   &                    & 100        
				   							& 2 & 16.9
				   								& 35.850 / 4.328 & 10.0\% / 10.0\% & 1.0 / 1.0
												
												& \timeout & -  & -
												
												& \timeout / \timeout & - / - & - / -     \\ \hline
\multirow{4}{*}{\rotatebox[origin=c]{90}{\textsc{Kitchen}} \rotatebox[origin=c]{90}{(150)}} & \multirow{4}{*}{3} 
										& 25         
											& 2 & 2.5
												& 3.038 / 2.343 & 53.3\% / 57.4\% & 1.35 / 0.70
												
												& 0.381 & 53.3\% & 1.33
												
												& \timeout / \timeout & - / - & - / -     \\
                   &                    & 50         
				   							& 2 & 4.8
				   								& 13.291 / 5.009 & 48.8\% / 44.4\% & 1.17 / 0.48

												& 0.410 & 51.1\% & 1.22

												& \timeout / \timeout & - / - & - / -     \\
                   &                    & 75         
				   							& 2 & 7.3
				   								& 6.467 / 2.756 & 51.1\% / 44.4\% & 1.22 / 0.53
												
												& 0.426 & 53.3\% & 1.20
												
												& \timeout / \timeout & - / - & - / -     \\
                   &                    & 100        
				   							& 2 & 11
				   								& 5.289 / 1.818 & 73.3\% / 66.6\% & 1.4 / 0.66
												
												& 0.538 & 73.3\% & 1.26
												
												& \timeout / \timeout & - / - & - / -     \\ \hline
\end{tabular}
\caption{Experiments and evaluation with missing, noisy, and full observations for R\&G 2010 using \textsc{Fast-Downward} with LM-Cut heuristic, FGR 2015~\cite{NASA_GoalRecognition_IJCAI2015}, and IBM 2016 using TK$^*$ with LM-Cut heuristic, top-1000 (Part 2).}
\label{tab:goalRecognitionResultsWithNoisy2}
\end{table*}
\end{landscape}
}

\newpage
\section{Related Work}\label{section:RelatedWork}

In this section, we compare our work to some of the most relevant recent work on goal and plan recognition in recent past years. 
We highlight differences and similarities between our goal recognition approaches and the surveyed related work. 

\citeauthor{HongGoalRecognition_2001}~(\citeyear{HongGoalRecognition_2001}) developed one of the first goal recognition approach that extends the concept of planning graph (which they call a goal graph), developing a similar structure that represents every possible path (\exemp, state transitions that connect facts and actions) from an initial state to a goal state. 
Pattison and Long~(\citeyear{PattisonGoalRecognition_2010}) propose AUTOGRAPH (AUTOmatic Goal Recognition with A Planning Heuristic), a probabilistic heuristic-based goal recognition over planning domains. 
AUTOGRAPH uses heuristic estimation and domain analysis to determine which goals an agent is pursuing. 
Ram\'{i}rez and Geffner~(\citeyear{RamirezG_IJCAI2009}) developed planning approaches for plan recognition, and instead of using plan libraries, they model the problem as a planning domain theory with respect to a known set of goals. 
Their work uses a heuristic, an optimal and modified sub-optimal planner to determine the distance to every goal in a set of goals after an observation. 
We compare their most accurate approach directly with ours. 
Follow-up work~(\citeyear{RamirezG_AAAI2010}) extended the idea of plan recognition as planning into a probabilistic approach using off-the-shelf planners that provide a posterior probability distribution over goals, given an observation sequence as evidence. 
E-Mart\'{i}n~\etal~(\citeyear{NASA_GoalRecognition_IJCAI2015}) developed a planning-based goal recognition approach that propagates cost and interaction information in a planning graph, and uses this information to estimate goal probabilities over the set of candidate goals. 
Sohrabi~\etal~(\citeyear{Sohrabi_IJCAI2016}) developed a probabilistic plan recognition approach that deals with unreliable observations (\idest, noisy or missing observations), and recognizes both goals and plans. 
Unlike these last three approaches, which provide a probabilistic interpretation of the recognition problem, we do not deal with probabilities. 
Nevertheless, our heuristic computation is a good proxy for the posterior probability distribution of the goals, given the observations, and thus could be extended to provide a probabilistic interpretation as we intend to do in future work. 
In~\cite{Mor_ACS_16}, \citeauthor{Mor_ACS_16} introduce the concept of mirroring to develop an online goal recognition approach for continuous domains. 
\citeauthor{Masters_IJCAI2017}~(\citeyear{Masters_IJCAI2017}) propose a fast and accurate goal recognition approach for path-planning, providing a new probabilistic framework for goal recognition. In \cite{MOR_IJCAI2017}, Vered and Kaminka develop a heuristic approach for online goal recognition that deals with continuous domains.
\citeauthor{MorEtAl_AAMAS18}~(\citeyear{MorEtAl_AAMAS18}) propose an online goal recognition approach that combines the use of landmarks and goal mirroring, showing this combination can improve not only the recognition time, but also the accuracy for recognizing goals in the online fashion.
Most recently, \citeauthor{GalMor_AAAI2018}~(\citeyear{GalMor_AAAI2018}) develop a novel plan recognition approach that deals with both continuous and discrete domains.

Secondly, there has been substantial recent work on goal and plan recognition \textit{design}, that is optimize the domain design so that goal and plan recognition algorithms can provide inferences with as few observations as possible.  Keren~\etal~(\citeyear{GoalRecognitionDesign_Keren2014,GoalRecognitionDesign_Keren2015,GoalRecognitionDesign_Keren2016}) develop an alternate view of the goal recognition problem, and rather than developing new goal recognition algorithms, they develop a novel approach that modifies the domain description in order to facilitate the goal recognition process. 
Their work could potentially be used alongside our techniques, and the relation between worst case distinctiveness (their measure of how difficult can it be to disambiguate goals) and the information gain from unique landmarks would provide an interesting avenue for further investigation. 

Finally, \cite{freedman2018towards} recently proposed an approach to perform probabilistic plan recognition along the lines of \cite{RamirezG_AAAI2010}, that, instead of running a full-fledged planner for each goal, takes advantage of multiple goal heuristic search~\cite{davidov2006multiple} to search for for all goals simultaneously and avoid repeatedly expanding the same nodes. 
Their approach has not been implemented and evaluated yet and it aims to overcome the limitation of our technique to only be able to account for progress towards goals when we have evidence of landmarks being achieved, while retaining the speed gains we achieve. 
While we do not have empirical evidence about its accuracy and efficiency, we believe this is an exciting direction for goal recognition, and we expect it to approach and overcome the accuracy of \cite{RamirezG_AAAI2010}.


\newpage
\section{Conclusions}\label{section:Conclusions}

We have developed novel goal recognition approaches based on planning techniques that rely on landmarks. 
Landmarks provide key information about what cannot be avoided to achieve a goal, and we have shown that they can be used efficiently, with simple heuristics, to recognize goals from missing and noisy observations. 
Our goal completion heuristic $\mathit{h_{gc}}$ computes the ratio between achieved landmarks and the total number of landmarks for a particular goal, whereas our uniqueness heuristic $\mathit{h_{uniq}}$, uses a \textit{landmark uniqueness value} to represent how informative a landmark is among the known landmarks for all candidate goals. 
These landmark-based heuristics show that it is possible to recognize goals quickly with high accuracy as well as to use them as a filtering  mechanism to refine existing planning-based goal and plan recognition approaches~\cite{RamirezG_IJCAI2009,RamirezG_AAAI2010,NASA_GoalRecognition_IJCAI2015,Sohrabi_IJCAI2016}, such that they can also be made substantially more efficient.

We have proved that our heuristic approaches are sound both as a filtering mechanism and as a goal recognition algorithm on its own, thus showing that, under certain conditions, we are guaranteed to find the correct hidden goal. 
Our experiments show that our goal recognition approaches yield not only superior accuracy results but also substantially faster recognition time for all fifteen planning domains used in evaluating against the state-of-the-art~\cite{RamirezG_IJCAI2009,RamirezG_AAAI2010,NASA_GoalRecognition_IJCAI2015,Sohrabi_IJCAI2016} at varying observation completeness levels, for both missing and noisy observations. 
The main limitation of our approaches lie in conditions of very low observability of 30\% or less. 
Specifically, for problems with very short plans, and thus, where the number of actually observed action consists of one or two actions, the odds of observing one of the problem's landmarks are very low, jeopardizing recognition accuracy. 
Under these conditions, our filtering mechanism still provides a major improvement on the runtime (and often accuracy) of existing goal recognition approaches. 

As future work, we intend to explore multiple avenues to improve our goal recognition approaches. 
First, we aim to use other planning techniques, such as heuristics and symmetries in classical planning~\cite{Heuristics2015}, and traps, invariants, and dead-ends~\cite{Lipovetzky_ICAPS16}. 
Second, we intend to explore other landmark extraction algorithms to obtain additional information from planning domains~\cite{ICAPS03_DC_ZhGi,KeyderRH_ECAI10}. 
Third, we aim to evaluate our landmark-based heuristics for online goal and plan recognition, and we have started work in that direction~\cite{MorEtAl_AAMAS18}.

\newpage
\section*{Acknowledgements}
This article is a revised and extended version of two papers published at AAAI 2017~\cite{RamonNirMeneguzzi_AAAI2017} and ECAI 2016~\cite{PereiraMeneguzzi_ECAI2016}, we are thankful to the anonymous reviewers that helped improve the research in this article. 
The authors thank Shirin Sohrabi for discussing the way in which the algorithms of \cite{Sohrabi_IJCAI2016} should be configured, as well as Yolanda Escudero-Mart{\'{\i}}n for providing the original code to the approach of \cite{NASA_GoalRecognition_IJCAI2015} and engaging with us on how to run it.  
We thank Miquel Ram{\'{\i}}rez for various discussions around the contributions of this paper and Andr{\'{e}} Grahl Pereira for a discussion of properties of our algorithm. 
Finally, Felipe thanks CNPq for partial financial support under its PQ fellowship, grant number 305969/2016-1.

\clearpage
\bibliography{aij-landmark-recognition}
\bibliographystyle{theapa}
\end{document}